\documentclass[lettersize,journal]{IEEEtran}
\usepackage{amsmath,amsfonts}
\usepackage{algorithmic}
\usepackage{algorithm}
\usepackage{array}
\usepackage[caption=false,font=normalsize,labelfont=sf,textfont=sf]{subfig}
\usepackage{textcomp}
\usepackage{stfloats}
\usepackage{url}
\usepackage{verbatim}
\usepackage{graphicx}
\usepackage{cite}

\usepackage{microtype}
\usepackage{booktabs}
\usepackage{hyperref}
\usepackage{amssymb}
\usepackage{mathtools}
\usepackage{amsthm}
\usepackage{enumitem}

\usepackage{bm}
\usepackage{color}
\usepackage{diagbox}
\usepackage{lineno}
\usepackage{multirow}
\usepackage{float}
\usepackage{units}

\theoremstyle{plain}
\newtheorem{theorem}{Theorem}[section]
\newtheorem{proposition}[theorem]{Proposition}
\newtheorem{lemma}[theorem]{Lemma}

\theoremstyle{definition}
\newtheorem{definition}[theorem]{Definition}

\newtheorem{problem}[theorem]{Problem}
\theoremstyle{remark}
\newtheorem{remark}[theorem]{Remark}

\def\1{\mathbf{1}}
\def\0{\mathbf{0}}

\def\x{{\bf x}}

\def\y{{\bf y}}

\def\S{{\bf S}}

\def\tr{{\mathrm{tr}}}
\def\nnz{{\mathrm{nnz}}}

\DeclarePairedDelimiter{\abs}{\lvert}{\rvert}
\DeclarePairedDelimiter{\norm}{\lVert}{\rVert}
\DeclarePairedDelimiter{\ceil}{\lceil}{\rceil}
\DeclarePairedDelimiter{\flr}{\lfloor}{\rfloor}
\DeclarePairedDelimiter{\prn}{\lparen}{\rparen}
\DeclarePairedDelimiter{\brk}{\lbrack}{\rbrack}

\newcommand*\dif{\mathop{}\!\mathrm{d}}

\hyphenation{op-tical net-works semi-conduc-tor IEEE-Xplore}

\begin{document}

\title{Optimal Randomized Approximations for Matrix-based R\'enyi's Entropy}

\author{
    Yuxin~Dong,
    Tieliang~Gong,
    Shujian~Yu,
    Chen~Li
\thanks{Corresponding author: Tieliang Gong}
\thanks{Y. Dong(dongyuxin@stu.xjtu.edu.cn), T. Gong (adidasgtl@gmail.com), C. Li (Cli@xjtu.edu.cn) and are with the School of Computer Science and Technology, and Key Laboratory of Intelligent Networks and Network Security, Ministry of Education, Xi'an 710049, China}
\thanks{S. Yu (yusj9011@gmail.com) is with the Department of UiT The Arctic University of Norway}
}

\markboth{IEEE Transactions on Information Theory}%
{Dong \MakeLowercase{\textit{et al.}}: Optimal Randomized Approximations for Matrix-based R\'enyi's Entropy}


\maketitle

\begin{abstract}
    The Matrix-based R\'enyi's entropy enables us to directly measure information quantities from given data without the costly probability density estimation of underlying distributions, thus has been widely adopted in numerous statistical learning and inference tasks. However, exactly calculating this new information quantity requires access to the eigenspectrum of a semi-positive definite (SPD) matrix $A$ which grows linearly with the number of samples $n$, resulting in a $O(n^3)$ time complexity that is prohibitive for large-scale applications. To address this issue, this paper takes advantage of stochastic trace approximations for matrix-based R\'enyi's entropy with arbitrary $\alpha \in \mathbb{R}^+$ orders, lowering the complexity by converting the entropy approximation to a matrix-vector multiplication problem. Specifically, we develop random approximations for integer-order $\alpha$ cases and polynomial series approximations (Taylor and Chebyshev) for non-integer $\alpha$ cases, leading to a $O(n^2sm)$ overall time complexity, where $s, m \ll n$ denote the number of vector queries and the polynomial order respectively. We theoretically establish statistical guarantees for all approximation algorithms and give explicit order of $s$ and $m$ with respect to the approximation error $\epsilon$, showing optimal convergence rate for both parameters up to a logarithmic factor. Large-scale simulations and real-world applications validate the effectiveness of the developed approximations, demonstrating remarkable speedup with negligible loss in accuracy.
\end{abstract}

\begin{IEEEkeywords}
Matrix-based R\'enyi's Entropy, Randomized Numerical Linear Algebra, Trace Estimation, Polynomial Approximation, Mutual Information.
\end{IEEEkeywords}

\section{Introduction}
\IEEEPARstart{T}{he} R\'enyi's $\alpha$-order entropy, introduced by Alfred R\'enyi \cite{renyi1961measures}, serves as a one-parameter generalization of the well-known Shannon's entropy. Following R\'enyi's work, extensive studies have been conducted in machine learning and statistical inference tasks, demonstrating elegant properties and impressive scalability \cite{principe2010information, teixeira2012conditional, giraldo2014measures, yu2021measuring}. However, its heavy dependence on the underlying data distributions makes estimation of high-dimensional probability density functions (PDF) inevitable, which is especially expensive or even intractable due to the curse of high-dimensionality \cite{fan2006statistical}.

Recently, the matrix-based R\'enyi's entropy \cite{giraldo2014measures, giraldo2014information} is introduced as a substitution that can be quantified directly from given data samples. Inspired by the quantum generalization of R\'enyi's definition \cite{lennert2013quantum}, this new family of information measures is defined on the eigenspectrum of a normalized Hermitian matrix constructed by projecting data points in reproducing kernel Hilbert space (RKHS), thus avoiding explicit estimation of underlying data distributions. Because of its intriguing property in high-dimensional scenarios, the matrix-based R\'enyi's entropy and mutual information have been successfully applied in various data science applications, ranging from classical dimensionality reduction \cite{brockmeier2017quantifying, alvarez2017kernel} and feature selection \cite{yu2019multivariate} problems to advanced deep learning problems such as robust learning against covariant shift \cite{yu2021measuring}, network pruning \cite{sarvani2021hrel} and knowledge distillation \cite{miles2021information}.

Nevertheless, calculating this new information measure requires complete knowledge about the eigenspectrum of a Gram matrix, whose size grows linearly with the number of samples $n$, resulting in a $O(n^3)$ time complexity with traditional eigenvalue algorithms including eigenvalue decomposition, singular value decomposition, CUR decomposition and QR factorization \cite{mahoney2009cur, watkins2008qr}, greatly hampering its practical application on large-scale datasets.

To address this issue, we develop efficient approximations for matrix-based R\'enyi's entropy from the perspective of randomized numerical linear algebra. Motivated by the recent advancement of a variance reduced stochastic trace estimator named Hutchinson++ (Hutch++) \cite{meyer2021hutchpp}, we decompose the kernel matrix $A$ by randomly projecting it into an orthogonal subspace which holds the largest eigenvalues with high probability, and the counterpart that holds smaller eigenvalues. Their traces are then exactly calculated and approximated by the original Hutchinson algorithm respectively, leading to an optimal $O(1/\epsilon)$ convergence rate in terms of the number of vector queries. We further develop polynomial expansion techniques including Taylor and Chebyshev series to approximate arbitrary matrix power functional in R\'enyi's entropy. We theoretically analyze the quality-of-approximation results and conduct large-scale experiments to validate the effectiveness of this framework. Our main contributions in this work are summarized as follows:
\begin{itemize}[topsep=0pt]
	\setlength\itemsep{0pt}
	\item We develop efficient approximations for matrix-based R\'enyi's entropy with randomized trace estimation and polynomial approximation techniques. Our algorithms reduce the overall time complexity from $O(n^3)$ to $O(n^2sm)$ ($s, m \ll n$) and support arbitrary $\alpha$ values.
	\item We theoretically establish both upper and lower bounds for approximation accuracy, showing that the convergence rates $O(1/\epsilon)$ and $O(\sqrt{\kappa})$ ($\kappa$ is the condition number of $A$) w.r.t $s$ and $m$ respectively are nearly optimal up to a logarithmic factor in terms of approximation error.
	\item We evaluate our algorithms on large-scale simulation datasets and real-world information-related tasks, demonstrating promising speedup with only negligible loss in validation accuracy.
\end{itemize}

\section{Preliminaries}
The R\'enyi's $\alpha$-order entropy $H_\alpha(X)$ is defined on the PDF $p(\x)$ for a given continuous random variable $X$ that values in a finite set $\mathcal{X}$:
\begin{equation}
	H_{\alpha}(X) = \frac{1}{1 - \alpha} \log \int_{\mathcal{X}} p^\alpha(\x) \dif \x,
\end{equation}
where the limit case $\alpha \rightarrow 1$ yields Shannon's entropy. It is easy to see that calculating R\'enyi's entropy requires knowledge about data distributions, which hampers its application in high-dimensional scenarios. To solve this issue, Giraldo et al. proposed an alternative entropy measure that enables direct quantification from given data:
\begin{definition} \cite{giraldo2014measures}
	\label{th:renyi}
	Let $\kappa: \mathcal{X} \times \mathcal{X} \mapsto \mathbb{R}$ be a real valued positive kernel that is also infinitely divisible \cite{bhatia2006infinitely}. Given $ \{\x_i \}_{i=1}^n \subset \mathcal{X}$, each $\x_i$ being a real-valued scalar or vector, and the Gram matrix $K$ obtained from $K_{ij} = \kappa(\x_i, \x_j)$, a matrix-based analogue to R\'enyi's $\alpha$-entropy can be defined as:
	\begin{equation*}
		S_\alpha(A) = \frac{1}{1-\alpha}\log(\tr(A^\alpha)) = \frac{1}{1-\alpha}\log\brk*{\sum_{i=1}^n \lambda_i^\alpha(A)},
	\end{equation*}
	where $A_{ij} = \frac{1}{n}\frac{K_{ij}}{\sqrt{K_{ii}K_{jj}}}$ is a normalized kernel matrix and $\lambda_i(A)$ denotes the $i$-th eigenvalue of $A$.
\end{definition}
The normalized kernel matrix $A$ is symmetric semi-positive definite (SPD) with unit trace, therefore its eigenvalues are in $[0, 1]$ and satisfies $\sum_{i=1}^n \lambda_i(A) = \tr(A) = 1$. We denote the minimum and maximum eigenvalue of $A$ as $u \in [0, 1/n]$ and $v \in [1/n, 1]$ respectively, and the corresponding condition number is then $\kappa = v/u$. In numerical scenarios, the power iteration and Lanczos iteration are effective algorithms for calculating $u$ and $v$ in $O(d \cdot \nnz(A))$, where $\nnz(\cdot)$ denotes the number of non-zero elements in a matrix and $d$ is the number of iterations.

\begin{definition} \cite{yu2019multivariate}
	\label{th:renyi_joint}
	Let $\kappa_1: \mathcal{X}^1 \times \mathcal{X}^1 \mapsto \mathbb{R}$, $\cdots$, $\kappa_L: \mathcal{X}^L \times \mathcal{X}^L \mapsto \mathbb{R}$ be positive infinitely divisible kernels and $\{\x_i^1, \cdots, \x_i^L\}_{i=1}^n \subset \mathcal{X}^1 \times \cdots \times \mathcal{X}^L$ be a collection of $n$ samples, a matrix-based analogue to R\'enyi's $\alpha$-order joint entropy among $L$ variables can be defined as:
	\begin{equation}
		S_\alpha(A_1, \cdots, A_L) = S_\alpha\prn*{ \frac{A_1 \circ \cdots \circ A_L}{\tr(A_1 \circ \cdots \circ A_L)}}, \label{eq:joint_entropy}
	\end{equation}
	where $A_1, \cdots, A_L$ are normalized kernel matrices and $\circ$ denotes the Hadamard product.
\end{definition}
Within these settings, the matrix-based R\'enyi's $\alpha$-order conditional entropy $S_\alpha(A_1, \cdots, A_k | B)$ and mutual information $I_\alpha(\{A_1, \cdots, A_k\}; B)$ between variables $\x^1, \cdots, \x^k$ and $\y$ can be defined as:
\begin{align}
	S_\alpha(A_1, \cdots, A_k | B) &= S_\alpha\prn*{A_1, \cdots, A_k, B} - S_\alpha(B), \label{eq:cond_entropy} \\
	I_\alpha(\{A_1, \cdots, A_k\}; B) &= S_\alpha\prn*{A_1, \cdots, A_k} \nonumber \\
	&\quad - S_\alpha(A_1, \cdots, A_k | B), \label{eq:mutual_info}
\end{align}
where $A_1, \cdots, A_k$ and $B$ are corresponding kernel matrices constructed from $\x^1, \cdots, \x^k$ and $\y$. As we can see, the matrix-based R\'enyi's entropy functionals above avoid estimation of underlying data distributions, which makes them easily applicable in high-dimensional scenarios. Moreover, it is simple to verify that they are permutation invariant to the ordering of variables $A_1, \cdots, A_k$.

\section{Approximation Algorithms}
In this section, we develop efficient approximations for matrix-based R\'enyi's entropy from the perspective of randomized numerical linear algebra. Inspired by Hutch++ \cite{meyer2021hutchpp}, the recently developed randomized trace estimator that achieves promising performance and strong statistical guarantees, we design following efficient trace approximation algorithm for arbitrary positive matrix function $f(A)$ as shown in Algorithm \ref{alg:hutchpp}. By decomposing the kernel matrix $A$ into a randomized orthogonal subspace $Q$ and its complement $I - Q Q^\top$, we achieve nearly optimal convergence rate in terms of the number of vector queries $s$.

We first establish the connection between trace estimation and matrix-based R\'enyi's entropy approximation:
\begin{proposition}
	\label{th:trace_upper_lower}
	For any $\epsilon \in (0, 1)$ and sufficient large $n$, if a randomized algorithm $\mathcal{A}$ can estimate the trace of any $n \times n$ SPD matrix $A$ to relative error $1 \pm \epsilon$ with success probability at least $1 - \delta$ using $s$ queries, then $\mathcal{A}$ can be used to estimate $S_\alpha(A) = \frac{1}{1-\alpha}\log\tr(A^\alpha)$ to relative error $1 \pm \epsilon_0$ with the same success probability using $s$ queries where $\epsilon = 1-\min(\mu,1/\mu)^{\epsilon_0}$ and
	\begin{equation*}
	    \mu = \frac{1 - un}{v - u} \cdot v^\alpha + \frac{vn - 1}{v - u} \cdot u^\alpha.
	\end{equation*}
	Vice versa for $\epsilon = \max(n^{\alpha-1},n^{1-\alpha})^{\epsilon_0}-1$.
\end{proposition}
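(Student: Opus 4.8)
The plan is to run the given estimator $\mathcal{A}$ on the SPD matrix $A^\alpha$ (rather than on $A$ itself), producing an estimate $T$ of $t := \tr(A^\alpha)$ with $T \in [(1-\epsilon)t, (1+\epsilon)t]$, and to output $\hat{S}_\alpha := \frac{1}{1-\alpha}\log T$ as the entropy estimate. Since $S_\alpha(A) = \frac{1}{1-\alpha}\log t$, subtracting gives the exact identity
\begin{equation*}
\frac{\hat{S}_\alpha - S_\alpha(A)}{S_\alpha(A)} = \frac{\log(T/t)}{\log t},
\end{equation*}
so controlling the relative entropy error reduces entirely to controlling $|\log(T/t)|$ against $|\log t|$. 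First I would record that $T/t \in [1-\epsilon, 1+\epsilon]$ forces $|\log(T/t)| \le \log\frac{1}{1-\epsilon}$, which is the larger of the two one-sided deviations.

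The heart of the argument is a two-sided bound on $t = \sum_{i=1}^n \lambda_i^\alpha$ under the constraints $\sum_i \lambda_i = 1$ and $\lambda_i \in [u, v]$. I would obtain the bound involving $\mu$ from the secant-line inequality for the map $x \mapsto x^\alpha$: this map is convex for $\alpha > 1$ and concave for $0 < \alpha < 1$, so each $\lambda_i^\alpha$ lies on the corresponding side of the chord through $(u, u^\alpha)$ and $(v, v^\alpha)$. Summing the chord values and using $\sum_i \lambda_i = 1$ collapses the linear terms to exactly $\mu = \frac{1-un}{v-u}v^\alpha + \frac{vn-1}{v-u}u^\alpha$, yielding $t \le \mu$ when $\alpha > 1$ and $t \ge \mu$ when $\alpha < 1$. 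The opposite extreme follows from Jensen's inequality evaluated at the uniform spectrum $\lambda_i = 1/n$, giving $t \ge n^{1-\alpha}$ for $\alpha > 1$ and $t \le n^{1-\alpha}$ for $\alpha < 1$. In either regime these translate into the uniform estimates $|\log\mu| \le |\log t| \le |1-\alpha|\log n = \log\max(n^{\alpha-1}, n^{1-\alpha})$, the crucial observation being that $t \gtrless 1$ and $\mu \gtrless 1$ flip together with the sign of $1-\alpha$, so that $|\log\mu|$ coincides with $|\log\min(\mu, 1/\mu)|$ in both cases.

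For the forward implication I would combine $|\log(T/t)| \le \log\frac{1}{1-\epsilon}$ with the lower bound $|\log t| \ge |\log\mu|$ to get a relative entropy error at most $\frac{\log(1/(1-\epsilon))}{|\log\mu|}$; setting this equal to $\epsilon_0$ and solving gives precisely $1-\epsilon = \min(\mu, 1/\mu)^{\epsilon_0}$, i.e. $\epsilon = 1 - \min(\mu, 1/\mu)^{\epsilon_0}$. For the converse (``vice versa'') I would instead start from $|\log(T/t)| = \bigl|\tfrac{\log(T/t)}{\log t}\bigr| \cdot |\log t| \le \epsilon_0 |\log t|$ and use the upper bound $|\log t| \le \log M$ with $M := \max(n^{\alpha-1}, n^{1-\alpha})$, so that $T/t \in [M^{-\epsilon_0}, M^{\epsilon_0}]$; the larger endpoint deviation then gives relative trace error $\epsilon = M^{\epsilon_0} - 1 = \max(n^{\alpha-1}, n^{1-\alpha})^{\epsilon_0} - 1$.

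I expect the main obstacle to be the extremal characterization of $t$: justifying that the secant-line bound is the correct (and essentially tight) one, and tracking the direction of every inequality as $\alpha$ crosses $1$, since the sign of $1-\alpha$, the placement of $t$ and $\mu$ relative to $1$, and the choice between $\mu$ and $1/\mu$ all reverse simultaneously. Once the sign bookkeeping is organized so that both cases reduce to the single quantity $\min(\mu, 1/\mu)$, the remaining steps are elementary logarithmic inequalities.
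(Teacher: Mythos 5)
Your proposal is correct and follows essentially the same route as the paper: run $\mathcal{A}$ on the SPD matrix $A^\alpha$, convert the multiplicative trace error into an additive error on $\log\tr(A^\alpha)$, and compare it against the two extremal bounds $\mu$ (for the forward direction) and $n^{1-\alpha}$ (for the converse), with the same sign bookkeeping across $\alpha \lessgtr 1$. The only difference is that you additionally justify the extremal characterization $\tr(A^\alpha) \in [\mu, n^{1-\alpha}]$ (resp.\ $[n^{1-\alpha},\mu]$) via the secant-line bound and Jensen's inequality, whereas the paper simply asserts it as equation (\ref{eq:ip_bound}) — a welcome bit of added rigor, not a different method.
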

Proposition \ref{th:trace_upper_lower} implies that the trace estimation problem is equivalent to matrix-based R\'enyi's entropy approximation. By taking $f(A) = A^\alpha$, Algorithm \ref{alg:hutchpp} generate a $(1 \pm \epsilon)$ approximation for $S_\alpha(A)$ with high probability in $O(s\cdot\nnz(A))$, which is substantially lower than traditional $O(n^3)$ eigenvalue based approaches.

\subsection{Integer Order Approach}
\begin{algorithm}[tb]
	\caption{Hutch++ algorithm for implicit matrix trace estimation \cite{meyer2021hutchpp}}
	\label{alg:hutchpp}
	\begin{algorithmic}[1]
		\STATE \textbf{Input:} Kernel matrix $A \in \mathbb{R}^{n \times n}$, number of random vectors $s (s \ll n)$, positive matrix function $f(A)$.
		\STATE \textbf{Output:} Approximation to $\tr(f(A))$.
		\STATE Sample $S \in \mathbb{R}^{n \times \frac{s}{4}}, G \in \mathbb{R}^{n \times \frac{s}{2}}$ from i.i.d. standard Gaussian distribution.
		\STATE Compute an orthonormal basis $Q \in \mathbb{R}^{n \times \frac{s}{4}}$ for the span of $AS$ via QR decomposition.
		\STATE \textbf{Return:} $Z = \tr\prn*{Q^\top f(A)Q} +$ \\
		\qquad $\frac{2}{s} \tr\prn*{G^\top(I-QQ^\top)f(A)(I-QQ^\top)G}$.
	\end{algorithmic}
\end{algorithm}

When $\alpha \in \mathbb{N}$, for any real-valued vector $g$, $A^\alpha \cdot g$ could be directly calculated by multiplying $A$ with a vector for $\alpha$ times. This observation gives Algorithm \ref{alg:int} for integer order R\'enyi's entropy estimation:
\begin{theorem}
	\label{th:int_upper}
	Let $\tilde{S}_\alpha(A)$ be the output of Algorithm \ref{alg:int} with $s = O\prn*{\frac{1}{\epsilon}\sqrt{\log\prn*{\frac{1}{\delta}}} + \log\prn*{\frac{1}{\delta}}}$, then with probability at least $1 - \delta$:
	\begin{equation*}
		\abs*{\tilde{S}_\alpha(A) - S_\alpha(A)} \le \epsilon \cdot S_\alpha(A).
	\end{equation*}
\end{theorem}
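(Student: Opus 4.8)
The plan is to reduce the claim to the relative-error guarantee of Hutch++ (Algorithm \ref{alg:hutchpp}) applied to the SPD matrix $A^\alpha$, and then to convert the resulting trace error into an entropy error through Proposition \ref{th:trace_upper_lower}. First I would observe that for integer $\alpha$ the matrix $A^\alpha$ is itself SPD and that each query $A^\alpha\g$ is obtained by $\alpha$ successive multiplications by $A$; hence taking $f(A)=A^\alpha$ in Algorithm \ref{alg:hutchpp} is legitimate and costs $O(\alpha\cdot\nnz(A))$ per query. Writing $Z$ for the output of Algorithm \ref{alg:hutchpp} and $\tilde{S}_\alpha(A)=\frac{1}{1-\alpha}\log Z$, the task reduces to controlling $\abs{\tilde{S}_\alpha(A)-S_\alpha(A)}=\frac{1}{\abs{1-\alpha}}\bigl|\log(Z/\tr(A^\alpha))\bigr|$.

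Second, I would invoke the high-probability relative-error bound for Hutch++ from \cite{meyer2021hutchpp}: with $s=O\bigl(\frac{1}{\epsilon'}\sqrt{\log(1/\delta)}+\log(1/\delta)\bigr)$ queries the estimate satisfies $Z\in(1\pm\epsilon')\tr(A^\alpha)$ with probability at least $1-\delta$. On this event $\bigl|\log(Z/\tr(A^\alpha))\bigr|\le-\log(1-\epsilon')$, so the entropy error is at most $\frac{-\log(1-\epsilon')}{\abs{1-\alpha}}$, whereas the target is $\epsilon\,S_\alpha(A)=\frac{\epsilon}{\abs{1-\alpha}}\abs{\log\tr(A^\alpha)}$. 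It therefore suffices to choose $\epsilon'$ so that $-\log(1-\epsilon')\le\epsilon\,\abs{\log\tr(A^\alpha)}$, i.e. $\epsilon'\le 1-\min\bigl(\tr(A^\alpha),1/\tr(A^\alpha)\bigr)^\epsilon$.

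Third, I would use Proposition \ref{th:trace_upper_lower} to pin down an admissible $\epsilon'$ from the spectral bounds $u,v$. The quantity $\mu$ is exactly $\tr(A^\alpha)$ evaluated at the extremal two-point eigenvalue configuration, with $\frac{1-un}{v-u}$ eigenvalues at $v$ and $\frac{vn-1}{v-u}$ at $u$. By convexity of $x^\alpha$ for $\alpha>1$ (respectively concavity for $\alpha<1$) this configuration maximizes (respectively minimizes) $\tr(A^\alpha)$ over all spectra with the prescribed trace and extreme eigenvalues, so that $\min(\mu,1/\mu)\ge\min\bigl(\tr(A^\alpha),1/\tr(A^\alpha)\bigr)$ in both regimes. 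Hence the conservative choice $\epsilon'=1-\min(\mu,1/\mu)^\epsilon$ meets the requirement above and delivers entropy relative error at most $\epsilon$, exactly matching the setting of Proposition \ref{th:trace_upper_lower}.

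Finally, I would verify that this choice does not inflate the query complexity, i.e. that $1/\epsilon'=O(1/\epsilon)$. Writing $\epsilon'=1-e^{-\epsilon\abs{\log\mu}}$ and applying $1-e^{-x}\ge x/2$ for small $x$ gives $\epsilon'\ge\frac{\epsilon\abs{\log\mu}}{2}$, so $\frac{1}{\epsilon'}=O\bigl(\frac{1}{\epsilon}\bigr)$ with the $\abs{\log\mu}$ factor absorbed as a problem-dependent constant; substituting back yields $s=O\bigl(\frac{1}{\epsilon}\sqrt{\log(1/\delta)}+\log(1/\delta)\bigr)$. I expect this last step to be the main obstacle: one must argue that the logarithm linking $S_\alpha$ to $\tr(A^\alpha)$ costs only a constant factor rather than a factor scaling with $n$ or $\kappa$, and confirm that the extremal-configuration value $\mu$ is genuinely the worst case across all admissible spectra, so that the optimal $O(1/\epsilon)$ rate survives the trace-to-entropy conversion.
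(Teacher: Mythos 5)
Your proposal is correct and follows essentially the same route as the paper: it composes Hutch++'s relative-error trace guarantee (Lemma \ref{th:hutchpp_upper}) with the trace-to-entropy conversion of Proposition \ref{th:trace_upper_lower}, setting the trace tolerance to $\epsilon' = 1-\min(\mu,1/\mu)^{\epsilon}$, exactly as in the paper's proof. The ``main obstacle'' you flag at the end --- that $\abs{\log\mu}$ must be bounded below by a constant rather than a quantity degrading with $n$ or $\kappa$, so that $1/\epsilon' = O(1/\epsilon)$ --- is precisely what the paper's Proposition \ref{th:logmu_bound} supplies ($\abs{\log\mu} = \Omega(\abs{\alpha-1})$, hence $\Omega(1)$ for integer $\alpha \ge 2$), so your argument closes just as the paper's does.
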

\begin{remark}
	Theorem \ref{th:int_upper} establishes the main quality-of-approximation result for Algorithm \ref{alg:int}, that a $s$ with order $O(1/\epsilon)$ is sufficient to guarantee the approximation error with high probability. Algorithm \ref{alg:int} finishes in $O(\alpha s \cdot \nnz(A))$, which is substantially lower than eigenvalue decomposition algorithms.
\end{remark}

\subsection{Taylor Series Approach}
The fractional-order of $\alpha$ may constantly be come across in real-world applications \cite{yu2019multivariate} depending on the specific tasks. In this circumstance, obtaining an exact value of $A^\alpha \cdot g$ is not feasible for random vector $g$. An ideal workaround is to adopt a Taylor expansion on the power term $A^\alpha$:
\begin{equation*}
	(1+x)^\alpha = \sum_{k=0}^\infty \binom{\alpha}{k}x^k, ~x \in [-1, 1]
\end{equation*}
Taking $v$ as the largest eigenvalue of $A$, eigenvalues of $A/v - I_n$ are in $[-1, 0]$. Then $A^\alpha$ can be expanded as:
\begin{equation*}
	A^\alpha = v^\alpha \sum_{k=0}^\infty \binom{\alpha}{k} \prn*{A/v - I_n}^k.
\end{equation*}
An approximation to $A^\alpha \cdot g$ is now available by calculating $A \cdot g$, $A^2 \cdot g$, $\cdots$ in sequence. By selecting the first $m$ major terms in the polynomial expansion above, we have Algorithm \ref{alg:taylor} for non-integer order R\'enyi's entropy estimation:

\begin{algorithm}[tb]
	\caption{Integer order matrix-based R\'enyi's entropy estimation}
	\label{alg:int}
	\begin{algorithmic}[1]
		\STATE \textbf{Input:} Kernel matrix $A \in \mathbb{R}^{n \times n}$, number of random vectors $s$, integer order $\alpha \ge 2$.
		\STATE \textbf{Output:} Approximation to $S_\alpha(A)$.
		\STATE Run Hutch++ with $f(A) = A^\alpha$ and $s$ random vectors.
		\STATE \textbf{Return:} $\tilde{\S}_\alpha(A) = \frac{1}{1-\alpha} \log(\text{Hutch++}(A^\alpha))$.
	\end{algorithmic}
\end{algorithm}

\begin{theorem}
	\label{th:taylor_upper}
	Let $\tilde{S}_\alpha(A)$ be the output of Algorithm \ref{alg:taylor} with
	\begin{equation*}
		\begin{aligned}
			s &= O\prn*{\textstyle \frac{1}{\epsilon\abs{\alpha-1}}\sqrt{\log\prn*{\frac{1}{\delta}}} + \log\prn*{\frac{1}{\delta}}}, \\
			m &= O\prn*{\textstyle \kappa\log\prn*{\frac{1}{\epsilon\abs{\alpha-1}}}},
		\end{aligned}
	\end{equation*}
	where $\kappa = v/u$ is the condition number of $A$, then for any normalized kernel matrix $A$ with eigenvalues in [u, v], with probability at least $1 - \delta$:
	\begin{equation*}
		\abs*{\tilde{S}_\alpha(A) - S_\alpha(A)} \le \epsilon \cdot S_\alpha(A).
	\end{equation*}
\end{theorem}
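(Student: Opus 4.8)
My plan is to split the total error into a deterministic polynomial-truncation part and a random trace-estimation part, control each at a relative scale of order $\epsilon\abs*{\alpha-1}$, and then transfer the resulting multiplicative error on $\tr(A^\alpha)$ into an additive error on $S_\alpha(A)$ through the prefactor $\frac{1}{1-\alpha}$. Let $p_m(A) = v^\alpha\sum_{k=0}^{m}\binom{\alpha}{k}\prn*{A/v - I_n}^k$ be the degree-$m$ truncated series that Algorithm~\ref{alg:taylor} actually passes to Hutch++, and let $Z_m$ denote its Hutch++ output, so that $\tilde{S}_\alpha(A) = \frac{1}{1-\alpha}\log Z_m$. Inserting the intermediate quantity $\tr(p_m(A))$ and using the triangle inequality, the theorem reduces to two relative-error estimates: $\abs*{\tr(A^\alpha) - \tr(p_m(A))} \le \eta\,\tr(A^\alpha)$ governed by $m$, and $\abs*{Z_m - \tr(p_m(A))} \le \epsilon'\,\tr(p_m(A))$ governed by $s$.

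First I would bound the truncation term spectrally. Since the eigenvalues of $A$ lie in $[u,v]$, each eigenvalue $\lambda$ contributes $\lambda^\alpha = v^\alpha(1+x)^\alpha$ with $x = \lambda/v - 1 \in [\,1/\kappa - 1,\,0\,]$, and its truncation error equals $v^\alpha$ times the binomial remainder $R_m(x) = \sum_{k=m+1}^{\infty}\binom{\alpha}{k}x^k$. Using $\abs*{\binom{\alpha}{k}} = O(k^{-\alpha-1})$ together with $\abs*{x} \le 1 - 1/\kappa$, the tail decays geometrically with ratio $1 - 1/\kappa$, so $\abs*{R_m(x)} = O\prn*{\kappa\,(1-1/\kappa)^{m}}$; dividing by $(\lambda/v)^\alpha \ge \kappa^{-\alpha}$ to keep the estimate \emph{relative} and using $\log\frac{1}{1-1/\kappa} \ge 1/\kappa$, a choice of $m = O\prn*{\kappa\log\frac{1}{\epsilon\abs*{\alpha-1}}}$ forces the per-eigenvalue relative error below $\eta = \Theta(\epsilon\abs*{\alpha-1})$ uniformly, which then sums into the same relative bound for the trace. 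I expect this step to be the main obstacle: the function $(1+x)^\alpha$ carries a branch-point singularity at $x=-1$, and the smallest eigenvalues $\lambda \approx u$ push $x$ to the very edge $1/\kappa - 1$ of the convergence interval, degrading the geometric rate to $1-1/\kappa$ and making the $\kappa$-dependence of $m$ unavoidable. I would additionally verify that for such $m$ the matrix $p_m(A)$ stays positive definite, so that the Hutch++ relative-error guarantee, which is stated for SPD inputs, legitimately applies.

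For the stochastic term I would invoke the Hutch++ guarantee of \cite{meyer2021hutchpp}: applied to the SPD matrix $p_m(A)$ with $s = O\prn*{\frac{1}{\epsilon'}\sqrt{\log(1/\delta)} + \log(1/\delta)}$ queries it returns $Z_m = (1\pm\epsilon')\tr(p_m(A))$ with probability at least $1-\delta$, each query costing $m$ multiplications by $A$. To calibrate the target accuracies $\eta$ and $\epsilon'$, I would read Proposition~\ref{th:trace_upper_lower} in reverse: attaining entropy relative error $\epsilon$ is equivalent to attaining trace relative error $1 - \min(\mu,1/\mu)^{\epsilon}$, and since $\mu = 1$ at $\alpha = 1$ the quantity $\log\mu$ vanishes linearly in $\alpha-1$, so this required trace error is $\Theta(\epsilon\abs*{\log\mu}) = \Theta(\epsilon\abs*{\alpha-1})$. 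Choosing $\eta,\epsilon' = \Theta(\epsilon\abs*{\alpha-1})$ then reproduces exactly the stated orders of $s$ and $m$.

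Finally I would combine the two bounds: the triangle inequality yields a combined trace relative error of $\eta + \epsilon' = \Theta(\epsilon\abs*{\alpha-1})$, which by the calibration above stays within the threshold $1 - \min(\mu,1/\mu)^{\epsilon}$ that Proposition~\ref{th:trace_upper_lower} certifies as sufficient for entropy relative error $\epsilon$; hence $\abs*{\tilde{S}_\alpha(A) - S_\alpha(A)} \le \epsilon\cdot S_\alpha(A)$ on the single probability-$(1-\delta)$ event of the Hutch++ step. Structurally this mirrors the proof of Theorem~\ref{th:int_upper}: the conversion through Proposition~\ref{th:trace_upper_lower} and the Hutch++ invocation are shared, so the genuinely new ingredient is the truncation analysis of the second paragraph, where the spectral edge at $u$ dictates the condition-number factor in $m$.
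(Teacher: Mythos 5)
Your overall architecture --- truncate the series, estimate the truncated trace with Hutch++, verify that $p_m(A)$ stays positive so the SPD relative-error guarantee applies, and convert trace error to entropy error through Proposition~\ref{th:trace_upper_lower} together with $\abs*{\log\mu} = \Omega(\abs*{\alpha-1})$ --- is exactly the paper's proof, and your $s$ bound is obtained identically. The one place you diverge is the truncation analysis, and as written it does not deliver the stated order of $m$. You bound the remainder by $\abs*{R_m(x)} = O\prn*{\kappa(1-1/\kappa)^m}$, discarding the $k^{-\alpha-1}$ decay of $\binom{\alpha}{k}$ that you yourself invoked, and then divide by $(\lambda/v)^\alpha \ge \kappa^{-\alpha}$ to make the estimate relative. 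That chain yields a per-eigenvalue relative error of $O\prn*{\kappa^{\alpha+1}(1-1/\kappa)^m}$, and forcing this below $\eta = \Theta(\epsilon\abs*{\alpha-1})$ requires $m = O\prn*{\kappa\log\frac{\kappa}{\epsilon\abs*{\alpha-1}}}$ --- an extra $(\alpha+1)\log\kappa$ inside the logarithm that the theorem statement does not have (compare the Chebyshev bound of Theorem~\ref{th:cheby_upper}, which does carry $\kappa$ inside the log). So your sentence ``a choice of $m = O(\kappa\log\frac{1}{\epsilon\abs*{\alpha-1}})$ forces the per-eigenvalue relative error below $\eta$'' does not follow from the displayed tail estimate.

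The slip is repairable with your own ingredients: keep the polynomial factor, i.e.\ $\abs*{R_m(x)} = O\prn*{m^{-\alpha-1}\,\kappa\,(1-1/\kappa)^{m+1}}$; then after dividing by $\kappa^{-\alpha}$ and substituting $m = c\,\kappa\log(1/\eta)$, the factor $m^{-\alpha-1} = \Theta\prn*{(\kappa\log(1/\eta))^{-\alpha-1}}$ exactly cancels the $\kappa^{\alpha+1}$ loss, leaving $O\prn*{\eta^c(\log(1/\eta))^{-\alpha-1}} \le \eta$, which recovers the stated order. The paper instead avoids the per-eigenvalue division altogether: it compares the tail termwise with the full series, using that $\abs*{\binom{\alpha}{m+k+1}/\binom{\alpha}{k}}$ is bounded by a constant $C$ depending only on $\alpha$, together with sign-consistency of the terms $\binom{\alpha}{k}B^k$ for $k > \alpha$ (where $B = A/v - I_n$ has eigenvalues in $[-1,0]$) and von Neumann's trace inequality, obtaining the multiplicative bound $\abs*{\tr(p_m(A)) - \tr(A^\alpha)} \le C\abs*{u/v-1}^{m+1}\tr(A^\alpha)$ directly at the trace level, from which $m = O\prn*{\kappa\log\frac{1}{\epsilon\abs*{\alpha-1}}}$ follows via $\log\frac{1}{1-1/\kappa} \ge 1/\kappa$. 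One last small imprecision: the required trace tolerance $1-\min(\mu,1/\mu)^{\epsilon}$ is only $\Omega(\epsilon\abs*{\alpha-1})$ (it can be as large as $\epsilon\abs*{\alpha-1}\log n$ by Proposition~\ref{th:logmu_bound}), not $\Theta(\epsilon\abs*{\alpha-1})$; for the sufficiency direction you need only the lower bound, so your calibration still stands.
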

\begin{remark}
	Theorem \ref{th:taylor_upper} presents the relative error bound for Algorithm \ref{alg:taylor}, where explicit order of $s$ and $m$ are given to guarantee the approximation accuracy. Specifically, $s$ is scaled by a coefficient $1/\abs{\alpha-1}$ compared to Theorem \ref{th:int_upper}, and $m$ is positively related to the condition number $\kappa$. Algorithm \ref{alg:taylor} finishes in $O(ms \cdot \nnz(A))$ with $m, s \ll n$.
\end{remark}

The analysis above requires $u > 0$, i.e. the kernel matrix has full rank. However, this requirement is hard to be satisfied in some machine learning tasks e.g. RKHS transporting and dimension reduction \cite{zhang2019optimal, harandi2014manifold}, where rank deficient matrices are frequently encountered. To account for this, we establish the following theorem:
\begin{theorem}
	\label{th:taylor_upper_0}
	Let $\tilde{S}_\alpha(A)$ be the output of Algorithm \ref{alg:taylor} with
	\begin{equation*}
		\begin{aligned}
			s &= O\prn*{\textstyle \frac{1}{\epsilon\abs{\alpha-1}}\sqrt{\log\prn*{\frac{1}{\delta}}} + \log\prn*{\frac{1}{\delta}}}, \\
			m &= O\prn*{\textstyle (vn)^\frac{1}{\min(1, \alpha)} \sqrt[\alpha]{\frac{1}{\epsilon\abs{\alpha-1}}}},
		\end{aligned}
	\end{equation*}
	then for any normalized kernel matrix $A$ with eigenvalues in [0, v], with probability at least $1 - \delta$:
	\begin{equation*}
		\abs*{\tilde{S}_\alpha(A) - S_\alpha(A)} \le \epsilon \cdot S_\alpha(A).
	\end{equation*}
\end{theorem}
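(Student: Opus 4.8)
The plan is to mirror the proof of Theorem~\ref{th:taylor_upper}, splitting the error of $\tilde{S}_\alpha(A)=\frac{1}{1-\alpha}\log \tilde{T}_m$ into a stochastic part and a deterministic part. Writing $T=\tr(A^\alpha)$, $T_m=\tr(p_m(A))$ for the degree-$m$ truncation $p_m(A)=v^\alpha\sum_{k=0}^m\binom{\alpha}{k}(A/v-I_n)^k$, and $\tilde{T}_m$ for its Hutch++ estimate, I would start from
\[
\abs*{\tilde{S}_\alpha(A)-S_\alpha(A)}\le\frac{1}{\abs{\alpha-1}}\prn*{\abs*{\log\tfrac{\tilde{T}_m}{T_m}}+\abs*{\log\tfrac{T_m}{T}}}.
\]
The first term is exactly the Hutch++ relative trace error on $p_m(A)$; after the same log-linearization used in Theorems~\ref{th:int_upper} and \ref{th:taylor_upper} it is governed by the stated $s=O(\frac{1}{\epsilon\abs{\alpha-1}}\sqrt{\log(1/\delta)}+\log(1/\delta))$, so the $s$-bound carries over unchanged and all new content lies in choosing $m$ to control the second (truncation) term $\abs{T_m/T-1}$.

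For the deterministic term the crucial difference from Theorem~\ref{th:taylor_upper} is that $u=0$ is now permitted. For an eigenvalue $\lambda\in[0,v]$, writing $x=\lambda/v-1\in[-1,0]$, the per-eigenvalue error is $\abs{p_m(\lambda)-\lambda^\alpha}=v^\alpha\abs{\sum_{k>m}\binom{\alpha}{k}x^k}$. In the full-rank case $\abs{x}\le 1-1/\kappa$ forces geometric decay, which is precisely what produces the $m=O(\kappa\log(\cdot))$ bound; when $u=0$ this geometric factor is lost at $x=-1$ and only polynomial decay survives. Using the asymptotics $\abs{\binom{\alpha}{k}}=\Theta(k^{-1-\alpha})$, the tail is maximized at $x=-1$ and satisfies $\abs{\sum_{k>m}\binom{\alpha}{k}x^k}=O(m^{-\alpha})$ uniformly on $[-1,0]$. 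Summing over the (at most $n$) eigenvalues yields the absolute bound $\abs{T_m-T}=O(nv^\alpha m^{-\alpha})$.

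To turn this into a relative error I would lower-bound $T=\sum_i\lambda_i^\alpha$ under $\sum_i\lambda_i=1$, $0\le\lambda_i\le v$, splitting on $\alpha$. For $\alpha>1$ the map $\lambda\mapsto\lambda^\alpha$ is convex, so $T$ is minimized by the uniform spectrum and $T\ge n^{1-\alpha}$, giving $\abs{T_m/T-1}=O((vn/m)^\alpha)$. For $\alpha<1$ the map is concave, so the minimizer is a vertex of the feasible polytope, i.e.\ roughly $\lceil 1/v\rceil$ eigenvalues equal to $v$; since the maximal eigenvalue $v$ forces at least $1/v$ nonzero eigenvalues, this gives $T\ge\tfrac12 v^{\alpha-1}$ and hence $\abs{T_m/T-1}=O(vn\,m^{-\alpha})$. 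In both cases, requiring the relative truncation error to match the budget $\epsilon\abs{\alpha-1}$ dictated by the log-conversion and solving for $m$ produces $m=O\prn*{(vn)^{1/\min(1,\alpha)}\sqrt[\alpha]{1/(\epsilon\abs{\alpha-1})}}$: the exponent $1/\alpha$ comes from the $m^{-\alpha}$ tail, while the exponent on $vn$ flips at $\alpha=1$ exactly because the trace lower bound changes from $n^{1-\alpha}$ to $v^{\alpha-1}$.

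I expect the main obstacle to be the $\alpha<1$ lower bound $T\ge c\,v^{\alpha-1}$: it is the step that replaces the (now infinite) condition number $\kappa$ by the pair $(v,n)$, and it needs the majorization/vertex argument rather than a pointwise estimate. A secondary subtlety is that $p_m(A)$ need not be exactly PSD—for $1<\alpha<2$ one checks $p_m(0)=(-1)^m\binom{\alpha-1}{m}<0$—so the Hutch++ guarantee, stated for PSD inputs, does not apply to $p_m(A)$ verbatim; I would handle this by noting $\norm{p_m(A)-A^\alpha}=O(v^\alpha m^{-\alpha})$ is negligible against $T$, so the estimator's concentration on $p_m(A)$ still transfers to $\tr(A^\alpha)$ with the same $s$.
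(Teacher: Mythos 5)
Your proposal matches the paper's proof in all essentials: the same stochastic/truncation split with the same $s$, the same observation that the Taylor tail on $[0,v]$ has constant-sign terms and is therefore maximized at $\lambda=0$ with $O(v^\alpha m^{-\alpha})$ decay (the paper obtains this exactly as $E(0)=v^\alpha\abs*{\binom{\alpha-1}{m}}$ via a telescoping binomial identity plus gamma-function bounds, where you invoke the asymptotics $\abs*{\binom{\alpha}{k}}=\Theta(k^{-1-\alpha})$), the same summation over the $n$ eigenvalues, and the same trace lower bound $\tr(A^\alpha)\ge\min(v^{\alpha-1},n^{1-\alpha})$ — your convexity/vertex argument re-derives what the paper quotes from its display (\ref{eq:ip_bound}) — which yields exactly the exponent flip on $vn$ at $\alpha=1$. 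The one step you leave soft, transferring Hutch++ concentration to the possibly non-PSD $p_m(A)$, is precisely where the paper switches lemmas: it invokes the nuclear-norm guarantee for general matrices (Lemma \ref{th:hutchpp_upper_0}, Theorem 5 of Meyer et al.), giving $\abs*{Z-\tr\prn*{p_m(A)}}\le\epsilon\norm{p_m(A)}_*$ together with $\norm{p_m(A)}_*\le\tr(A^\alpha)+\sum_{i}E(\lambda_i)$, so your operator-norm proximity bound alone does not finish the argument — concentration of the estimator on a non-PSD input is not implied by proximity to a PSD matrix — and you would need to cite that non-PSD guarantee (or, equivalently, run the PSD guarantee on the shifted matrix $p_m(A)+E(0)I$, whose trace differs by the already-controlled $nE(0)$) to close the step.
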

\begin{remark}
	When $u = 0$, due to the existance of a singular point in $f(x) = x^\alpha$ at $x = 0$, a logarithmic convergence rate is no longer achievable. The polynomial approximation error is now dominated by $\epsilon$ instead of $\kappa$. The coefficient $vn$ corresponds to the rare worst case when the eigenvalues of $A$ all equal $1/n$, or are all in $\{0, v\}$.
\end{remark}

\begin{algorithm}[tb]
	\caption{Non-integer order matrix-based R\'enyi's entropy estimation via Taylor series}
	\label{alg:taylor}
	\begin{algorithmic}[1]
		\STATE \textbf{Input:} Kernel matrix $A \in \mathbb{R}^{n \times n}$, number of random vectors $s$, non-integer order $\alpha$, polynomial order $m$, eigenvalue upper bound $v$.
		\STATE \textbf{Output:} Approximation to $S_\alpha(A)$.
		\STATE Run Hutch++ with $f(A) = v^\alpha \sum_{k=0}^m \binom{\alpha}{k} \prn*{A/v - I_n}^k$ and $s$ random vectors.
		\STATE \textbf{Return:} $\tilde{\S}_\alpha(A) = \frac{1}{1-\alpha} \log(\text{Hutch++}(A^\alpha))$.
	\end{algorithmic}
\end{algorithm}

\subsection{Chebyshev Series Approach}
Chebyshev expansion is an advanced technique to approximate analytic functions and often enjoy better theoretical properties. For some continuous function $f: [-1, 1] \rightarrow \mathbb{R}$, it is defined as
\begin{equation*}
	f(x) = \frac{c_0}{2} + \sum_{k=1}^\infty c_k T_k(x), ~x \in [-1,1]
\end{equation*}
where $T_{k+1}(x) = 2xT_k(x) - T_{k-1}(x)$ for $k \ge 1$, $T_0(x) = 1$ and $T_1(x) = x$. By taking the first $m$ terms, the coefficients $c_k, k = 0, \cdots, m$ could be calculated as
\begin{equation*}
	c_k = \frac{2}{m+1} \sum_{i=0}^{m} f(x_i)T_k(x_i),
\end{equation*}
where $x_i = \cos\big(\pi(i+1/2)/(m+1)\big)$. Through a combination with linear mapping $g$: $[-1,1] \rightarrow [u,v]$, we can now approximate $f(\lambda) = \lambda^\alpha$ for any $\lambda \in [u,v]$ with $\hat{T}_k = T_k \circ g^{-1}$, $k = 0, \cdots, m$, as shown in Algorithm \ref{alg:chebyshev}.
\begin{algorithm}[b]
	\caption{Non-integer order matrix-based R\'enyi's entropy estimation via Chebyshev series}
	\label{alg:chebyshev}
	\begin{algorithmic}[1]
		\STATE \textbf{Input:} Kernel matrix $A \in \mathbb{R}^{n \times n}$, number of random vectors $s$, non-integer order $\alpha$, polynomial order $m$, eigenvalue lower \& upper bounds $u$, $v$.
		\STATE \textbf{Output:} Approximation to $S_\alpha(A)$.
		\STATE Set $v \leftarrow \max(v, u + 2\sqrt{2u-u^2})$.
		\STATE Run Hutch++ with $f(A) = c_0/2 + \sum_{k=1}^m c_k\hat{T}_k(A)$ and $s$ random vectors.
		\STATE \textbf{Return:} $\tilde{\S}_\alpha(A) = \frac{1}{1-\alpha} \log(\text{Hutch++}(A^\alpha))$.
	\end{algorithmic}
\end{algorithm}

\begin{theorem}
	\label{th:cheby_upper}
	Let $\tilde{S}_\alpha(A)$ be the output of Algorithm \ref{alg:chebyshev} with
	\begin{equation*}
		\begin{aligned}
			s &= O\prn*{\textstyle \frac{1}{\epsilon\abs{\alpha-1}}\sqrt{\log\prn*{\frac{1}{\delta}}} + \log\prn*{\frac{1}{\delta}}}, \\
			m &= O\prn*{\textstyle \sqrt{\kappa}\log\prn*{\frac{\kappa}{\epsilon\abs{\alpha-1}}}},
		\end{aligned}
	\end{equation*}
	where $\kappa = v/u$ is the condition number of $A$, then for any normalized kernel matrix $A$ with eigenvalues in [u, v], with probability at least $1 - \delta$:
	\begin{equation*}
		\abs*{\tilde{S}_\alpha(A) - S_\alpha(A)} \le \epsilon \cdot S_\alpha(A).
	\end{equation*}
\end{theorem}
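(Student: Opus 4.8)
The plan is to separate the total error into a stochastic trace-estimation part controlled by $s$ and a deterministic polynomial-truncation part controlled by $m$, and to reduce the statement to a relative-error approximation of $\tr(A^\alpha)$ via Proposition \ref{th:trace_upper_lower}. Writing $p_m(A) = c_0/2 + \sum_{k=1}^m c_k \hat{T}_k(A)$ for the degree-$m$ Chebyshev interpolant built in line~4 and $Z = \text{Hutch++}(p_m(A))$ for the returned estimate, I would start from the triangle inequality
\[
  \abs*{Z - \tr(A^\alpha)} \le \abs*{Z - \tr(p_m(A))} + \abs*{\tr(p_m(A)) - \tr(A^\alpha)}.
\]
The first term is handled exactly as in Theorems \ref{th:int_upper} and \ref{th:taylor_upper}: since $p_m$ approximates the nonnegative function $x^\alpha$, $p_m(A)$ is SPD up to the negligible truncation gap, so the Hutch++ relative-error guarantee gives $\abs*{Z - \tr(p_m(A))} \le \epsilon' \tr(p_m(A))$ with probability $1-\delta$ once $s = O\prn*{\frac{1}{\epsilon'}\sqrt{\log(1/\delta)} + \log(1/\delta)}$, and pushing this relative trace error through the $\frac{1}{1-\alpha}\log(\cdot)$ map with Proposition \ref{th:trace_upper_lower} introduces the factor $1/\abs{\alpha-1}$, yielding the stated order of $s$.

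The new content is the truncation term. Rather than bounding it crudely by $n\,\norm*{x^\alpha - p_m}_\infty$ (which would leak a spurious factor of $n$), I would extract a \emph{relative} per-eigenvalue error,
\[
  \abs*{\tr(p_m(A)) - \tr(A^\alpha)} \le \sum_{i=1}^n \abs*{p_m(\lambda_i) - \lambda_i^\alpha} \le \max_{x\in[u,v]}\frac{\abs*{p_m(x) - x^\alpha}}{x^\alpha}\cdot\tr(A^\alpha),
\]
so that the truncation contributes a clean relative error $E_{\mathrm{rel}} = \max_{x\in[u,v]}\abs*{p_m(x)/x^\alpha - 1}$. To bound $E_{\mathrm{rel}}$ I would invoke the classical convergence estimate for Chebyshev interpolation of functions analytic inside a Bernstein ellipse: after mapping $[u,v]$ to $[-1,1]$ by $g^{-1}$, if $x^\alpha$ extends analytically to the ellipse $E_\rho$ (foci $\pm1$, semi-axis sum $\rho$) then $\norm*{x^\alpha - p_m}_\infty \le \frac{4M\rho^{-m}}{\rho-1}$ with $M = \max_{z\in E_\rho}\abs*{z^\alpha}$. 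Dividing by the smallest value $u^\alpha$ then gives $E_{\mathrm{rel}} \le \frac{4(M/u^\alpha)\rho^{-m}}{\rho-1}$.

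Computing the admissible $\rho$ is the crux. The only obstruction to analyticity of $x^\alpha$ is the branch point at the origin, which the linear map sends to $t_0 = -\frac{u+v}{v-u} < -1$; the Bernstein ellipse through a real $t_0$ has $\rho = \abs{t_0} + \sqrt{t_0^2-1}$, and a short simplification gives $\rho = \frac{\sqrt v+\sqrt u}{\sqrt v-\sqrt u} = \frac{\sqrt\kappa+1}{\sqrt\kappa-1}$, so that $\rho-1 = \frac{2}{\sqrt\kappa-1}$ and $\log\rho = \Theta(1/\sqrt\kappa)$. Because $\abs{z^\alpha}=\abs{z}^\alpha$ is maximized at the rightmost point of $E_\rho$, which maps back to $u+v$, one gets $M = (u+v)^\alpha$ and hence $M/u^\alpha = (1+\kappa)^\alpha$; the rescaling $v \leftarrow \max(v,u+2\sqrt{2u-u^2})$ in line~3 is the technical device that keeps the ellipse off the branch cut and this ratio controlled in the degenerate small-$v$ regime. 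Requiring $E_{\mathrm{rel}} \le \epsilon' = \Theta(\epsilon\abs{\alpha-1})$ and solving $\frac{4(1+\kappa)^\alpha\rho^{-m}}{\rho-1}\le\epsilon'$ then gives $m = O\prn*{\frac{1}{\log\rho}\log\frac{\kappa}{\epsilon\abs{\alpha-1}}} = O\prn*{\sqrt\kappa\log\frac{\kappa}{\epsilon\abs{\alpha-1}}}$, matching the claim (the $\log\kappa$ inside the logarithm being exactly the contribution of the worst-case relative error at the smallest eigenvalue).

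Finally I would combine the pieces: choosing $\epsilon'$ a fixed fraction of the target so that estimation and truncation each account for at most half of the tolerated relative trace error, $Z$ is a $(1\pm\Theta(\epsilon\abs{\alpha-1}))$ approximation of $\tr(A^\alpha)$ with probability $1-\delta$, and a last application of Proposition \ref{th:trace_upper_lower} converts this into $\abs*{\tilde S_\alpha(A) - S_\alpha(A)} \le \epsilon\, S_\alpha(A)$. The main obstacle is the complex-analytic estimate of the third paragraph: correctly identifying the maximal Bernstein ellipse of $x^\alpha$, bounding $M$ and the relative error $M/u^\alpha$ uniformly in $\alpha$ and in the eigenvalue range (which is precisely where the $v$-rescaling and the relative, rather than absolute, accounting must be argued with care), and checking that the interpolation-error constant degrades only polynomially so that the $\sqrt\kappa$ rate and the clean logarithmic factor survive.
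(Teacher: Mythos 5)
Your proposal is correct, and its overall architecture coincides with the paper's: the same triangle-inequality split of Hutch++ estimation error (controlled by $s$) from polynomial truncation error (controlled by $m$), the same conversion of a uniform Chebyshev bound on $[u,v]$ into a \emph{relative} trace error via $u^\alpha \le \lambda_i^\alpha$ (the paper phrases this as demanding uniform error $\le \epsilon_1 u^\alpha$ and using $n u^\alpha \le \tr(A^\alpha)$, which is equivalent to your per-eigenvalue relative accounting), the same positive-semidefiniteness check so that the relative-error Hutch++ guarantee (Lemma \ref{th:hutchpp_upper}) applies to $q_m(A)$, and the same final reduction through Proposition \ref{th:trace_upper_lower}, with Proposition \ref{th:logmu_bound} supplying the $1/\abs{\alpha-1}$ factor. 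The one genuinely different step is the analytic heart, Proposition \ref{th:cheby_upper_prop}. You work with the \emph{maximal} Bernstein ellipse: the branch point of $x^\alpha$ pulls back to $t_0 = -\frac{u+v}{v-u}$, the confocal ellipse through $t_0$ has parameter $\rho = \frac{\sqrt{\kappa}+1}{\sqrt{\kappa}-1}$, its interior maps into the open right half-plane where $z^\alpha$ is analytic, and $M = (u+v)^\alpha$. The paper instead certifies analyticity only on the disk $\abs{z-1} < 1$ where the binomial series converges, maps that disk to the region $E_g$, and \emph{inscribes} a confocal ellipse $E_c$ with semi-axis sum $K = 1 + \beta + \sqrt{\beta^2+2\beta}$, $\beta = \frac{2u}{v-u}$. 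Both give $\log \rho,\ \log K = \Theta(1/\sqrt{\kappa})$ and hence the identical $m = O\big(\sqrt{\kappa}\log(\kappa/\epsilon\abs{\alpha-1})\big)$; your ellipse is strictly larger, so your constants are slightly sharper. This difference also corrects one attribution in your write-up: under your branch-cut analysis the rescaling $v \leftarrow \max(v, u+2\sqrt{2u-u^2})$ in line 3 of Algorithm \ref{alg:chebyshev} is in fact unnecessary, since the maximal confocal ellipse automatically stays off the cut; in the paper that condition is precisely what guarantees $E_c \subseteq E_g$, i.e., it compensates for the weaker analyticity region, not for any branch-cut obstruction intrinsic to the problem. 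Finally, your remark that $p_m(A)$ is ``SPD up to the negligible truncation gap'' is made rigorous exactly as the paper does via $\min_{\lambda \in [u,v]} q_m(\lambda) \ge (1 - \epsilon_0/2)\,u^\alpha > 0$, which your pointwise relative bound yields immediately, so no step of your argument fails.
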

\begin{remark}
	Theorem \ref{th:cheby_upper} requires only $O(\sqrt{\kappa})$ polynomial terms to guarantee the approximation accuracy for Algorithm \ref{alg:chebyshev} in the case that A is well-conditional, comparing to Theorem \ref{th:taylor_upper} which require $O(\kappa)$ to achieve the same approximation accuracy. Moreover, Algorithm \ref{alg:chebyshev} requires estimation of $u$, which is generally more difficult than estimating $v$ because of its small magnitude.
\end{remark}

Similarly, we establish the error bound of Chebyshev series for rank deficient kernel matrices.
\begin{theorem}
	\label{th:cheby_upper_0}
	Let $\tilde{S}_\alpha(A)$ be output of Algorithm \ref{alg:chebyshev} with
	\begin{equation*}
		\begin{aligned}
			s &= O\prn*{\textstyle \frac{1}{\epsilon\abs{\alpha-1}}\sqrt{\log\prn*{\frac{1}{\delta}}} + \log\prn*{\frac{1}{\delta}}}, \\
			m &= O\prn*{\textstyle (vn)^\frac{1}{2\min(1, \alpha)} \sqrt[2\alpha]{\frac{1}{\epsilon\abs{\alpha-1}}}},
		\end{aligned}
	\end{equation*}
	then for any normalized kernel matrix $A$ with eigenvalues in [0, v], with probability at least $1 - \delta$:
	\begin{equation*}
		\abs*{\tilde{S}_\alpha(A) - S_\alpha(A)} \le \epsilon \cdot S_\alpha(A).
	\end{equation*}
\end{theorem}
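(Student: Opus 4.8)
The plan is to split the total relative error of $\tilde S_\alpha(A)$ into two independent pieces: the Chebyshev truncation error from replacing $f(A)=A^\alpha$ by the degree-$m$ polynomial $p_m(A)=c_0/2+\sum_{k=1}^m c_k\hat T_k(A)$, and the stochastic error of running Hutch++ on $\tr(p_m(A))$; I then recombine them through Proposition \ref{th:trace_upper_lower}. The Hutch++ stage is identical to the full-rank case, so the bound on $s$ is inherited verbatim from Theorem \ref{th:cheby_upper}: driving the trace of $p_m(A)$ to relative error with probability $1-\delta$ costs $s=O(\epsilon_{\mathrm{tr}}^{-1}\sqrt{\log(1/\delta)}+\log(1/\delta))$ queries. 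The factor $1/\abs{\alpha-1}$ then appears precisely through Proposition \ref{th:trace_upper_lower}: setting $u=0$ collapses $\mu$ to $v^{\alpha-1}$, so $\abs{\log\mu}=\abs{\alpha-1}\,\abs{\log v}$ and a target entropy error $\epsilon$ converts into an admissible relative trace error of order $\epsilon\abs{\alpha-1}$ (absorbing the $\abs{\log v}$ factor). All the new content therefore lies in the order of $m$.

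The crux is a uniform estimate for the Chebyshev truncation of the singular map $\lambda\mapsto\lambda^\alpha$ on $[0,v]$ (the algorithm's rescaling $v\leftarrow\max(v,u+2\sqrt{2u-u^2})$ is vacuous once $u=0$). With the linear map $g:[-1,1]\to[0,v]$ the pulled-back function is $(f\circ g)(x)=(v/2)^\alpha(1+x)^\alpha$, whose sole non-smoothness for non-integer $\alpha$ is an algebraic branch point at the left endpoint $x=-1$. Under $x=\cos\theta$ it becomes $v^\alpha\cos^{2\alpha}(\theta/2)$, which vanishes like $\abs{\pi-\theta}^{2\alpha}$ as $\theta\to\pi$; a function with such a $\abs{\cdot}^{2\alpha}$ endpoint singularity has Chebyshev coefficients decaying as $\abs{c_k}=O(v^\alpha k^{-(2\alpha+1)})$, so summing the tail yields
\begin{equation*}
	\norm*{\lambda^\alpha-p_m(\lambda)}_{\infty,[0,v]}\le\sum_{k>m}\abs{c_k}=O\prn*{v^\alpha m^{-2\alpha}}.
\end{equation*}
This is exactly where Chebyshev overtakes Taylor: the endpoint-singularity rate $m^{-2\alpha}$ here beats the $m^{-\alpha}$ of the truncated binomial series behind Theorem \ref{th:taylor_upper_0}, which is what halves every exponent in the final bound on $m$.

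Converting the uniform bound into a relative trace error is then routine. Because the eigenvalues of $A$ lie in $[0,v]$,
\begin{equation*}
	\abs*{\tr(A^\alpha)-\tr(p_m(A))}\le n\,\norm*{\lambda^\alpha-p_m(\lambda)}_{\infty,[0,v]}=O\prn*{n\,v^\alpha m^{-2\alpha}},
\end{equation*}
and I would lower-bound $\tr(A^\alpha)$ by its worst-case value under the constraints $\tr(A)=1$, $\lambda_i\in[0,v]$. For $\alpha>1$ convexity places the minimum at the flat spectrum $\lambda_i\equiv1/n$, giving $\tr(A^\alpha)\ge n^{1-\alpha}$; for $\alpha<1$ concavity places it at the two-point spectrum supported on $\{0,v\}$, giving $\tr(A^\alpha)\ge v^{\alpha-1}$—these are precisely the two worst cases flagged in the remark. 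Dividing, the relative polynomial error is $O((vn)^\alpha m^{-2\alpha})$ for $\alpha>1$ and $O((vn)\,m^{-2\alpha})$ for $\alpha<1$, i.e.\ $O((vn)^{\max(1,\alpha)}m^{-2\alpha})$ uniformly. Forcing this below the admissible trace budget $\sim\epsilon\abs{\alpha-1}$ and solving for $m$—using the identity $\max(1,\alpha)/(2\alpha)=1/(2\min(1,\alpha))$—delivers exactly $m=O\prn*{(vn)^{1/(2\min(1,\alpha))}\sqrt[2\alpha]{1/(\epsilon\abs{\alpha-1})}}$, and recombining the two error budgets through Proposition \ref{th:trace_upper_lower} closes the argument.

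The step I expect to be the main obstacle is the uniform Chebyshev estimate $\norm*{\lambda^\alpha-p_m(\lambda)}_{\infty,[0,v]}=O(v^\alpha m^{-2\alpha})$. One must justify the $k^{-(2\alpha+1)}$ coefficient decay rigorously—the $\theta$-substitution gives the right exponent heuristically, but a clean proof needs either a contour argument tailored to the endpoint branch point or an explicit integral representation of $c_k$—and one must verify that the interpolatory coefficients $c_k=\frac{2}{m+1}\sum_i f(x_i)T_k(x_i)$ actually used in Algorithm \ref{alg:chebyshev} inherit the same tail bound as the true Chebyshev coefficients up to aliasing. Everything else—the spectral worst-case bounds on $\tr(A^\alpha)$ and the recombination via Proposition \ref{th:trace_upper_lower}—is mechanical.
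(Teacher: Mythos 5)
Your overall architecture coincides with the paper's proof: split the error into Chebyshev truncation plus Hutch++ estimation, convert the entropy budget into a trace budget via Proposition \ref{th:trace_upper_lower}, bound the truncation uniformly by $O(v^\alpha m^{-2\alpha})$, lower-bound $\tr(A^\alpha)\ge\min(v^{\alpha-1},n^{1-\alpha})$ by the flat and two-point worst-case spectra, and solve for $m$ using $\max(1,\alpha)/(2\alpha)=1/(2\min(1,\alpha))$ --- all of this is exactly the paper's closing computation. However, the step you defer as ``the main obstacle'' is precisely the heart of the theorem, and leaving it as a heuristic leaves the proof incomplete. The paper closes it without any contour argument: for $u=0$ the pulled-back function $\big(\tfrac{v}{2}(1+\cos\theta)\big)^\alpha$ has Chebyshev coefficients in \emph{closed form}, $c_k = 2v^\alpha\Gamma(\alpha+\tfrac12)(\alpha)_k\big/\big(\sqrt{\pi}\,\Gamma(\alpha+1)(\alpha+k)_k\big)$ with $(\alpha)_k$ the falling factorial; rewriting via Gamma functions, applying the Gamma-ratio inequalities of Lemma \ref{GammaIneq} together with Euler's reflection formula, and comparing the tail sum with an integral yields $\sum_{i>m}\abs{c_i}\le v^\alpha\Gamma(\alpha+\tfrac12)\Gamma(\alpha)\,\pi^{-3/2}(m-\alpha)^{-2\alpha}$. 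So your conjectured $k^{-(2\alpha+1)}$ decay is correct, and the ``explicit integral representation'' route you name is the one that actually works --- but you would need to carry it out. (On aliasing: the paper, like you, silently identifies the algorithm's interpolatory coefficients with the true expansion coefficients; the standard bound that Chebyshev-node interpolation error is at most twice the tail of the true coefficients covers this at no cost in the order.)

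There is also a genuine error in your claim that the Hutch++ stage is ``inherited verbatim'' from Theorem \ref{th:cheby_upper}. That proof invokes Lemma \ref{th:hutchpp_upper}, whose relative-error guarantee holds only for positive semi-definite $f(A)$, and PSD-ness was verified there via $\min_{\lambda\in[u,v]}q_m(\lambda)\ge u^\alpha-\tfrac{\epsilon_0}{2}u^\alpha\ge 0$ --- an argument that collapses at $u=0$, where the truncation error at $\lambda=0$ is no longer dominated by $\lambda^\alpha=0$ and $q_m(0)$ may be negative, so $q_m(A)$ need not be PSD. The paper instead applies the nuclear-norm guarantee of Lemma \ref{th:hutchpp_upper_0} and bounds $\norm{q_m(A)}_* = \sum_{i=1}^n\abs{q_m(\lambda_i)} \le \tr(A^\alpha)+\sum_{i=1}^n\abs{\lambda_i^\alpha-q_m(\lambda_i)} \le \big(1+\tfrac{\epsilon_0}{2}\big)\tr(A^\alpha)$, which restores the same order of $s$. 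The fix is short, but without it your Hutch++ step, as stated, rests on a hypothesis that fails in exactly the rank-deficient regime this theorem addresses.
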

\begin{remark}
	Compared with Theorem \ref{th:taylor_upper_0}, Algorithm \ref{alg:chebyshev} within rank deficient case still achieves better theoretical guarantees from all perspectives.
\end{remark}

\subsection{Connection with the Lanczos Method}
Besides polynomial approximation, an alternative approach for approximating matrix functions is the Lanczos method \cite{bellalij2015bounding}: given implicit matrix $f(A)$ and arbitrary vector $b$, an approximation of $f(A) \cdot b$ is acquired by a linear interpolation in the Krylov subspace $\{b, A \cdot b, \cdots, A^m \cdot b\}$. This could be interpreted as an adaptive polynomial approximation technique, where the coefficients are chosen according to the given matrix $f(A)$ and vector $b$. However, this approach does not achieve any faster convergence rate than explicit polynomial approximation: as pointed out in \cite{ubaru2017fast}, the block Lanczos method achieves exactly the same upper bound $O(\kappa \log(\kappa/\epsilon))$ as Chebyshev series in terms of subspace dimension, while requiring additional $O(nms)$ memory to store the block vectors in each step \cite{bellalij2015bounding}. Moreover, the lower bound of the Lanczos method is also closely related to the lower bound of polynomial approximation \cite{musco2018stability}, which is discussed in the next section. We leave this for future research.

\section{Lower Bounds}
So far, we established estimation algorithms for matrix-based R\'enyi's entropy and evaluated their theoretical properties. A natural question is if the $O(1/\epsilon)$, $O(\sqrt{\kappa})$ or $O(\sqrt[2\alpha]{1/\epsilon})$ upper bounds in our previous analysis are tight. In this section, we will prove that up to a logarithmic factor, they are consistent with theoretical lower bounds.

In Proposition \ref{th:trace_upper_lower}, we show that an effective trace approximator implies an effective approximator for matrix-based Renyi's entropy. Based on the lower bound of randomized implicit trace estimation in fixed precision model \cite{meyer2021hutchpp}, we obtain the lower bound of required matrix-vector multiplication queries $s$ by complexity reduction:
\begin{theorem}
	\label{th:entropy_lower}
	Any algorithm that accesses a normalized $n \times n$ kernel matrix $A$ via matrix-vector multiplication queries $Ar_1, \cdots, Ar_m$, where $r_1, \cdots, r_m$ are possibly adaptively chosen randomized vectors under limited precision computation model, requires $s = \Omega\prn*{\frac{1}{\epsilon\abs{\alpha-1}\log n\log\prn*{1/\epsilon\abs{\alpha-1}\log n}}}$ such queries to output an estimate $Z$ so that, with probability at least $\frac{2}{3}$, $\abs*{Z - S_\alpha(A)} \le \epsilon \cdot S_\alpha(A)$ for arbitrary $\alpha > 0$.
\end{theorem}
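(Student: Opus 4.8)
The plan is to derive the bound by a complexity reduction from the fixed-precision trace-estimation lower bound of \cite{meyer2021hutchpp}, exploiting the equivalence between trace estimation and entropy approximation already recorded in Proposition~\ref{th:trace_upper_lower}. In its limited-precision (bit-complexity) form, that lower bound states that any possibly adaptive, randomized algorithm estimating $\tr(B)$ of an SPD matrix $B$ to relative error $1\pm\tilde\epsilon$ with probability at least $2/3$ must issue $\Omega\prn*{\frac{1}{\tilde\epsilon\log(1/\tilde\epsilon)}}$ matrix-vector queries. The reduction transfers this hardness to $S_\alpha$ by reading Proposition~\ref{th:trace_upper_lower} in its ``vice versa'' direction.

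First I would take any algorithm $\mathcal{A}$ that accesses $A$ through $s$ matrix-vector queries and returns $Z$ with $\abs*{Z - S_\alpha(A)}\le\epsilon\,S_\alpha(A)$ with probability at least $2/3$. By the ``vice versa'' clause of Proposition~\ref{th:trace_upper_lower}, the very same $\mathcal{A}$ (composed with the fixed map $t\mapsto\frac{1}{1-\alpha}\log t$ read backwards) is a trace estimator of relative error $\tilde\epsilon = \max(n^{\alpha-1},n^{1-\alpha})^{\epsilon}-1 = n^{\abs{\alpha-1}\epsilon}-1$, using the same $s$ queries and the same success probability. I would then linearize this factor: writing $n^{\abs{\alpha-1}\epsilon}-1 = \exp\prn*{\abs{\alpha-1}\epsilon\ln n}-1$, in the only meaningful regime $\abs{\alpha-1}\epsilon\log n = o(1)$ we get $\tilde\epsilon = \Theta\prn*{\abs{\alpha-1}\epsilon\log n}$ and hence $\log(1/\tilde\epsilon) = \Theta\prn*{\log\prn*{\frac{1}{\abs{\alpha-1}\epsilon\log n}}}$. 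Substituting these into the query lower bound $\Omega\prn*{\frac{1}{\tilde\epsilon\log(1/\tilde\epsilon)}}$ reproduces exactly the claimed order of $s$.

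The delicate points are twofold. The main obstacle is that the reduction is sound only inside the stated limited-precision computation model: under exact arithmetic an adaptive query sequence can encode unbounded information per query and the $\Omega(1/\tilde\epsilon)$-type trace bound provably fails, so I must invoke the fixed-precision version of \cite{meyer2021hutchpp} and check that the conversion in Proposition~\ref{th:trace_upper_lower} does not smuggle in extra queries or precision --- it reuses the identical queries and applies only the deterministic postprocessing $t\mapsto\frac{1}{1-\alpha}\log t$, so both the query count and the bit budget carry over unchanged. The second point is to confirm that the worst-case instances forcing the trace lower bound lie in the admissible class: every SPD matrix is a Gram matrix and rescales to unit trace with spectrum in $[0,1]$, so it is a legitimate normalized kernel matrix, and the extremal multiplier $\max(n^{\alpha-1},n^{1-\alpha})$ of Proposition~\ref{th:trace_upper_lower} is attained exactly at the degenerate spectra (mass concentrated at the endpoints of $[u,v]$) that make trace estimation hard; this guarantees that the error transformation used above is the tight one and the resulting lower bound is not artificially weakened.
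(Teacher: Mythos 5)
Your proposal is correct and follows essentially the same route as the paper's own proof: both invoke the limited-precision trace-estimation lower bound of \cite{meyer2021hutchpp} (Theorem~7 there), transfer it to $S_\alpha$ via the ``vice versa'' direction of Proposition~\ref{th:trace_upper_lower} with $\tilde\epsilon = n^{\epsilon\abs{\alpha-1}}-1$, and linearize $\tilde\epsilon = \Theta\prn*{\epsilon\abs{\alpha-1}\log n}$ in the regime where this quantity is small before substituting into $\Omega\prn*{1/\prn{\tilde\epsilon\log(1/\tilde\epsilon)}}$. Your additional checks --- that the reduction reuses the identical queries with only deterministic postprocessing, and that the hard instances rescale to admissible normalized kernel matrices --- are sound and in fact slightly more careful than the paper, which performs the same reduction without spelling them out.
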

\begin{remark}
	Note that the lower bound $s = \Omega(1/\epsilon)$ matches our previous results up to a $\log(1/\epsilon)$ factor, which means our error bounds are nearly-optimal. Moreover, the scaling term $1/(1-\alpha)$ implies that precise approximation is impossible when $\alpha \rightarrow 1$. This observation is confirmed in our simulation studies (Section \ref{exp:simulation}). 
\end{remark}

Next, by applying the theory of best uniform approximation error, we establish the lower bounds for the required number of terms $m$ in polynomial approximation. Given a continuous real function $f$ defined on $[-1,1]$, denote the $m$-terms best uniform approximation of $f$ by $p_m$, then:
\begin{equation*}
	\norm{f - p_m} = \min_{p \in \mathbb{P}_m}\norm{f - p},
\end{equation*}
where $\norm{\cdot}$ denotes the uniform norm and $\mathbb{P}_m$ is the linear space of all polynomials with degree at most $m$. Based on previous theoretical analysis of function $f(x) = x^\alpha$ \cite{lam1972some, bernsteincollected, varga1992some, bernstein1938meilleure}, we obtain:
\begin{theorem}
	\label{th:poly_lower}
	There exists a positive decreasing function $\epsilon_0: \mathbb{R}^+ \rightarrow \mathbb{R}^+$ such that for arbitrary $0 < u < v < 1$ and $0 < \epsilon < \epsilon_0(v/u)$, any polynomial $p_m$ that approximates matrix function $f(A)=A^\alpha$, requires $m = \Omega\prn*{\sqrt{\kappa} \log\prn*{\frac{1}{\kappa\epsilon\abs{\alpha-1}\log n}}}$ degree to achieve
	\begin{equation*}
		\abs*{\frac{1}{1-\alpha}\log\big(\tr\prn*{p_m(A)}\big) - S_\alpha(A)} \le \epsilon \cdot S_\alpha(A),
	\end{equation*}
	for any positive definite matrix $A$ with all eigenvalues in $[u, v]$ and $\tr(A) \in [1, 2]$, where $\kappa = v/u$.
\end{theorem}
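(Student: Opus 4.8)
The plan is to reduce the relative-error guarantee on $S_\alpha(A)$ to a uniform-approximation requirement for the scalar function $f(x)=x^\alpha$ on $[u,v]$, and then to invoke the classical lower bounds on its best polynomial approximation cited in \cite{lam1972some,bernsteincollected,varga1992some,bernstein1938meilleure}. First I would cancel the common prefactor $1/(1-\alpha)$: writing $T=\tr(A^\alpha)$ and $T'=\tr(p_m(A))$, the hypothesis is equivalent to $\abs*{\log T'-\log T}\le\epsilon\abs{\log T}$, i.e. $T'/T\in[e^{-\epsilon\abs{\log T}},e^{\epsilon\abs{\log T}}]$. Since all eigenvalues lie in $[u,v]\subset(0,1)$ with $\tr(A)\in[1,2]$, one has $T\le 2$ for $\alpha>1$ and $T\ge 1$ for $\alpha<1$, so the logarithm linearizes and the condition becomes an absolute trace bound $\abs*{T'-T}\le\tau$ with $\tau=O\prn*{\epsilon\,T\,\abs{\log T}}$. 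The point of keeping $\abs{\log T}$ explicit is that on the extremal spectra used below $\abs{\log T}$ is of order $\abs{\alpha-1}\log n$, which is exactly how the factors $\abs{\alpha-1}$ and $\log n$ will enter the final bound.

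Next I would convert this spectral-sum bound into a uniform-norm bound on $q:=x^\alpha-p_m$, using $T-T'=\sum_i q(\lambda_i)$. The goal is to show that for every polynomial of degree at most $m$ there is an admissible spectrum for which $\abs*{\sum_i q(\lambda_i)}\ge\norm{q}\ge E_m(x^\alpha;[u,v])$, so that $\tau$ forces $E_m\le\tau$. Concretely I would take $\lambda^\ast\in[u,v]$ attaining $\norm{q}$, place a single eigenvalue there, and place the remaining $n-1$ eigenvalues at a zero of $q$ located near the lower endpoint $u$; then the spectral sum collapses to $q(\lambda^\ast)=\pm\norm{q}$, while the $n-1$ eigenvalues of size $\Theta(1/n)$ simultaneously fix $\tr(A)\approx 1$ and inflate $T$ to order $n^{1-\alpha}$, supplying the $\abs{\log T}\asymp\abs{\alpha-1}\log n$ claimed above. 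The main obstacle is precisely this cancellation control: because the trace is a sum over the whole spectrum, a careless placement of the auxiliary eigenvalues can annihilate $q(\lambda^\ast)$. Handling it requires that the extremal point of $x^\alpha-p_m$ concentrate near $u$ (the endpoint closest to the branch singularity of $x^\alpha$ at $0$) and that a usable zero of $q$ sit near $u$ as well; for the equioscillating best approximant this is immediate, and a general competitor must either be reduced to this case or handled by balancing a positive and a negative level of $q$.

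Finally I would invoke the best-uniform-approximation lower bound for $x^\alpha$ on $[u,v]$: for $\epsilon<\epsilon_0(\kappa)$ it has the form $E_m(x^\alpha;[u,v])\ge c\,\kappa^{-1}r^{m}$ with $r=\frac{\sqrt{\kappa}-1}{\sqrt{\kappa}+1}$, the reciprocal Bernstein-ellipse parameter fixed by the distance from the singularity at $0$ to the interval. Combining $E_m\le\tau$ with this estimate and taking logarithms gives $m\ge\frac{1}{\log(1/r)}\log\frac{c\,\kappa^{-1}}{\tau}$; since $\log(1/r)=\Theta(1/\sqrt{\kappa})$ for large $\kappa$ and $\tau=O\prn*{\epsilon\abs{\alpha-1}\log n}$, this is exactly $m=\Omega\prn*{\sqrt{\kappa}\log\prn*{\frac{1}{\kappa\epsilon\abs{\alpha-1}\log n}}}$. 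The threshold $\epsilon_0(\kappa)$ is chosen so that the logarithmic linearization of Step 1 is valid and the asymptotic form of the Bernstein lower bound is in force; it is decreasing because a larger condition number narrows the admissible range of $\epsilon$. The only delicate part beyond the construction of Step 2 is tracking how $\kappa$, $\abs{\alpha-1}$ and $\log n$ propagate into the argument of the logarithm, which I expect to be routine once the spectral-sum-to-supremum reduction is established.
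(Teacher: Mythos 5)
Your high-level route coincides with the paper's: reduce the relative entropy guarantee to an absolute trace-error budget $\tau$ of order $\epsilon\abs{\alpha-1}\log n$ (this is the reverse direction of Proposition \ref{th:trace_upper_lower}, with $\epsilon_0 = n^{\epsilon\abs{\alpha-1}}-1$), force the best uniform approximation error $E_m(x^\alpha;[u,v])$ below $\tau$ by exhibiting adversarial admissible spectra, and close with the Bernstein-type lower bound for $x^\alpha$, whose ellipse parameter yields $\log(1/r)=\Theta(1/\sqrt{\kappa})$; the paper instead applies the classical asymptotic for $E_m\prn*{(\gamma-x)^{-t}}$ from \cite{lam1972some,bernsteincollected} with $t=-\alpha$, $\gamma=u+1$ after rescaling $[u,v]$ to $[u,u+2]$, which matches your $c\,\kappa^{-1}r^m$ form. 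Your Steps 1 and 3, including the bookkeeping of $\kappa$, $\abs{\alpha-1}$ and $\log n$ inside the logarithm, are sound and agree with the paper's final computation.

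The genuine gap is your Step 2, which you correctly identify as the main obstacle but do not resolve. Your construction --- one eigenvalue at the extremal point $\lambda^\ast$ of $q=x^\alpha-p_m$ and the remaining $n-1$ at a zero of $q$ near $u$ --- fails for an arbitrary competitor $p_m$: if $q$ changes sign, its crossing can sit anywhere in $[u,v]$, and nothing forces the extremum to concentrate near $u$; moreover, with the matrix size $n$ fixed, the constraint $\lambda^\ast+(n-1)\rho\in[1,2]$ pins the padding value $\rho$ to an interval of width $\Theta(1/n)$, where a zero of $q$ has no reason to lie. The paper's resolution, which is the actual content of its Proposition \ref{th:poly_lower_prop}, is structurally different: it relaxes the trace constraint to a window $\sum_i\lambda_i\in[b,b+v)$ of width $v$ (this is precisely why the theorem states $\tr(A)\in[1,2]$ rather than $\tr(A)=1$) and does not fix the number of eigenvalues. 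Given any query it splits the points according to the sign of $q$, pads each group with $\ceil*{\cdot}$ copies of a sign-crossing point $\rho$ satisfying $f(\rho)=q_m(\rho)$ --- so the padding contributes nothing to the error sum while steering the trace into the window --- and thus converts the signed-sum bound into an absolute-sum bound over two separate admissible queries; a further replication step (placing $\phi(\lambda)\ge 1$ copies at an arbitrary $\lambda$ with filler eigenvalues restoring the trace) then upgrades the absolute-sum bound to the uniform bound $\norm{q}_\infty\le\epsilon$ needed to invoke Bernstein. Without this sign-splitting and variable-length padding device (or an equivalent), your phrase ``balancing a positive and a negative level of $q$'' is a placeholder standing exactly where the proof's hardest idea belongs, so as written the argument does not go through for arbitrary $p_m$.
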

\begin{theorem}
	\label{th:poly_lower_0}
	For arbitrary $v > 0$ and small enough $\epsilon$, any polynomial $p_m$ that approximates matrix function $f(A)=A^\alpha$, requires $m = \Omega\prn*{\sqrt[2\alpha]{\frac{1}{\epsilon\abs{\alpha-1}\log n}}}$ degree to achieve
	\begin{equation*}
		\abs*{\frac{1}{1-\alpha}\log\big(\tr\prn*{p_m(A)}\big) - S_\alpha(A)} \le \epsilon \cdot S_\alpha(A),
	\end{equation*}
	for any positive semi-definite matrix $A$ with all eigenvalues in $[0, v]$ and $\tr(A) \in [1, 2]$.
\end{theorem}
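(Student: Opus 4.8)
The plan is to follow the same template as Theorem~\ref{th:poly_lower}, but to replace the geometric (condition-number) decay rate of the best polynomial approximation by the \emph{algebraic} rate that is forced by the singularity of $x^\alpha$ at the origin. First I would reduce the relative-entropy condition to a relative-trace condition. Writing $\tr(p_m(A)) = \tr(A^\alpha)(1+r)$, the target inequality is equivalent to $\abs{\log(1+r)} \le \epsilon\abs{\log\tr(A^\alpha)}$, and on the extremal normalized instance whose eigenvalues cluster near $1/n$ one has $\tr(A^\alpha)\asymp n^{1-\alpha}$, so $\abs{\log\tr(A^\alpha)}\asymp\abs{\alpha-1}\log n$. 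Hence a successful $p_m$ must reproduce $\tr(A^\alpha)$ up to a multiplicative tolerance $\eta \asymp \epsilon\abs{\alpha-1}\log n$.

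Second, I would invoke the classical best-approximation theory cited in the statement. Denoting by $E_m = \min_{p\in\mathbb{P}_m}\max_{x\in[0,v]}\abs{x^\alpha - p(x)}$ the best uniform approximation error of $x^\alpha$ on $[0,v]$, Bernstein's theorem gives the sharp algebraic rate $E_m = \Theta(v^\alpha m^{-2\alpha})$; the exponent $2\alpha$ (rather than $\alpha$) is precisely the square-root phenomenon caused by the algebraic branch point at $x=0$, and it is the origin of the $\sqrt[2\alpha]{\cdot}$ appearing in the claim. This is the quantitative heart of the argument, and only its lower-bound direction $E_m \gtrsim v^\alpha m^{-2\alpha}$ is actually needed.

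Third --- and this is the main obstacle --- I would convert this function-approximation lower bound into a matrix/trace lower bound, since for a \emph{fixed} positive (eigenvalue) measure the signed error $\int(p_m - x^\alpha)\dif\mu$ can always be cancelled by a suitable $p_m$. To remove the polynomial's freedom I would use the dual characterization of $E_m$: there is a signed measure $\nu^\ast$ on $[0,v]$, orthogonal to $\mathbb{P}_m$ and of unit total variation, such that $\int x^\alpha \dif\nu^\ast = E_m$; splitting it into its positive and negative parts $\nu^\ast = \nu^+ - \nu^-$ (each of mass $1/2$, supported on the alternation points) yields two positive measures with identical moments up to order $m$. Realizing $c\nu^+$ and $c\nu^-$ as eigenvalue multisets, padded by a common block of eigenvalues near $1/n$ to fix $\tr\in[1,2]$ and to inflate the size to $n$, produces two matrices $A,B$ with eigenvalues in $[0,v]$ satisfying $\tr(p(A)) = \tr(p(B))$ for every $p\in\mathbb{P}_m$ yet $\tr(A^\alpha) - \tr(B^\alpha) = c\,E_m$. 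The delicate points are the discretization to integer multiplicities and the bookkeeping that keeps $\tr(A^\alpha)\asymp n^{1-\alpha}$, so that the $\log n$ factor genuinely emerges.

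Finally I would combine the two ingredients. If a single degree-$m$ polynomial met the error target for both $A$ and $B$, then because $\tr(p_m(A)) = \tr(p_m(B))$ the two power-traces would have to lie within a factor $e^{2\eta}$ of each other, forcing $c\,E_m = \abs{\tr(A^\alpha)-\tr(B^\alpha)} \lesssim \eta\,\tr(A^\alpha)$. After cancelling the common scale this reads $m^{-2\alpha}\lesssim \eta \asymp \epsilon\abs{\alpha-1}\log n$, i.e. $m = \Omega\!\big(\sqrt[2\alpha]{1/(\epsilon\abs{\alpha-1}\log n)}\big)$, as claimed. I expect the construction/discretization step and the tight control of $\tr(A^\alpha)$ to demand the most care, whereas the reduction of the first paragraph and the invocation of Bernstein's rate are routine once the framework is set up.
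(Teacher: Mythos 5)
Your first two ingredients coincide with the paper's: the reduction of the relative-entropy guarantee to a multiplicative trace tolerance $\eta\asymp\epsilon\abs{\alpha-1}\log n$ is exactly the paper's use of Proposition~\ref{th:trace_upper_lower} (with $\epsilon_0=n^{\epsilon\abs{\alpha-1}}-1$), and the algebraic rate $E_m=\Theta(m^{-2\alpha})$ is the paper's Lemma~\ref{th:poly_lower_lemma_0}. Where you genuinely diverge is the conversion of the function-approximation lower bound into a trace lower bound. The paper (via Proposition~\ref{th:poly_lower_prop_0}, mirroring Proposition~\ref{th:poly_lower_prop}) quantifies over \emph{all} admissible eigenvalue configurations and runs a chain of reductions (Problems~\ref{prb:lower1}--\ref{prb:lower5}): adversarial multisets with integer multiplicities $n_\lambda=\flr{\min(\cdot,\cdot)}$ are built in from the start, sign cancellation in the signed sum is defused by splitting the query at a crossing point $\rho$ with $f(\rho)=q_m(\rho)$ and padding both sub-queries with copies of $\rho$, and the weight $\phi(\lambda)\ge 1$ lets the unweighted uniform lower bound apply. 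You instead propose a duality argument: a dual measure $\nu^\ast\perp\mathbb{P}_m$ achieving $E_m$, split into $\nu^+-\nu^-$, realized as two moment-matched matrices $A,B$ that no degree-$m$ polynomial trace can distinguish. This is an attractive and, in principle, standard indistinguishability mechanism, and the quantifier structure (one $p_m$ must serve both instances) is right.

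However, the step you flag as ``delicate'' is a genuine gap, and as written the final chain fails there. The dual measure is supported on the $m+2$ equioscillation points of $x^\alpha$ with weights that are generically irrational, so after rounding multiplicities to integers the identity $\tr\prn*{p(A)}=\tr\prn*{p(B)}$ for all $p\in\mathbb{P}_m$ no longer holds; the residual is $\sum_{k\le m}\abs{a_k}\abs{\Delta_k}$, where $a_k$ are the coefficients of the \emph{adversarially chosen} $p_m$ and $\Delta_k$ the moment mismatches. Nothing in your two-instance construction bounds $\norm{p_m}_\infty$, let alone its coefficients, so this term is uncontrolled. The usual escape of scaling the multiplicities by a large $N$ to drown the $O(1)$ rounding error is blocked by the constraint $\tr(A)\in[1,2]$: since the smallest alternation point of $x^\alpha$ on $[0,v]$ sits at distance $\Theta(v/m^2)$ from the origin, the trace budget caps $N\lesssim m^2/v$, which is bounded for fixed $m$. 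Repairing this requires a real additional idea --- e.g.\ perturbing the alternation points to rationalize the dual weights while preserving orthogonality to $\mathbb{P}_m$ and a separation of order $E_m$, or first extracting a pointwise bound $\abs{p_m(\lambda)-\lambda^\alpha}\lesssim\eta/\phi(\lambda)$ from point-mass-style test instances to control the coefficients. Note that the latter fix essentially recreates the paper's Problems~\ref{prb:lower2}--\ref{prb:lower4} reduction, at which point the uniform lower bound finishes the proof directly and your duality pair becomes redundant. Everything else in your outline (the $\asymp n^{1-\alpha}$ bookkeeping, padding by a common block near $1/n$, equal masses $1/2$ of $\nu^\pm$) is sound.
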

\begin{remark}
	Theorem \ref{th:poly_lower} and \ref{th:poly_lower_0} present the lower bound for polynomial approximation in non-integer order R\'enyi's entropy estimation. Also, these bounds indicate the near-optimality of Algorithm \ref{alg:chebyshev} in consideration of the results in Theorem \ref{th:cheby_upper} and \ref{th:cheby_upper_0}.
\end{remark}

\section{Experimental Results}
In this section, we evaluate the performance of proposed approximations implemented in C++ using Eigen \cite{eigenweb}. Numerical studies are conducted on an Intel i7-10700 (2.90GHz) CPU with 64GB of RAM, with deep learning models trained on an RTX 2080Ti GPU. We give comprehensive experimental results for both synthetic data and real-world information-related tasks.

\begin{figure}[tb]
	\centering
	\includegraphics[width=0.45\textwidth]{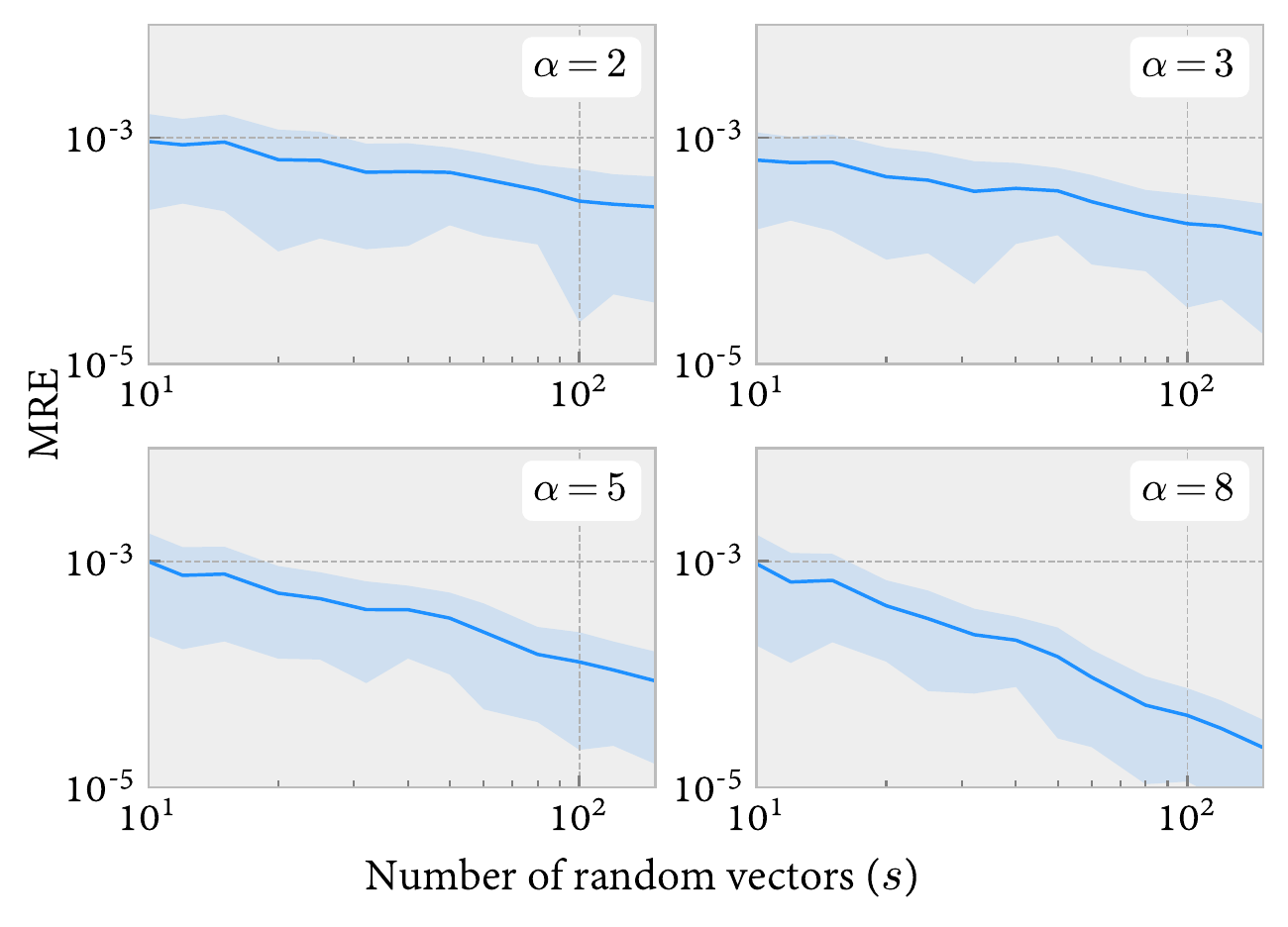}
	\caption{Number of random vectors $s$ versus MRE curves for integer $\alpha$-order R\'enyi's entropy estimation.}
	\label{IntExp}
\end{figure}
\begin{figure}[t]
	\centering
	\includegraphics[width=0.45\textwidth]{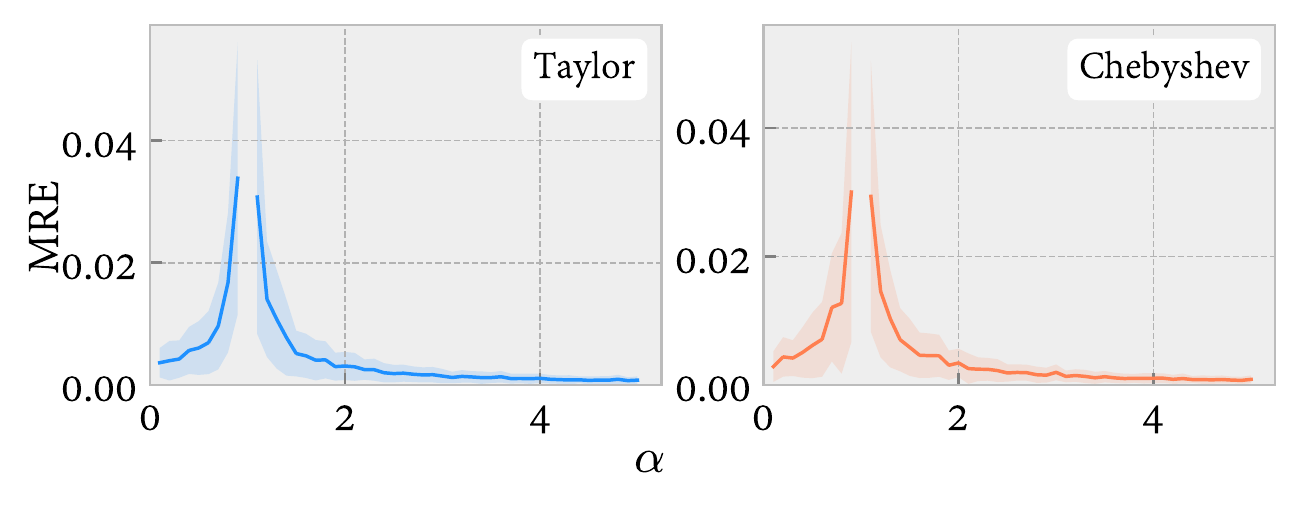}
	\caption{$\alpha$ versus MRE curves for non-integer $\alpha$-order R\'enyi's entropy estimation algorithms.}
	\label{AlphaExp}
\end{figure}
\begin{figure}[t]
	\centering
	\includegraphics[width=0.45\textwidth]{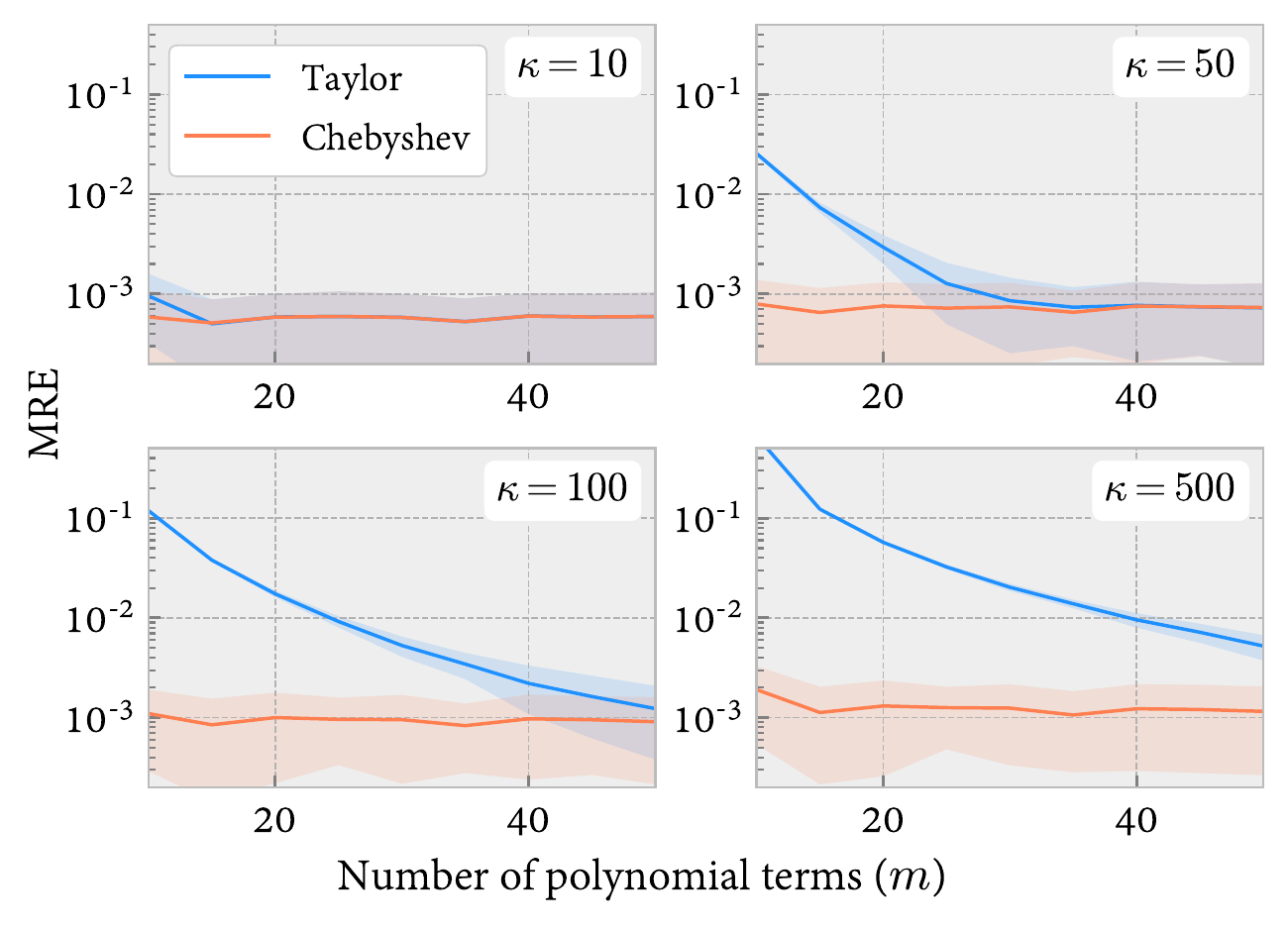}
	\caption{Number of polynomial terms $m$ versus MRE curves for different condition numbers $\kappa$.}
	\label{PolyExp}
\end{figure}
\begin{figure}[t]
	\centering
	\includegraphics[width=0.45\textwidth]{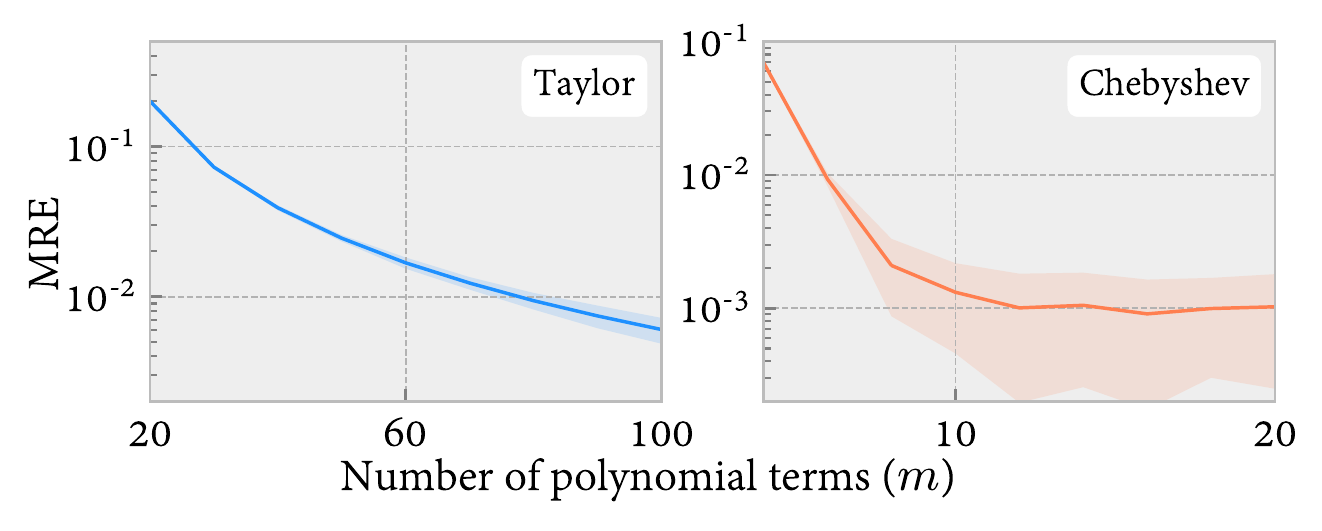}
	\caption{Number of polynomial terms $m$ versus MRE curves for rank deficient kernel matrices.}
	\label{DeficientExp}
\end{figure}

\subsection{Simulation Studies} \label{exp:simulation}
In the following simulation experiments, we generate synthetic data points by mixture of Gaussian distribution $\frac{1}{2}N(-1,I_d)+\frac{1}{2}N(1,I_d)$ with $n = 5,000$ and $d = 10$, where $I_d$ is an identity matrix of size $d$, resulting in a $5,000 \times 5,000$ kernel matrix size. Gaussian kernel $\kappa(\x_i, \x_j) = \exp(-\| \x_i - \x_j \|_2^2/2\sigma^2)$ with $\sigma = 1$ is adopted in matrix-based R\'enyi's entropy quantification. For each benchmark, we report the mean relative error (MRE) and corresponding standard deviation (SD) of approximation results after $K = 100$ trials. The oracle $S_\alpha(A)$ is computed through the trivial $O(n^3)$ eigenvalue approach.

\subsubsection{Integer Order Approximation}
We first evaluate the performance of Algorithm \ref{alg:int} for integer-order entropy estimation. We report the $s$ versus MRE curves for $\alpha \in \{2, 3, 5, 8\}$, where the number of random vectors $s$ ranges from $10$ to $150$, as shown in Figure \ref{IntExp}. The shaded area indicates the corresponding SD of MRE. We observe a linear relationship between $s$ and MRE as expected. It is worth noting that we achieve a $0.1\%$ relative error with only $s = 10$ random vectors, which costs roughly $1.2$ seconds of running time for $\alpha=2$. For comparison, the trivial eigenvalue approach takes $27$ seconds to obtain a complete eigenvalue decomposition.

\subsubsection{Non-Integer Order Approximation}
We further evaluate the Taylor and Chebyshev algorithms for non-integer $\alpha$ orders. The results on describing the impact of $\alpha$ on approximation MRE with $m=20$ and $s=100$ are reported in Figure \ref{AlphaExp}. As expected, MRE curves grow with the increase of $\alpha$ for $\alpha <1$ and decrease otherwise. This phenomenon is because of the $|\frac{1}{1-\alpha}|$ coefficient in our previous theoretical analysis. When $\alpha$ is close to $1$, this term dominates the approximation error.

\begin{figure*}[t]
	\centering
	\includegraphics[width=0.9\textwidth]{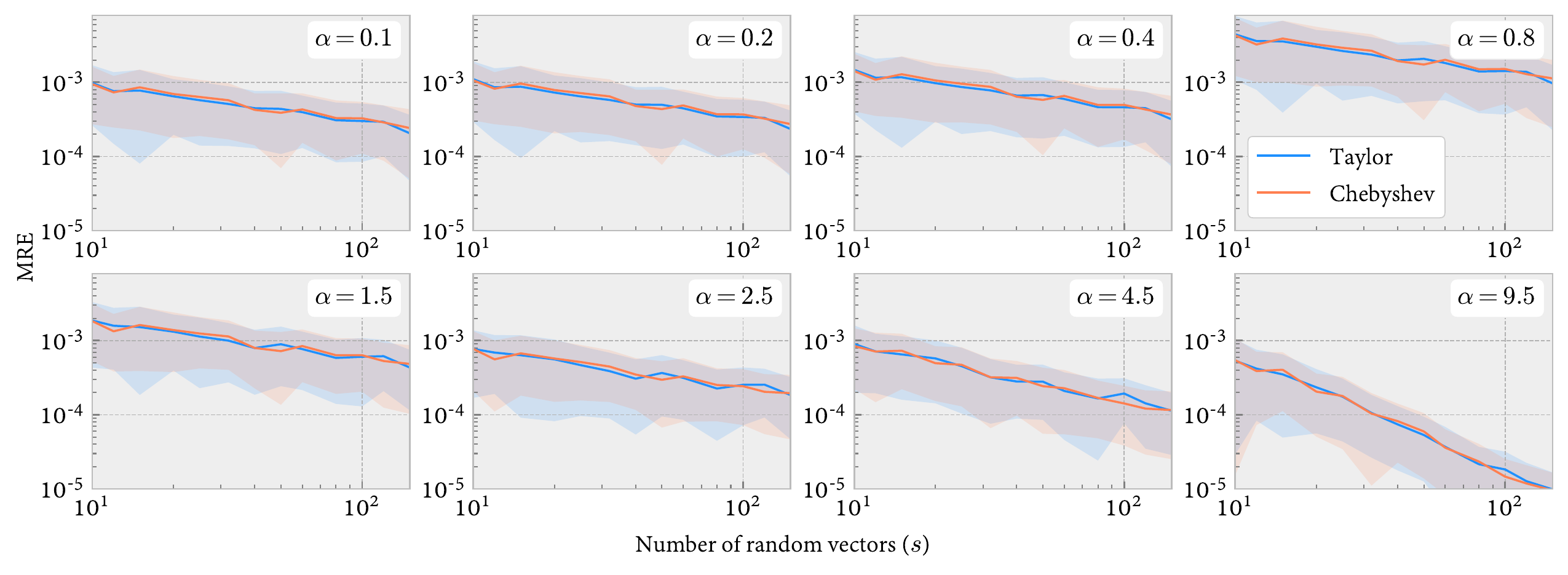}
	\caption{Number of random vectors $s$ versus MRE curves for non-integer $\alpha$-order R\'enyi's entropy estimation.}
	\label{NonIntExp}
\end{figure*}

We next explore the influence of different condition numbers $\kappa$ in polynomial approximation. Here we set $\alpha=1.5$, $s=100$ and $m$ ranges from $10$ to $50$, with adjusted width parameter $\sigma$ in Gaussian kernel to control the eigenspectrum. It can be seen from Figure \ref{PolyExp} that the polynomial terms $m$ required by Taylor approximation is larger than that of Chebyshev approximation for relatively large $\kappa$, which verifies our findings in Theorem \ref{th:taylor_upper} and \ref{th:cheby_upper}. For smaller $\kappa$, the two approaches yield comparable results. Reminding that the Taylor series does not require estimation of $u$, it is thus more suitable for kernel matrices with flat eigenspectrum.

We further investigate the rank deficient circumstances. We adopt the polynomial kernel $\kappa(\x_i, \x_j) = (\x_i^\top \x_j + r)^p$ with $p = 2$, $r = 1$ to fulfill the infinitely divisible requirement, and set $d = 98$ to retain roughly $1\%$ of the eigenvalues to be zero. We find that Chebyshev approximation still outperforms the Taylor approach in terms of MRE with small $m$ values, as shown in Figure \ref{DeficientExp}.

Finally, we report the experimental results of both algorithms for different $\alpha$ values. We set $\sigma = 1$ in Gaussian kernel and $s$ varies from $10$ to $150$. For the Taylor approach, we set $m = 40$ and for Chebyshev we set $m = 15$. From Figure \ref{NonIntExp}, we can see that the two approaches achieve similar performance in this setting. Also, we get relatively higher MRE for $\alpha$ near $1$ ($\alpha=0.8$), same as we have discussed before. In this sense, we recommend a combination of $s = 50$ and $m = 15$ that takes $3$ seconds to achieve a $10^{-3}$ relative error for most circumstances, leading to $9$ times speedup compared with the trivial eigenvalue approaches. For larger kernel matrices, this advantage could be even more pronounced.

\subsection{Real Data Studies}
In real-world data-driven applications, the extended entropy measures including R\'enyi's $\alpha$-order joint entropy (\ref{eq:joint_entropy}), conditional entropy (\ref{eq:cond_entropy}) and mutual information (\ref{eq:mutual_info}) enable much wider adoption of information-based machine learning tasks. By approximating the trace of the joint kernel matrix $A_1 \circ \cdots \circ A_L$ in (\ref{eq:joint_entropy}), our approximations algorithms are immediately applicable on these extended information measures. In this section, we will demonstrate the performance of our algorithms on these novel extensions by three representative real-world applications, which accelerate the computation of entropy (robust deep learning), mutual information (feature ranking) and multivariate mutual information (feature selection) respectively. We select $\sigma=1$ in Gaussian kernel and $\alpha=2$ for simplicity.

\subsubsection{Application to Robust Deep Learning}
\begin{table}[tb]
	\centering
	\caption{Test error and time spent for different methods on CIFAR-10. Right is the total time of network training, while left is the time spent solely on calculating IB. The number quoted indicates corresponding $\alpha$ value in R\'enyi's entropy.}
	\label{IBRes}
	\begin{sc}
		\begin{tabular}{ llcc }
			\toprule
			Backbone & Objective & Error (\%) & Time (hour) \\
			\midrule
			\multirow{5}{*}{VGG16} & CE & 7.36 & \quad - \enspace / 2.27 \\
			& VIB & 7.15 & 0.03 / 2.30 \\
			& DIB ($1.01$) & 5.66 & \multirow{2}{*}{1.13 / 3.40} \\
			& DIB ($2$) & 5.69 & \\
			\cmidrule{2-4}
			& ADIB ($2$) & 5.71 & 0.15 / 2.42 \\
			\bottomrule
		\end{tabular}
	\end{sc}
\end{table}

The Information Bottleneck (IB) objective was firstly introduced by \cite{tishby1999information} and has recently been adopted in deep network training to learn either stochastic or deterministic compressed yet meaningful representations \cite{yu2021deep, ardizzone2020training, wu2020graph}. Denoting $X$ as the input and $Y$ as the target label, the IB approach learns an intermediate representation $T$ that balances the trade-off between the predictive performance of $T$ on task $Y$ (quantified by $I(Y;T)$) and the complexity of $T$ (quantified by $I(X;T)$):
\begin{equation}
	\mathcal{L}_{IB} = I(Y;T) - \beta \cdot I(X; T), \label{eq:info_bottle}
\end{equation}
where $\beta$ is a hyper-parameter that balances $I(Y;T)$ and $I(X;T)$. There are different ways to parameterize IB by neural networks. In general, the maximization of $I(Y;T)$ is equivalent to the minimization of cross-entropy (CE) loss \cite{alemi2017deep,amjad2019learning}, which turns the objective of deep IB into a standard CE loss regularized by a differentiable mutual information term $I(X;T)$. On the other hand, for a deterministic and feed-forward neural network, we have $I(X;T) = H(T)$, the entropy of latent representation $T$ \cite{amjad2019learning,saxe2018information}.
Hence, (\ref{eq:info_bottle}) could be simply implemented in deep neural networks in an end-to-end style with an additional loss function, which could be acquired by approximating the trace of the kernel matrix constructed from $T$ and directly optimized by gradient-based methods.

We follow the experiment settings used by \cite{yu2021deep}, where VGG16 \cite{simonyan2015very} and CIFAR-10 are selected as the backbone network and classification dataset respectively. The last fully-connected layer in VGG before the softmax layer is selected as the bottleneck $T$. All models are trained for $400$ epochs, with $0.1$ initial learning rate which is reduced by a factor of $10$ every $100$ epochs, and $100$ batch size so the kernel matrices in R\'enyi's entropy is $100 \times 100$. The performance of Deterministic IB (DIB, (\ref{eq:info_bottle})) and Approximated DIB (ADIB) are evaluated with number of random vectors $s=10$ and $\beta = 0.01$. The final classification accuracy and time spent on calculating IB / training networks are reported in Table \ref{IBRes}.

We compare with the Variational IB (VIB) \cite{alemi2017deep}, which achieves state-of-the-art performance in previous studies. For a fair comparison, we further report the results for $\alpha=1.01$ as it is recommended by \cite{yu2021deep}. It can be seen that $\alpha=2$ also works and produces comparable classification accuracy. The results indicate that with our approximation algorithms, we can benefit from the outstanding performance of IB with nearly no increase in training time, while the original approach requires $50\%$ more time for IB calculation. This speedup could be even enlarged by using a larger batch size, which is recommended by modern fine-tuning techniques \cite{samuel2018dont}.

\subsubsection{Application to Feature Ranking}
Given a set of features $S = \{X_1, \cdots, X_n\}$, the feature selection task aims to find the smallest subset $S_{sub}$ that maximize the relevance about the labels $Y$. This ultimate target is to maximize the multivariate mutual information $I_\alpha(S_{sub};Y)$, which is however usually impractical in real-world scenarios because of the curse of dimensionality and the difficulty of global optimization.

\begin{table}[tb]
	\centering
	\caption{Number of instances (\#I), features (\#F), and classes (\#C) of classification datasets used in feature selection and ranking experiments, and the corresponding running time of RMI and ARMI.}
	\label{FeatSelData}
	\setlength{\tabcolsep}{0.6em}
	\begin{sc}
		\begin{tabular}{ lccccccc }
			\toprule
			\multirow{2}{*}{Dataset} & \multirow{2}{*}{\#I} & \multirow{2}{*}{\#F} & \multirow{2}{*}{\#C} & \multicolumn{2}{c}{Selection} & \multicolumn{2}{c}{Ranking} \\
			\cmidrule(lr){5-6} \cmidrule(lr){7-8}
			&&&& RMI & ARMI & RMI & ARMI \\
			\midrule
			Madelon & 2600 & 500 & 2 & 9.98 & 1.33 & 1.10 & 0.23 \\
			Krvskp & 3196 & 37 & 2 & 1.37 & 0.14 & 0.15 & 0.03 \\
			Optdigits & 5620 & 65 & 10 & 13.79 & 0.80 & 1.44 & 0.14 \\
			Statlog & 6435 & 37 & 6 & 11.71 & 0.59 & 1.22 & 0.10 \\
			Spambase & 4601 & 57 & 2 & 6.64 & 0.48 & 0.70 & 0.08 \\
			Waveform & 5000 & 40 & 3 & 6.17 & 0.39 & 0.65 & 0.07 \\
			Galaxy & 9150 & 16 & 2 & 14.28 & 0.52 & 1.47 & 0.09 \\
			Beans & 13611 & 17 & 7 & 48.14 & 1.17 & 4.91 & 0.21 \\
			\bottomrule
		\end{tabular}
	\end{sc}
\end{table}
\begin{table*}[t]
	\centering
	\caption{The best classification error (\%) achieved by each feature ranking (the upper half) and feature selection (the lower half) methods for $k=10$ features. The last column indicates the average ranking of different methods on our test benchmark.}
	\label{FeatSel}
	\begin{sc}
		\begin{tabular}{ lccccccccc }
			\toprule
			& Madelon & Krvskp & Optdigits & Statlog & Spambase & Waveform & Galaxy & Beans & Average Rank \\
			\midrule
			ADC & 15.00 & 5.88 & 9.77 & 13.33 & 9.11 & 18.04 & 0.73 & 8.38 & 3.25 \\
			NFIG & 15.00 & 5.63 & 73.47 & 14.84 & 9.98 & 18.04 & 0.64 & 8.38 & 4.25 \\
			SU & 15.00 & 5.82 & 12.31 & 14.55 & 9.13 & 18.04 & 0.63 & 8.38 & 3.25 \\
			DAS & 48.81 & 38.96 & 88.67 & 15.70 & 24.82 & 18.04 & 0.81 & 8.38 & 5.88 \\
			WJE & 15.23 & 5.07 & 16.35 & 18.20 & 11.39 & 16.32 & 0.86 & 8.38 & 4.75 \\
			\midrule
			RMI & 13.23 & 5.48 & 7.31 & 14.83 & 9.78 & 16.14 & 0.51 & 8.29 & 1.63\\
			ARMI & 13.23 & 5.48 & 7.31 & 14.83 & 9.78 & 16.14 & 0.51 & 8.33 & 1.75 \\
			\midrule
			\midrule
			MIFS & 47.00 & 5.88 & 8.90 & 12.37 & 21.73 & 26.28 & 0.96 & 7.02 & 5.75 \\
			FOU & 34.42 & 4.79 & 10.28 & 11.83 & 18.04 & 19.70 & 12.77 & 7.05 & 5.25 \\
			MIM & 14.65 & 5.88 & 9.77 & 13.07 & 9.26 & 17.92 & 0.74 & 8.40 & 6.00 \\
			MRMR & 46.77 & 5.85 & 7.35 & 12.65 & 9.02 & 15.28 & 0.90 & 7.55 & 4.88 \\
			JMI & 12.27 & 5.88 & 6.55 & 12.77 & 9.17 & 14.86 & 0.69 & 8.96 & 4.63\\
			CMIM & 16.65 & 5.79 & 5.64 & 12.65 & 8.91 & 15.28 & 5.98 & 7.08 & 3.88 \\
			\midrule
			RMI & 10.62 & 2.57 & 5.50 & 12.34 & 8.87 & 15.62 & 0.51 & 7.11 & 2.00 \\
			ARMI & 10.31 & 2.57 & 5.91 & 12.63 & 8.87 & 16.82 & 0.51 & 7.41 & 2.63 \\
			\bottomrule
		\end{tabular}
	\end{sc}
\end{table*}

Feature ranking is a simple yet effective workaround. Using the weight $w_i = I_\alpha(X_i; Y)$ for the features $X_i \in S$, ranking methods are enabled to measure the effectiveness of each feature respectively and select the most important features upon two-dimensional probability distributions, which is much easier to estimate in practical. However, they ignore information redundancy and synergy between different features, thus usually yield suboptimal solutions.

We evaluate matrix-based R\'enyi's mutual information (RMI, (\ref{eq:mutual_info})) and our Approximated RMI (ARMI) with $5$ state-of-the-art information-based feature ranking methods, namely Asymmetric Dependency Coefficient (ADC) \cite{sridhar1998information}, Normalized First-Order Information Gain (NFIG) \cite{setiono1996improving}, Symmetrical Uncertainty (SU) \cite{press1988numerical}, Distance-based Attribute Selection (DAS) \cite{mantaras1991distance} and Weighted Joint Entropy (WJE) \cite{chi1993entropy}. 8 well-known classification datasets used in previous works constitute our experiment benchmark \cite{uci, vinh2016can, koklu2020multiclass, abolfathi2018fourteenth}, which covers a wide range of instance-feature ratios, number of classes, discreteness and data source domains as shown in Table \ref{FeatSelData}.

For non-R\'enyi methods, continuous features are discretized into $5$ bins by equal-width strategy used in \cite{vinh2014reconsidering}. We set the number of random vectors $s = 100$ in ARMI. The Support Vector Machine (SVM) algorithm with RBF kernel ($\sigma=1$) is adopted as the classifier using 10-fold cross-validation. In our observation, classification accuracy tends to stabilize after selecting the top $k=10$ features (shown in the Appendix), so we report the best accuracy achieved by each method for selecting at most $10$ features in Table \ref{FeatSel}. The comparison of running time between RMI and ARMI is reported in Table \ref{FeatSelData}. As we can see, both RMI and ARMI out-perform other Shannon's entropy based methods. Moreover, ARMI achieves $5$ to $20$ times speedup, $11.11$ times on average compared to the original RMI. For all datasets except ``Beans'', ARMI obtains exactly the same feature ranking orders as RMI.

\subsubsection{Application to Feature Selection}
We further explore the possibility of improving feature ranking methods by considering the interactions between different features, i.e. directly maximize our final target $I_\alpha(S_{sub}; Y)$. Before the proposition of RMI, this quantity was extremely hard or even intractable to estimate due to the curse of high dimensionality. Thus, enormous efforts have been made on approximation techniques that retains only the first or second order interactions, including Mutual Information-based Feature Selection (MIFS) \cite{battiti1994using}, First-Order Utility (FOU) \cite{brown2009new}, Mutual Information Maximization (MIM) \cite{lewis1992feature}, Maximum-Relevance Minimum-Redundancy (MRMR) \cite{peng2005feature}, Joint Mutual Information (JMI) \cite{yang1999data} and Conditional Mutual Information Maximization (CMIM) \cite{fleuret2004fast} that achieve the state-of-the-art performance.

We use the same datasets, discretization criterion and classifier settings. We follow a greedy strategy and select the first $10$ features that maximize the target $I_\alpha(S_{sub};Y)$. The results are shown in Table \ref{FeatSel} and \ref{FeatSelData}. ARMI achieves $10$ to $40$ times speedup, $19.07$ on average over original RMI. For ``Optdigits'', ``Spambase'' and ``Galaxy'' datasets, RMI and ARMI select exactly the same features. Also, it is worth noting that in the Galaxy dataset, there is one feature named "redshift" that achieves higher than $95\%$ classification accuracy solely in identifying the class of a given star. In both selection and ranking experiments, RMI and ARMI successfully identified this feature in the first place, but other methods failed to find it out until the third important feature is selected. This verifies the effectiveness of ARMI in both low and high dimensional circumstances, and demonstrates its great potential on variety of information-based tasks.

\section{Conclusion}
In this paper, we develop computationally efficient approximations for matrix-based R\'enyi's entropy, which achieve substantially lower complexity compared to the trivial eigenvalue approach. Through further adoption of Taylor and Chebyshev expansions, we support arbitrary values of $\alpha$. Statistical guarantees are established for all proposed algorithms and their optimality is proven by theoretical analysis. Large-scale simulation and real-world experiments are conducted to support our theoretical results. It is shown that our approximation algorithms bring tremendous speedup for a wide range of information-related tasks, while only introducing negligible loss in accuracy.


\appendix

\section*{Proof of Main Results}
For simplicity, we ignore some trivial cases in the following analysis, including 1) $\kappa = 1$, i.e. all eigenvalues of $A$ equal $1/n$; 2) $v = 1$, i.e. except for the largest eigenvalue $\lambda_0 = 1$, all other eigenvalues of $A$ equal $0$.
 
\subsection{Properties of $\tr(A^\alpha)$}
In the limit condition where the eigenspectrum of $A$ takes extreme value, the information potential $\tr(A^\alpha)$ could be expressed in terms of $u$ and $v$:
\begin{align}
	\tr(A^\alpha) &\in
	\begin{cases}
		[\mu, n^{1-\alpha}], & \mathrm{for}~\alpha < 1 \\
		[n^{1-\alpha}, \mu], & \mathrm{for}~\alpha > 1
	\end{cases}, \label{eq:ip_bound} \\
	where\enspace&\mu = \frac{1 - un}{v - u} \cdot v^\alpha + \frac{vn - 1}{v - u} \cdot u^\alpha. \nonumber
\end{align}
$\mu$ is the special case of $\tr(A^\alpha)$ when all eigenvalues of $A$ belongs to $\{u, v\}$. Some properties about $\mu$ are worthy to address for our following analysis.
\begin{proposition}
	\label{th:logmu_bound}
	Let $\mu$ be defined as in (\ref{eq:ip_bound}), then
	\begin{equation}
		\abs{\log\mu} = \Omega\prn*{\abs{\alpha-1}}, \quad\abs{\log\mu} = O\prn*{\abs{\alpha-1}\log n}. \label{eq:logmu_bound}
	\end{equation}
\end{proposition}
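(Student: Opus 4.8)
The plan is to recast $\mu$ as the information potential $\tr(A^\alpha)$ of the extremal two-point spectrum and then read off both bounds from a convex-combination representation. First I would verify the algebraic identity
\begin{equation*}
	\mu = a\,v^{\alpha-1} + b\,u^{\alpha-1}, \quad a = \frac{(1-un)v}{v-u}, \quad b = \frac{(vn-1)u}{v-u},
\end{equation*}
where $a,b \ge 0$ precisely because $u \le 1/n \le v$, and $a+b = 1$ after the cancellation $(1-un)v + (vn-1)u = v-u$. Thus $\mu$ is a convex combination of $v^{\alpha-1}$ and $u^{\alpha-1}$; equivalently it is $\sum_i \lambda_i^\alpha$ for the unit-trace spectrum that puts all mass on the two extreme eigenvalues. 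This identity is the engine for both estimates, and checking it is routine.

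For the upper bound $\abs{\log\mu} = O(\abs{\alpha-1}\log n)$, I would apply the power-mean (Jensen) inequality to $\mu = \sum_i \lambda_i^\alpha$, which sandwiches $\mu$ between $1$ and $n^{1-\alpha}$ — exactly the range already recorded in (\ref{eq:ip_bound}). Concretely, for $\alpha > 1$ one gets $n^{1-\alpha} \le \mu \le 1$ (using $\lambda_i^\alpha \le \lambda_i$ for the right-hand side), and for $\alpha < 1$ one gets $1 \le \mu \le n^{1-\alpha}$, so in either case $\log\mu$ and $(1-\alpha)\log n$ share a sign and $\abs{\log\mu} \le \abs{\alpha-1}\log n$; no hidden constant is needed. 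For the lower bound $\abs{\log\mu} = \Omega(\abs{\alpha-1})$, I would discard the subdominant term in the convex combination. Since $u \le v$, the monotonicity of $x \mapsto x^{\alpha-1}$ gives $\mu \le (a+b)v^{\alpha-1} = v^{\alpha-1} \le 1$ when $\alpha > 1$, and the inequality reverses to $\mu \ge v^{\alpha-1} \ge 1$ when $\alpha < 1$. Both regimes collapse to the single clean bound
\begin{equation*}
	\abs{\log\mu} \ge \abs{\alpha-1}\,\log\prn*{\tfrac{1}{v}}.
\end{equation*}

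The main obstacle is really a subtlety rather than a hard estimate: justifying that $\log(1/v)$ is a strictly positive constant, so that the displayed inequality genuinely delivers $\Omega(\abs{\alpha-1})$. The cleanest way to see both the content and the sharpness is to note $S_\alpha(A) \ge \abs{\log\mu}/\abs{\alpha-1} \ge \log(1/v)$, which is just the standard floor $S_\alpha \ge S_\infty = -\log \lambda_{\max}$; this quantity degenerates exactly as $v \to 1$. The only way $v = 1$ can arise for a unit-trace normalized kernel matrix is the rank-deficient spectrum with one eigenvalue equal to $1$ and the rest equal to $0$ — precisely the trivial case excluded at the start of the appendix. Hence for every admissible $A$ we have $v < 1$ and $\log(1/v) > 0$ is a fixed quantity of the spectrum, independent of $\alpha$, $\epsilon$, and $n$. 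I would therefore make the write-up state explicitly that the hidden $\Omega$-constant may depend on $v$, which is consistent with the paper's convention of treating $u$, $v$ (hence $\kappa$) as problem constants, while the $O$-bound carries the full $\log n$ dependence with an absolute constant.
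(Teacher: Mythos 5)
Your proposal is correct, but it takes a genuinely different route from the paper's. The paper proves the proposition by factoring $\mu = u^{\alpha-1}\bigl(\kappa(\kappa^{\alpha-1}-1)(1-un)/(\kappa-1)+1\bigr)$, sandwiching $\kappa(\kappa^{\alpha-1}-1)/(\kappa-1)$ between $\kappa^{\alpha-1}-1$ and $(1+1/\gamma)(\kappa^{\alpha-1}-1)$, and then splitting into the cases $u \in (1/2n, 1/n)$ and $u \le 1/2n$ (with $u=0$ handled separately via $\mu = v^{\alpha-1}$), reading both bounds off $\Theta$-estimates of $\abs{(\alpha-1)\log u + \log((\kappa^{\alpha-1}-1)(1-un)+1)}$. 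Your convex-combination identity $\mu = a\,v^{\alpha-1} + b\,u^{\alpha-1}$ with $a+b=1$, $a,b \ge 0$ (which is easily verified, including the interpretation $a = n_v v$, $b = n_u u$ with $n_v + n_u = n$) collapses all of this: the upper bound $\abs{\log\mu} \le \abs{\alpha-1}\log n$ follows in one line from $\lambda^\alpha$ versus $\lambda$ plus Jensen — it is exactly the statement that $\mu$ lies between $1$ and $n^{1-\alpha}$, i.e.\ the $\mu$-endpoint of the paper's own display (\ref{eq:ip_bound}) — with explicit constant $1$ and no case analysis, while the lower bound $\abs{\log\mu} \ge \abs{\alpha-1}\log(1/v)$ follows by discarding the $u$-term via monotonicity of $x \mapsto x^{\alpha-1}$. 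Your route is not only shorter but more honest about the one real subtlety: the $\Omega(\abs{\alpha-1})$ constant necessarily degenerates as $v \to 1$ (one eigenvalue near $1$ gives $\mu \approx v^{\alpha-1}$ and $\abs{\log\mu} \approx \abs{\alpha-1}(1-v)$), so dependence on $v$ is unavoidable; the paper handles this only implicitly by excluding the exact case $v=1$ at the start of the appendix, and its intermediate claim $\abs{\log\mu} = \Omega(\abs{(\alpha-1)}\log n)$ in the regime $u \in (1/2n, 1/n)$ is in fact loose (e.g.\ $un = 0.9$ with $v$ fixed yields $\abs{\log\mu} = \Theta(\abs{\alpha-1})$ uniformly in $n$), whereas your two explicit inequalities are tight as stated and deliver exactly what the downstream theorems use.
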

\begin{proof}
    If $u = 0$, the conclusion is obvious since we have $\mu = v^{\alpha-1}$ and $v \in (1/n, 1)$.
    
	Otherwise, let $\kappa = v/u$ be the condition number of $A$, then
	\begin{align*}
		\mu &= u^\alpha\frac{\kappa^\alpha-\kappa^\alpha un+\kappa un-1}{\kappa u-u} \\
		&=u^{\alpha-1}\prn*{\frac{\kappa(\kappa^{\alpha-1}-1)(1-un)}{\kappa-1}+1}.
	\end{align*}
	Ignoring the trivial case $\kappa=1$, for any constant number $\gamma \in (0, \infty)$ that satisfies $\kappa > \gamma + 1$, we have:
	\begin{equation*}
		\kappa^{\alpha-1}-1 \le \frac{\kappa(\kappa^{\alpha-1}-1)}{\kappa-1} \le \prn*{1+\frac{1}{\gamma}}(\kappa^{\alpha-1}-1).
	\end{equation*}
	Thus, we have shown that for all $\kappa \in (1, \infty)$:
	\begin{align*}
		\abs*{\log\mu} ={} &\Theta\big(\big|(\alpha-1)\log u \\
		&+ \log\prn*{(\kappa^{\alpha-1}-1)(1-un)+1}\big|\big).
	\end{align*}
	When $u \in (1/2n, 1/n)$, $1-un = O(1)$ and
	\begin{align*}
		\abs*{\log\mu} &= \Omega\prn*{\abs*{(\alpha-1)\log u}} \\
		&= \Omega\prn*{\abs*{(\alpha-1)\log n}},\\
		\abs*{\log\mu} &= O\prn*{\abs*{(\alpha-1)(\log u + \log k)}} \\
		&= O\prn*{\abs*{(\alpha-1)\log v}}.
	\end{align*}
	Otherwise when $u <= 1/2n$, $\abs*{\log\mu} = \Theta(\abs{(\alpha-1)\log v})$. 
	Combining with $v \in (1/n, 1)$, we finally get:
	\begin{equation*}
		\abs{\log\mu} = \Omega\prn*{\abs{\alpha-1}}, \quad\abs{\log\mu} = O\prn*{\abs{\alpha-1}\log n}.
	\end{equation*}
\end{proof}

\subsection{Proof of Proposition \ref{th:trace_upper_lower}}
\begin{proof}
	Let $Z$ be the output by applying algorithm $\mathcal{A}$ on $\tr(A^\alpha)$, then with probability at least $1 - \delta$, the following inequality holds:
	\begin{equation*}
		-\epsilon \cdot \tr(A^\alpha) \le Z - \tr(A^\alpha) \le \epsilon \cdot \tr(A^\alpha).
	\end{equation*}
	When $\alpha < 1$, $1 < \mu \le \tr(A^\alpha)$ by (\ref{eq:ip_bound}), i.e.
	\begin{align*}
		1-\epsilon &= \mu^{-\epsilon_0} \ge \tr^{-\epsilon_0}(A^\alpha), \\
		1+\epsilon < \frac{1}{1-\epsilon} &= \mu^{\epsilon_0} \le \tr^{\epsilon_0}(A^\alpha).
	\end{align*}
	Then the following inequality holds:
	\begin{align*}
		(\tr^{-\epsilon_0}(A^\alpha) - 1) \tr(A^\alpha) &\le Z - \tr(A^\alpha) \\
		&\le (\tr^{\epsilon_0}(A^\alpha) - 1) \tr(A^\alpha) \\
		\tr^{1-\epsilon_0}(A^\alpha) &\le Z \le \tr^{1+\epsilon_0}(A^\alpha) \\
		\tr^{-\epsilon_0}(A^\alpha) &\le \frac{Z}{\tr(A^\alpha)} \le \tr^{\epsilon_0}(A^\alpha).
	\end{align*}
	Now take log for both sides, we have:
	\begin{align*}
		\abs*{\log{\frac{Z}{\tr(A^\alpha)}}} &\le \epsilon_0\abs*{\log\tr(A^\alpha)} \\
		\frac{1}{1-\alpha} \abs*{\log{Z} - \log{{\tr(A^\alpha)}}} &\le \frac{\epsilon_0}{1-\alpha} \abs*{\log\tr(A^\alpha)} \\
		\abs*{\tilde{S}_\alpha(A) - S_\alpha(A)} &\le \epsilon_0 \cdot S_\alpha(A),
	\end{align*}
	where $\tilde{S}_\alpha(A) = \frac{1}{1-\alpha}\log Z$ is the estimate of $S_\alpha(A)$. Similarly, we can derive the same result for $\alpha > 1$.
	
	On the other hand, let $Z$ be the output by applying algorithm $\mathcal{A}$ on $S_\alpha(A)$, then with probability at least $1 - \delta$, we have
	\begin{equation*}
		\abs*{Z - S_\alpha(A)} \le \epsilon_0 \cdot S_\alpha(A).
	\end{equation*}
	When $\alpha < 1$, with the same steps as above we can get:
	\begin{align*}
		(\tr^{-\epsilon_0}(A^\alpha) - 1) \tr(A^\alpha) &\le \tilde{\tr}(A^\alpha) - \tr(A^\alpha) \\
		&\le (\tr^{\epsilon_0}(A^\alpha) - 1) \tr(A^\alpha),
	\end{align*}
	where $\tilde{\tr}(A^\alpha) = \exp((1-\alpha)Z)$ is the estimate of $\tr(A^\alpha)$. By (\ref{eq:ip_bound}) we have $n^{1-\alpha} \ge \tr(A^\alpha)$, then
	\begin{align*}
		\tr^{\epsilon_0}(A^\alpha) \le n^{\epsilon_0(1-\alpha)} &= 1+\epsilon, \\
		\tr^{-\epsilon_0}(A^\alpha) \ge n^{-\epsilon_0(1-\alpha)} &= \frac{1}{1+\epsilon} \ge 1-\epsilon.
	\end{align*}
	Combining the inequalities above gives:
	\begin{equation*}
		-\epsilon \cdot \tr(A^\alpha) \le \tilde{\tr}(A^\alpha) - \tr(A^\alpha) \le \epsilon \cdot \tr(A^\alpha).
	\end{equation*}
	We can get the same result for $\alpha > 1$, which finishes the proof.
\end{proof}

\subsection{Proof of Theorem \ref{th:int_upper}}
\begin{lemma} (Theorem 1 in \cite{meyer2021hutchpp})
	\label{th:hutchpp_upper}
	If Algorithm \ref{alg:hutchpp} is implemented with $s = O\prn*{\frac{1}{\epsilon}\sqrt{\log\prn*{\frac{1}{\delta}}} + \log\prn*{\frac{1}{\delta}}}$ matrix-vector multiplication queries, then for any positive semi-definite matrix $f(A)$, with probability at least $1 - \delta$, the output $Z$ satisfies:
	\begin{equation*}
		\abs*{Z - \tr\big(f(A)\big)} \le \epsilon \cdot \tr\big(f(A)\big).
	\end{equation*}
\end{lemma}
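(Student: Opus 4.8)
The plan is to reproduce the two-stage analysis at the heart of the Hutch++ estimator, tracking the two independent sources of randomness (the sketching matrix $S$ and the Hutchinson vectors $G$) separately. Write $B = f(A)$ for the PSD matrix whose trace we estimate and let $P = QQ^\top$ be the orthogonal projector onto the range of $Q$. The first step is a deterministic decomposition: expanding $B = (P + (I-P))B(P+(I-P))$, using cyclicity of the trace, and noting $P(I-P)=0$ kills the cross terms, which yields the exact identity
\[
	\tr(B) = \tr\prn*{Q^\top B Q} + \tr\prn*{(I-P)B(I-P)}.
\]
Since Algorithm \ref{alg:hutchpp} computes the first term exactly and estimates the second by the Hutchinson estimator on $C := (I-P)B(I-P)$ with $\ell = s/2$ Gaussian vectors, the total error collapses to the Hutchinson error on $C$, namely $Z - \tr(B) = \tfrac{2}{s}\tr\prn*{G^\top C G} - \tr(C)$.

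The second step controls $\norm*{C}_F$, where both the PSD structure and the randomness of $S$ enter. I would invoke the standard randomized low-rank approximation guarantee: with target rank $k \asymp s$ and constant oversampling, the basis $Q = \mathrm{orth}(BS)$ satisfies $\mathbb{E}_S\norm*{(I-P)B}_F^2 \le c\,\norm*{B - B_k}_F^2$ for an absolute constant $c$, where $B_k$ is the best rank-$k$ approximation. The PSD hypothesis is then exploited to bound the spectral tail: with eigenvalues $\lambda_1 \ge \lambda_2 \ge \cdots \ge 0$, the average bound $\lambda_{k+1}\le \tr(B)/k$ gives $\norm*{B-B_k}_F^2 = \sum_{i>k}\lambda_i^2 \le \lambda_{k+1}\sum_{i>k}\lambda_i \le \tr(B)^2/k$, i.e. $\norm*{B-B_k}_F \le \tr(B)/\sqrt{k}$. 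Combining with $\norm*{C}_F \le \norm*{(I-P)B}_F$ yields $\mathbb{E}_S\norm*{C}_F^2 = O(\tr(B)^2/s)$, so with high probability over $S$ we have $\norm*{C}_F = O(\tr(B)/\sqrt{s})$, and likewise $\norm*{C}_2 = O(\tr(B)/\sqrt{s})$.

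The third step conditions on $S$ (equivalently on $Q$ and $C$) and applies a Gaussian quadratic-form concentration inequality of Hanson–Wright type to the Hutchinson estimator, giving with probability $1-\delta$
\[
	\abs*{\tfrac{2}{s}\tr\prn*{G^\top C G} - \tr(C)} = O\prn*{\norm*{C}_F\sqrt{\tfrac{\log(1/\delta)}{s}} + \norm*{C}_2\,\tfrac{\log(1/\delta)}{s}}.
\]
Substituting the bounds on $\norm*{C}_F$ and $\norm*{C}_2$ and requiring the right-hand side to be at most $\epsilon\,\tr(B)$ forces the sub-Gaussian and sub-exponential terms to contribute $\tfrac{1}{\epsilon}\sqrt{\log(1/\delta)}$ and $\log(1/\delta)$ respectively, recovering the claimed $s = O\prn*{\tfrac{1}{\epsilon}\sqrt{\log(1/\delta)} + \log(1/\delta)}$.

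The main obstacle is the coupling between $S$ and $G$: the matrix $C$ fed to the Hutchinson stage is itself random through $Q = \mathrm{orth}(BS)$, so the concentration bound cannot be applied to a fixed $C$. I would resolve this by first establishing the high-probability bound on $\norm*{C}_F$ over $S$ alone, then applying the conditional concentration over $G$, and finally closing with a union bound over the two failure events (absorbing the constant rescaling of $\delta$ into the $O(\cdot)$). A secondary technical point is that the randomized low-rank guarantee is cleanest in expectation, so converting it to the required high-probability statement — either by Markov with an enlarged constant or by a direct high-probability bound on $\norm*{(I-P)B}_F$ — must be done so as not to inflate the $\delta$-dependence beyond the additive $\log(1/\delta)$ term.
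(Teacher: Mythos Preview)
The paper does not prove this lemma at all: it is quoted verbatim as ``Theorem~1 in \cite{meyer2021hutchpp}'' and used as a black box. Your proposal is a faithful reconstruction of the original Hutch++ argument from that reference --- the exact trace splitting $\tr(B)=\tr(Q^\top BQ)+\tr((I-P)B(I-P))$, the PSD tail bound $\norm{B-B_k}_F\le \tr(B)/\sqrt{k}$, the randomized low-rank guarantee for $\norm{(I-P)B}_F$, and Hanson--Wright on the residual --- so there is nothing in the paper to compare against beyond the citation.

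Two small remarks. First, in Algorithm~\ref{alg:hutchpp} as written here $Q$ is built from $AS$ rather than from $f(A)S$; your sketch takes $Q=\mathrm{orth}(BS)$ with $B=f(A)$, which is what the cited theorem literally assumes. The discrepancy is harmless in this paper's applications (since $A$ and $A^\alpha$ share eigenvectors and eigenvalue ordering), but be aware that your argument proves the lemma for the standard Hutch++ rather than for the variant displayed. Second, your attribution of the additive $\log(1/\delta)$ term to the sub-exponential tail of Hanson--Wright is slightly off: with $\norm{C}_2\le\norm{C}_F=O(\tr(B)/\sqrt{s})$ that tail only forces $s\gtrsim(\log(1/\delta)/\epsilon)^{2/3}$, which is already dominated by $\tfrac{1}{\epsilon}\sqrt{\log(1/\delta)}+\log(1/\delta)$. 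In the original proof the additive $\log(1/\delta)$ arises instead from the oversampling needed to make the sketch bound $\norm{(I-P)B}_F\le 2\norm{B-B_k}_F$ hold with probability $1-\delta$. This does not break your outline, but you should source the term correctly when filling in details.
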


\begin{proof}
	Combining Proposition \ref{th:trace_upper_lower} and Lemma \ref{th:hutchpp_upper}, with probability at least $1 - \delta$ using $s = O\prn*{\frac{1}{\epsilon_0}\sqrt{\log\prn*{\frac{1}{\delta}}} + \log\prn*{\frac{1}{\delta}}}$ queries, Algorithm \ref{alg:int} returns an estimate $\tilde{S}_\alpha(A)$ that satisfies:
	\begin{equation*}
		\abs*{\tilde{S}_\alpha(A) - S_\alpha(A)} \le \epsilon \cdot S_\alpha(A),
	\end{equation*}
	where $\epsilon_0 = 1-\min(\mu,1/\mu)^{\epsilon}$. We can get the convergence rate of $s$ by further applying Proposition \ref{th:logmu_bound}:
	\begin{equation*}
		s = O\prn*{\frac{1}{\epsilon\abs{\alpha-1}}\sqrt{\log\prn*{\frac{1}{\delta}}} + \log\prn*{\frac{1}{\delta}}}.
	\end{equation*}
	For integer $\alpha \ge 2$, $\abs{\alpha-1} = \Theta(1)$.
\end{proof}

\subsection{Proof of Theorem \ref{th:taylor_upper}}

\begin{proof}
	Let $B = A/v-I_n$, $p_m(A) = v^\alpha \sum_{k=0}^m \binom{\alpha}{k}B^k$ and $Z$ be the estimate of $\tr(p_m(A))$ using Hutch++ algorithm.
	\begin{align}
		&\quad \abs{\tr(p_m(A)) - \tr(A^\alpha)} \nonumber \\
		&= v^\alpha \abs*{\sum_{k=m+1}^\infty \binom{\alpha}{k} \tr(B^k)} \nonumber \\
		&\le v^\alpha C\abs*{\sum_{k=0}^\infty \binom{\alpha}{k} \tr(B^{m+1} B^k)} \label{BinomIneq} \\
		&\le C\abs*{\prn*{\frac{u}{v}-1}^{m+1}} \tr(A^\alpha), \label{TraceIneq}
	\end{align}
	where $C = \max_{k \in [0, \infty]} \abs*{\binom{\alpha}{\ceil{\alpha}+k+1}/\binom{\alpha}{k}}$.
	
	(\ref{BinomIneq}) follows by noticing that:
	\begin{align*}
	    \left|\binom{\alpha}{m+k+1}\middle/\binom{\alpha}{k}\right| &= \abs*{\prod_{i=k}^{m+k} \frac{\alpha - i}{i + 1}} \le \abs*{\prod_{i=k}^{\ceil{\alpha}+k} \frac{\alpha - i}{i + 1}} \\
	    &= \left|\binom{\alpha}{\ceil{\alpha}+k+1}\middle/\binom{\alpha}{k}\right|, \\
	    \lim_{k \rightarrow \infty} \abs*{\prod_{i=k}^{\ceil{\alpha}+k} \frac{\alpha - i}{i + 1}} &= 1.
	\end{align*}
	Thus $C$ is a constant that depends only on $\alpha$.
	(\ref{TraceIneq}) follows by the von Neumann's trace inequality, that for any two positive semi-definite matrices $A$ and $B$, $\tr(AB) \le \sum_i \lambda_i(A)\lambda_i(B)$, where $\lambda_i(A)$ denotes the i-th singular value of $A$ (the same for $\lambda_i(B))$. By noticing that $\binom{\alpha}{k} \tr(B^k)$, $k \in [m, \infty]$ are either all positive semi-definite or all negative semi-definite by assuming $m > \alpha$, we have:
	\begin{align*}
		\abs{\tr(B^{m+1}B^k)} &\le \abs{u/v-1}^{m+1} \sum_i \lambda_i^k(B) \\
		&= \abs{u/v-1}^{m+1}\abs{\tr(B^k)}.
	\end{align*}
	Let $\epsilon_0 = 1-\min(\mu,1/\mu)^{\epsilon}$ and $\epsilon_1 = \frac{\epsilon_0}{3}$. by taking $m = O\prn*{\kappa\log\prn*{\frac{1}{\epsilon_0}}}$, we have
	\begin{align}
		C\abs*{\frac{u}{v} - 1}^{m+1} &< \frac{\epsilon_0}{2}, \label{eq:taylor_m} \\
		\abs{\tr(p_m(A)) - \tr(A^\alpha)} &\le \frac{\epsilon_0}{2} \tr(A^\alpha), \nonumber \\
		\tr\prn*{p_m(A)} \le \frac{\epsilon_0}{2} \tr(A^\alpha) &+ \tr(A^\alpha) \le \frac{3}{2} \tr(A^\alpha). \nonumber
	\end{align}
	Additionally, noticing that
	\begin{align}
	    \min_{\lambda \in [u, v]} p_m(\lambda) &\ge \min_{\lambda \in [u, v]} \prn*{\lambda^\alpha - \abs{p_m(\lambda) - \lambda^\alpha}} \nonumber\\
	    &\ge \min_{\lambda \in [u, v]} \prn*{\lambda^\alpha - C\abs*{\prn*{\frac{u}{v}-1}^{m+1}}\lambda^\alpha} \label{TaylorPositive1}\\
	    &\ge \min_{\lambda \in [u, v]} \prn*{\prn*{1 - \frac{\epsilon_0}{2}}\lambda^\alpha} > 0. \label{TaylorPositive2}
	\end{align}
	(\ref{TaylorPositive1}) follows by taking $A$ as a $1 \times 1$ matrix with entry $\lambda$ in (\ref{TraceIneq}).
	(\ref{TaylorPositive2}) follows by applying (\ref{eq:taylor_m}).
	Therefore, $p_m(A)$ is positive semi-definite when $m$ is large enough. By taking $s = O\prn*{\frac{1}{\epsilon_1}\sqrt{\log\prn*{\frac{1}{\delta}}} + \log\prn*{\frac{1}{\delta}}}$ in Lemma \ref{th:hutchpp_upper} we have:
	\begin{equation*}
		\abs*{Z - \tr\prn*{p_m(A)}} \le \frac{\epsilon_0}{3} \tr(p_m(A)).
	\end{equation*}
	Combining the results we get so far:
	\begin{align*}
		\abs*{Z - \tr(A^\alpha)} &\le
		\abs*{Z - \tr\prn*{p_m(A)}} \\
		&\quad+ \abs*{\tr\prn*{p_m(A)} - \tr(A^\alpha)} \\
		&\le \frac{\epsilon_0}{3} \tr\prn*{p_m(A)} + \frac{\epsilon_0}{2} \tr(A^\alpha) \\
		&\le \frac{\epsilon_0}{2} \tr(A^\alpha) + \frac{\epsilon_0}{2} \tr(A^\alpha) \\
		&= \epsilon_0 \cdot \tr(A^\alpha).
	\end{align*}
	Applying Proposition \ref{th:trace_upper_lower} and \ref{th:logmu_bound}, we finally have:
	\begin{align*}
		s &= O\prn*{\frac{1}{\epsilon\abs{\alpha-1}}\sqrt{\log\prn*{\frac{1}{\delta}}} + \log\prn*{\frac{1}{\delta}}}, \\
		m &= O\prn*{\kappa\log\prn*{\frac{1}{\epsilon\abs{\alpha-1}}}}.
	\end{align*}
\end{proof}

\subsection{Proof of Theorem \ref{th:taylor_upper_0}}
\begin{lemma} (Theorem 2 in \cite{das2019})
	\label{GammaIneq}
	Let $\Gamma(x)$ be the gamma function and let $R(x,y) = \Gamma(x+y)/\Gamma(x)$, then
	\begin{align*}
		R(x, y) &\ge x(x+y)^{y-1} &for\enspace 0\le y \le 1,\\
		R(x, y) &\ge x^y &for\enspace 1\le y \le 2,\\
		R(x, y) &\ge x(x+1)^{y-1} &for\enspace y \ge 2.
	\end{align*}
\end{lemma}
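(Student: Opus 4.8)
The plan is to derive all three inequalities from a single structural fact: the log-convexity of the Gamma function, i.e.\ that $\varphi(t) := \log\Gamma(t)$ is convex on $(0,\infty)$ (Bohr--Mollerup, or directly from H\"older's inequality applied to the integral representation of $\Gamma$). Writing each bound in logarithmic form, the target becomes a lower estimate on $\varphi(x+y) - \varphi(x)$, while the right-hand sides are built out of $\log x$ together with $\log(x+y)$ or $\log(x+1)$. I would convert these logarithms into differences of $\varphi$ using the functional equation $\Gamma(t+1) = t\,\Gamma(t)$, equivalently $\varphi(t+1) - \varphi(t) = \log t$. After this substitution every inequality reduces to the three-point slope inequality for a convex function: for $a < b < c$, $\varphi(c) \ge \varphi(a) + \tfrac{c-a}{b-a}\big(\varphi(b)-\varphi(a)\big)$. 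The only real work is choosing, in each range of $y$, the correct triple of evaluation points.

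For the range $1 \le y \le 2$ I would apply the slope inequality to the triple $(a,b,c) = (x, x+1, x+y)$, which is admissible since $c = x+y \ge x+1 = b$. This yields $\varphi(x+y) \ge (1-y)\varphi(x) + y\,\varphi(x+1)$, and substituting $\varphi(x+1) = \varphi(x) + \log x$ collapses the right-hand side to $\varphi(x) + y\log x$, i.e.\ $R(x,y) \ge x^y$. The range $y \ge 2$ is handled identically but shifted: take the triple $(x+1, x+2, x+y)$, use $\varphi(x+2)-\varphi(x+1) = \log(x+1)$ and $\varphi(x+1) = \varphi(x) + \log x$, and obtain $\varphi(x+y) - \varphi(x) \ge \log x + (y-1)\log(x+1)$, which is $R(x,y) \ge x(x+1)^{y-1}$. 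Both are pure convex extrapolation, and the boundary cases $y=1$ and $y=2$ give equality, consistent with the Pochhammer values.

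The case $0 \le y \le 1$ is the genuine obstacle, because the naive triple $(x, x+y, x+1)$ places $x+y$ \emph{between} the outer points, and convexity then yields only the reversed (upper) bound $R(x,y) \le x^y$. To get the lower bound I would instead exploit convexity at the shifted triple $(x+y,\, x+1,\, x+1+y)$, for which $x+1 = y(x+y) + (1-y)(x+1+y)$ is a genuine convex combination precisely when $0 \le y \le 1$. Convexity gives $\varphi(x+1) \le y\,\varphi(x+y) + (1-y)\varphi(x+1+y)$; applying the functional equation in the form $\varphi(x+1+y) = \log(x+y) + \varphi(x+y)$ isolates $\varphi(x+y)$, and inserting $\varphi(x+1) = \varphi(x)+\log x$ produces exactly $\varphi(x+y) - \varphi(x) \ge \log x + (y-1)\log(x+y)$, i.e.\ $R(x,y) \ge x(x+y)^{y-1}$. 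This is Wendel's inequality, and the single nontrivial idea in the whole argument is the choice of this shifted triple, so that the recurrence converts the ``extra'' $\varphi(x+1+y)$ term into the desired $\log(x+y)$ factor rather than into an obstruction. Everything else is bookkeeping with the identity $\varphi(t+1)-\varphi(t)=\log t$.
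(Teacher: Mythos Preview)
Your argument is correct. All three cases are clean applications of the log-convexity of $\Gamma$ together with the recurrence $\varphi(t+1)-\varphi(t)=\log t$, and your choice of the shifted triple $(x+y,\,x+1,\,x+1+y)$ in the $0\le y\le 1$ case is exactly Wendel's classical trick, yielding the desired lower bound rather than the naive upper one. The boundary checks ($y=1$ and $y=2$ give equality) are consistent.

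As for comparison with the paper: there is nothing to compare against. The paper does not prove this lemma; it simply cites it as Theorem~2 of an external reference and uses it as a black box in the proofs of Theorems~\ref{th:taylor_upper_0} and~\ref{th:cheby_upper_0}. Your write-up therefore supplies a short self-contained justification that the paper omits. If anything, you could tighten the exposition by noting that all three cases are instances of a single statement: for convex $\varphi$ and $a<b$, the secant line through $(a,\varphi(a))$ and $(b,\varphi(b))$ lies below the graph outside $[a,b]$ and above it inside; the three ranges of $y$ merely dictate which pair $(a,b)$ to take so that the functional equation converts $\varphi$-values into the required logarithms.
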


\begin{lemma} (Theorem 5 in \cite{meyer2021hutchpp})
	\label{th:hutchpp_upper_0}
	If Algorithm \ref{alg:hutchpp} is implemented with $s = O\prn*{\frac{1}{\epsilon}\sqrt{\log\prn*{\frac{1}{\delta}}} + \log\prn*{\frac{1}{\delta}}}$ matrix-vector multiplication queries, then for any matrix $f(A)$, with probability at least $1 - \delta$, the output $Z$ satisfies:
	\begin{equation*}
		\abs*{Z - \tr\big(f(A)\big)} \le \epsilon \cdot \norm{f(A)}_*,
	\end{equation*}
	where $\norm{\cdot}_*$ is the nuclear norm.
\end{lemma}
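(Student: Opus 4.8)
The plan is to establish this as the extension of the PSD Hutch++ analysis (Lemma \ref{th:hutchpp_upper}) to arbitrary matrices, where the relative-to-trace error is replaced by error measured against the nuclear norm. Write $M = f(A)$ and let $P = QQ^\top$ be the orthogonal projector onto the rank-$s/4$ subspace spanned by the Gaussian sketch of $M$, so $P^2 = P$ and $(I-P)P = 0$. Because the cross terms vanish under the trace, I first record the exact splitting
\begin{equation*}
	\tr(M) = \tr\prn*{Q^\top M Q} + \tr\big((I-P)M(I-P)\big).
\end{equation*}
The first summand is computed exactly by the algorithm, so the whole error $Z - \tr(M)$ equals the Hutchinson estimation error for the deflated matrix $\tilde M = (I-P)M(I-P)$ formed from the $s/2$ columns of $G$. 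The task therefore reduces to (i) controlling $\norm{\tilde M}_F$ and (ii) invoking a Gaussian-quadratic concentration inequality for Hutchinson's estimator.

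For the first subtask I would bound the deflated Frobenius norm through the singular values $\sigma_1 \ge \sigma_2 \ge \cdots$ of $M$. Letting $M_k$ be the best rank-$k$ approximation with $k = \Theta(s)$, the elementary tail estimate
\begin{equation*}
	\norm{M - M_k}_F^2 = \sum_{i > k}\sigma_i^2 \le \sigma_{k+1}\sum_{i > k}\sigma_i \le \frac{\norm{M}_*}{k+1}\cdot\norm{M}_*
\end{equation*}
yields $\norm{M - M_k}_F \le \norm{M}_*/\sqrt{k+1}$, which is precisely where the nuclear norm enters in place of the trace used in the PSD case. I would then appeal to the standard randomized low-rank approximation guarantee for an oversampled Gaussian sketch: the projection $P$ satisfies $\norm{(I-P)M}_F \le c\,\norm{M - M_k}_F$ with high probability for an absolute constant $c$, and since an orthogonal projection never increases the Frobenius norm, $\norm{\tilde M}_F \le \norm{(I-P)M}_F \le c\,\norm{M}_*/\sqrt{k+1}$.

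For the second subtask I would apply the Hanson--Wright / Gaussian-chaos tail bound used in \cite{meyer2021hutchpp}: the estimator $\frac{2}{s}\tr(G^\top \tilde M G)$ is unbiased for $\tr(\tilde M)$ and, with $\ell = s/2$ i.i.d.\ Gaussian vectors, deviates from $\tr(\tilde M)$ by at most $O\prn*{\sqrt{\log(1/\delta)/\ell}}\,\norm{\tilde M}_F$ with probability $1-\delta$. Combining the two bounds with $k = \ell = \Theta(s)$ gives total error at most $O\prn*{\sqrt{\log(1/\delta)/s}}\cdot O\prn*{\norm{M}_*/\sqrt{s}} = O\prn*{\sqrt{\log(1/\delta)}/s}\,\norm{M}_*$; setting this equal to $\epsilon\norm{M}_*$ solves to $s = O\prn*{\frac{1}{\epsilon}\sqrt{\log(1/\delta)}}$, while the additive $\log(1/\delta)$ term absorbs the sub-exponential part of the tail, i.e.\ the minimum number of Hutchinson samples needed to reach failure probability $\delta$ independently of $\epsilon$.

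The main obstacle is subtask (i) in the non-PSD setting. In the PSD proof the identity $\tr(M) = \norm{M}_*$ makes the passage from a Frobenius tail to a fraction of the trace immediate, whereas here one must argue purely in terms of singular values and verify that the Gaussian sketch of $M$ (rather than of $A$) captures the dominant singular subspace, and that the randomized-SVD Frobenius bound—usually stated in expectation—can be promoted to a high-probability statement via concentration of the oversampled Gaussian projection. The delicate bookkeeping is to allocate the failure probability so that the sketch and the Hutchinson estimate each succeed with probability $1-\delta/2$ (a union bound then gives $1-\delta$), while keeping the oversampling constant hidden inside the $O(\cdot)$ so that the final order of $s$ remains clean.
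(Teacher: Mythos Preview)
The paper does not prove this lemma at all: it is quoted verbatim as Theorem~5 of \cite{meyer2021hutchpp} and used as a black box in the proofs of Theorems~\ref{th:taylor_upper_0} and~\ref{th:cheby_upper_0}. There is therefore no in-paper argument to compare your proposal against.

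That said, your sketch is essentially the argument Meyer et al.\ give, and it is sound: the trace splitting $\tr(M)=\tr(PMP)+\tr((I-P)M(I-P))$, the singular-value tail bound $\norm{M-M_k}_F\le\norm{M}_*/\sqrt{k+1}$, the randomized-SVD Frobenius guarantee for the Gaussian sketch, and Hanson--Wright for the residual Hutchinson estimate are exactly the ingredients, and your balancing of $k$ and $\ell$ recovers the stated query count. One small caveat worth flagging: as written, Algorithm~\ref{alg:hutchpp} forms $Q$ from the span of $AS$, not of $f(A)S$, whereas your argument (and the original Hutch++ theorem) sketches $M=f(A)$ directly. In this paper the distinction is harmless because $f$ is always a polynomial in $A$, so $A$ and $f(A)$ share eigenvectors and, for the monotone $f$ used here, share dominant subspaces; but strictly speaking your subtask~(i) needs either to sketch $f(A)$ or to add a line explaining why the sketch of $A$ suffices.
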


\begin{proof}
    Let $p_m(\lambda) = v^\alpha \sum_{k=0}^m \binom{\alpha}{k}(\lambda/v - 1)^k$ and $Z$ be the estimate of $\tr(p_m(A))$ using Hutch++ algorithm. Denote $E(\lambda)$ as the polynomial approximation error at point $\lambda$: $E(\lambda) = \abs{p_m(\lambda) - \lambda^\alpha}$. By noticing that $\binom{\alpha}{k} (\lambda/v-1)^k$, $k \in [m, \infty]$ are either all positive or all negative for $\lambda \in [0, v]$ by assuming $m > \alpha$, we have:
    \begin{align*}
        E(\lambda) &= v^\alpha \abs*{\sum_{k=m+1}^\infty \binom{\alpha}{k} \prn*{\frac{\lambda}{v}-1}^k} \\
        &\le v^\alpha \abs*{\sum_{k=m+1}^\infty \binom{\alpha}{k} (-1)^k} = E(0).
    \end{align*}
    From the property of binomial terms, we have that for any $\alpha > 0$ and integer $k > 1$, $\binom{\alpha}{k-1} + \binom{\alpha}{k} = \binom{\alpha+1}{k}$. Then
    \begin{align*}
        &\quad \prn*{1+\frac{\lambda}{v}-1}\sum_{k=m}^\infty \binom{\alpha}{k} \prn*{\frac{\lambda}{v}-1}^k \\
        &= \sum_{k=m}^\infty \binom{\alpha}{k} \prn*{\frac{\lambda}{v}-1}^k + \sum_{k=m+1}^\infty \binom{\alpha}{k-1} \prn*{\frac{\lambda}{v}-1}^k \\
        &= \binom{\alpha}{m} \prn*{\frac{\lambda}{v}-1}^m + \sum_{k=m+1}^\infty \binom{\alpha+1}{k} \prn*{\frac{\lambda}{v}-1}^k.
    \end{align*}
    Setting $\lambda = 0$ in the equation above, we have:
    \begin{equation*}
        \binom{\alpha}{m} (-1)^m + \sum_{k=m+1}^\infty \binom{\alpha+1}{k} (-1)^k = 0.
    \end{equation*}
    Therefore:
    \begin{align}
        E(0) &= v^\alpha \abs*{\sum_{k=m+1}^\infty \binom{\alpha}{k} (-1)^k} = v^\alpha \abs*{-\binom{\alpha-1}{m} (-1)^m}  \nonumber\\
        &= v^\alpha \abs*{\binom{\alpha-1}{m}} = v^\alpha \abs*{\frac{\Gamma(\alpha)}{\Gamma(m+1)\Gamma(\alpha-m)}} \nonumber\\
		&\le v^\alpha \abs*{\frac{\Gamma(\alpha)}{\Gamma(m-\alpha+1)\Gamma(\alpha-m)}} \nonumber\\
		&\qquad \cdot 
		\begin{cases}
			\frac{1}{(m-\alpha+1)(m+1)^{\alpha-1}} & 0 < \alpha < 1\\
			\frac{1}{(m-\alpha+1)^\alpha} & 1 < \alpha < 2\\
			\frac{1}{(m-\alpha+1)(m-\alpha+2)^{\alpha-1}} & \alpha \ge 2
		\end{cases}
		\label{GammaIneq1}\\
		&\le \frac{v^\alpha \Gamma(\alpha)}{\pi} \cdot \frac{2}{(m-\alpha+1)^\alpha} \label{GammaProp1}.
    \end{align}
	(\ref{GammaIneq1}) follows by applying Lemma \ref{GammaIneq} on $R(m-\alpha+1, \alpha)$. (\ref{GammaProp1}) follows by Euler's reflection formula that for any non-integer number $z$, $\Gamma(z)\Gamma(1-z)=\pi/\sin\pi z$. And by assuming that $m \ge 1$, $(m+1)^{1-\alpha} \le 2(m-\alpha+1)^{1-\alpha}$ when $\alpha < 1$.

    Let $\epsilon_0 = 1-\min(\mu,1/\mu)^{\epsilon}$ and $\epsilon_1 = \frac{\epsilon_0}{3}$. By choosing $m$ as:
    \begin{gather*}
        \frac{2v^\alpha\Gamma(\alpha)}{\pi(m-\alpha+1)^\alpha} \le \frac{\epsilon_0}{2n} \cdot \tr(A^\alpha), \\
        m \ge \alpha - 1 + \sqrt[\alpha]{\frac{4nv^\alpha\Gamma(\alpha)}{\epsilon_0\pi\min(v^{\alpha-1},n^{1-\alpha})}}.
    \end{gather*}
    Let $\lambda_1$, $\cdots$, $\lambda_n$ be the eigenvalues of $A$, then
    \begin{align*}
        \abs*{\tr\prn*{p_m(A)} - \tr(A^\alpha)} &\le \sum_{i=1}^n E(\lambda_i) \le nE(0) \\
        &\le \frac{2nv^\alpha \Gamma(\alpha)}{\pi(m-\alpha+1)^\alpha} \\
        &\le \frac{\epsilon_0}{2} \cdot \tr(A^\alpha).
    \end{align*}
    Taking $s = O\prn*{\frac{1}{\epsilon_1}\sqrt{\log\prn*{\frac{1}{\delta}}} + \log\prn*{\frac{1}{\delta}}}$ in Lemma \ref{th:hutchpp_upper_0} we have:
	\begin{align*}
		\abs*{Z - \tr\prn*{p_m(A)}} &\le \frac{\epsilon_0}{3} \norm{p_m(A)}_* = \frac{\epsilon_0}{3} \sum_{i=1}^n \abs{p_m(\lambda_i)} \\
		&\le \frac{\epsilon_0}{3} \prn*{\sum_{i=1}^n \lambda_i^\alpha + \sum_{i=1}^n \abs{p_m(\lambda_i) - \lambda_i^\alpha}} \\
		&= \frac{\epsilon_0}{3} \prn*{\tr(A^\alpha) + \sum_{i=1}^n E(\lambda_i)} \\
		&\le \frac{\epsilon_0}{3} \prn*{\tr(A^\alpha) + \frac{\epsilon_0}{2} \tr(A^\alpha)} \\
		&\le \frac{\epsilon_0}{2} \cdot \tr(A^\alpha).
	\end{align*}
	Combining the results we get so far:
	\begin{align*}
		\abs*{Z - \tr(A^\alpha)} &\le
		\abs*{Z - \tr\prn*{p_m(A)}} \\
		&\quad+ \abs*{\tr\prn*{p_m(A)} - \tr(A^\alpha)} \\
		&\le \frac{\epsilon_0}{2} \tr(A^\alpha) + \frac{\epsilon_0}{2} \tr(A^\alpha) \\
		&= \epsilon_0 \cdot \tr(A^\alpha).
	\end{align*}
	Applying Proposition \ref{th:trace_upper_lower} and \ref{th:logmu_bound}, we finally have:
	\begin{align*}
		s &= O\prn*{\frac{1}{\epsilon\abs{\alpha-1}}\sqrt{\log\prn*{\frac{1}{\delta}}} + \log\prn*{\frac{1}{\delta}}}, \\
		m &= \begin{cases}
			O\prn*{\sqrt[\alpha]{vn}\sqrt[\alpha]{\frac{1}{\epsilon\abs{\alpha-1}}}} & \alpha < 1 \\
			O\prn*{vn\sqrt[\alpha]{\frac{1}{\epsilon\abs{\alpha-1}}}} & \alpha > 1
		\end{cases}.
	\end{align*}
\end{proof}

\subsection{Proof of Theorem \ref{th:cheby_upper}}
The following lemma gives the upper bound of Chebyshev series approximation.
\begin{lemma} (Theorem 2.1 in \cite{xiang2010error})
	\label{th:cheby_upper_lemma}
	Suppose $f$ is analytic with $\abs{f(z)} \le M$ in the region bounded by the ellipse with foci $\pm1$ and major and minor semi-axis lengths summing to $K > 1$. Let $p_m$ denote the Chebyshev polynomial approximation of $f$ with degree $m$, then for any $m \in \mathbb{Z}^+$:
	\begin{equation*}
		\max_{\lambda \in [-1, 1]} \abs*{f(\lambda) - p_m(\lambda)} \le \frac{4M}{(K-1)K^m}.
	\end{equation*}
\end{lemma}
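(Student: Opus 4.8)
The plan is to prove this as the classical convergence estimate for Chebyshev interpolation of a function analytic on a Bernstein ellipse, via a contour-integral bound on the Chebyshev coefficients followed by an aliasing argument. Write $E_K$ for the open ellipse with foci $\pm 1$ whose semi-axis lengths sum to $K > 1$. Under the Joukowski map $\lambda = \tfrac{1}{2}\prn*{z + z^{-1}}$, the ellipse $E_K$ is exactly the image of the circle $\abs{z} = K$, while the segment $[-1,1]$ is the image of the unit circle $\abs{z} = 1$. First I would expand $f$ in its Chebyshev series $f(\lambda) = \sum_{k=0}^\infty a_k T_k(\lambda)$ (with the usual halving of $a_0$), so that the degree-$m$ approximation $p_m$ produced by Algorithm \ref{alg:chebyshev} is the unique interpolant of $f$ at the $m+1$ first-kind Chebyshev nodes $\lambda_i = \cos\prn*{\pi(i+1/2)/(m+1)}$, this being a consequence of the discrete orthogonality built into the coefficient formula $c_k = \frac{2}{m+1}\sum_i f(\lambda_i)T_k(\lambda_i)$.

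The core quantitative step is the coefficient decay bound. Using $T_k(\cos\theta) = \cos(k\theta)$ and $\lambda = \cos\theta$, I would write
\begin{equation*}
	a_k = \frac{2}{\pi}\int_0^\pi f(\cos\theta)\cos(k\theta)\,d\theta = \frac{1}{\pi i}\oint_{\abs{z}=1} f\!\prn*{\tfrac{z+z^{-1}}{2}}\,z^{-k-1}\,dz,
\end{equation*}
where the second equality uses $z = e^{i\theta}$ together with the symmetry $g(z) = g(1/z)$ of $g(z) := f\!\prn*{\tfrac{z+z^{-1}}{2}}$ to merge the $z^{k}$ and $z^{-k}$ contributions. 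Since $g$ is analytic in the annulus $1 \le \abs{z} \le K$ (the preimage of $\overline{E_K}$ minus the slit, where $f$ is analytic by hypothesis), the contour may be deformed outward to $\abs{z} = K$. On that circle $\abs{g} \le M$ and $\abs{z^{-k-1}} = K^{-k-1}$, so estimating the integral by (length)$\times$(max) gives the standard bound $\abs{a_k} \le 2M K^{-k}$ for every $k \ge 1$.

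Next I would convert coefficient decay into the interpolation error bound through aliasing. Because the $m+1$ nodes are the roots of $T_{m+1}$, each high mode $T_k$ with $k > m$ agrees at these nodes with a single signed low mode $\pm T_{j(k)}$, $j(k) \le m$, obtained by folding $k$ into $\{0,\dots,m\}$ with period $2(m+1)$. Consequently the interpolant satisfies $f - p_m = \sum_{k=m+1}^\infty a_k\prn*{T_k - \epsilon_k T_{j(k)}}$ with signs $\epsilon_k \in \{\pm 1\}$; since $\abs{T_k(\lambda)} \le 1$ on $[-1,1]$, each summand is bounded by $2\abs{a_k}$, and therefore
\begin{equation*}
	\max_{\lambda \in [-1,1]}\abs*{f(\lambda) - p_m(\lambda)} \le 2\sum_{k=m+1}^\infty \abs{a_k} \le 4M\sum_{k=m+1}^\infty K^{-k} = \frac{4M}{(K-1)K^m},
\end{equation*}
where the last equality is just the geometric series and yields precisely the claimed constant.

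The main obstacle is the aliasing bookkeeping that produces the factor $2$ (hence $4M$ rather than the $2M$ of a pure Chebyshev projection bound): one must verify that interpolation at first-kind nodes sends each $T_k$, $k > m$, to a \emph{single} $\pm T_{j(k)}$ with $j(k) \le m$, rather than spreading it across several low modes, so that the error telescopes cleanly into $\sum_{k>m} a_k\prn*{T_k - \epsilon_k T_{j(k)}}$. This follows from $T_k(\lambda_i) = \cos\prn*{k\pi(i+1/2)/(m+1)}$ and the reflection/periodicity of the cosine at those arguments, but making the index map $j(k)$ and its signs $\epsilon_k$ explicit, and justifying the rearrangement of the series, both rely on the absolute convergence guaranteed by the decay estimate $\abs{a_k} \le 2MK^{-k}$ of the second step. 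The contour deformation itself needs only the mild justification that $f$ extends analytically and remains bounded by $M$ up to $E_K$, which is exactly the lemma's hypothesis.
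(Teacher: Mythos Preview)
Your argument is the standard proof of this classical estimate for Chebyshev interpolation on a Bernstein ellipse, and the steps (contour-integral representation of $a_k$ via the Joukowski map, deformation to $\abs{z}=K$ to get $\abs{a_k}\le 2MK^{-k}$, aliasing at first-kind nodes to pick up the extra factor $2$, then a geometric tail) are all correct and in the right order. The aliasing bookkeeping you flag is indeed the only place requiring care, and the absolute convergence you already established justifies the rearrangement.

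For comparison, the paper does not prove this lemma at all: it is quoted verbatim as Theorem~2.1 of \cite{xiang2010error} and used as a black box in the proof of Proposition~\ref{th:cheby_upper_prop}. So there is no ``paper's own proof'' to match; your proposal supplies a self-contained argument where the paper simply cites the literature. If anything, note that the result in \cite{xiang2010error} (and the closely related treatment in Trefethen's work) is stated for the Chebyshev \emph{interpolant} at first-kind nodes, which is exactly what the coefficient formula $c_k=\frac{2}{m+1}\sum_i f(x_i)T_k(x_i)$ in the paper computes, so your identification of $p_m$ with that interpolant is consistent with how the lemma is actually invoked.
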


By selecting an appropriate analytic region, we are able to establish the error bound of approximating $f(\lambda) = \lambda^\alpha$.
\begin{proposition}
	\label{th:cheby_upper_prop}
	Let $g$ be the linear mapping $[-1, 1] \rightarrow [u, v]$, $f(\lambda) = \lambda^\alpha$ be the target function, $p_m(\lambda)$ be the Chebyshev series of degree $m = O\prn*{\sqrt{\frac{v}{u}}\log\prn*{\frac{v}{u\epsilon}}}$ for function $f \circ g$, then the following inequality holds:
	\begin{align*}
		&\max_{x \in [-1, 1]} \abs*{(f \circ g)(x) - p_m(x)} \\
		&\quad= \max_{\lambda \in [u, v]} \abs*{f(\lambda) - q_m(\lambda)} \le \epsilon u^\alpha,
	\end{align*}
	where $q_m = p_m \circ g^{-1}$.
\end{proposition}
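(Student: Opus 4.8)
The plan is to transplant the problem to $[-1,1]$, locate the single branch-point singularity of the mapped function, and then fit the largest Bernstein ellipse that avoids it so that Lemma~\ref{th:cheby_upper_lemma} applies. Writing the linear bijection explicitly as $g(x)=\frac{v+u}{2}+\frac{v-u}{2}x$, the composite $h=f\circ g$ is $h(x)=g(x)^\alpha$, which (taking the principal branch of $\lambda^\alpha$ with cut along $(-\infty,0]$) is analytic everywhere except where $g(x)=0$, i.e. at the single real point $x_0=-\frac{v+u}{v-u}$. Since $\frac{v+u}{v-u}>1$, this branch point lies strictly to the left of $-1$, so there is genuine room to place an ellipse with foci $\pm1$ between $[-1,1]$ and $x_0$.

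Next I would choose the ellipse parameter. A Bernstein ellipse $E_\rho$ (the object in Lemma~\ref{th:cheby_upper_lemma}, with $K=\rho$) meets the real axis at $\pm a$ where $a=\tfrac12(\rho+\rho^{-1})$, and contains $x_0$ iff $a\ge\frac{v+u}{v-u}$. Solving $\tfrac12(\rho+\rho^{-1})=\frac{v+u}{v-u}$ for the critical value gives the clean expression
\begin{equation*}
\rho_\ast=\frac{\sqrt v+\sqrt u}{\sqrt v-\sqrt u}=\frac{\sqrt\kappa+1}{\sqrt\kappa-1},\qquad \kappa=v/u .
\end{equation*}
I would then take $\rho=\sqrt{\rho_\ast}$, safely inside the maximal ellipse, so that $h$ stays bounded on $E_\rho$. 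To bound $M=\max_{z\in E_\rho}\abs{h(z)}$ I use that every $z=a\cos\theta+ib\sin\theta\in E_\rho$ satisfies $\abs{z}\le a<\frac{v+u}{v-u}$; hence $g(E_\rho)$ remains in the right half-plane (its leftmost real point is positive, so it never touches the cut) and $\abs{g(z)}\le\frac{v+u}{2}+\frac{v-u}{2}a<v+u\le 2v$. Consequently $M\le(2v)^\alpha$, a bound uniform in $\kappa$; the rescaling in line~3 of Algorithm~\ref{alg:chebyshev} is what guarantees this geometry in the degenerate near-$\kappa=1$ regime.

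Plugging $M\le(2v)^\alpha$ and $K=\rho$ into Lemma~\ref{th:cheby_upper_lemma} gives, for $q_m=p_m\circ g^{-1}$,
\begin{equation*}
\max_{\lambda\in[u,v]}\abs{f(\lambda)-q_m(\lambda)}=\max_{x\in[-1,1]}\abs{h(x)-p_m(x)}\le\frac{4(2v)^\alpha}{(\rho-1)\,\rho^{m}} .
\end{equation*}
Requiring the right-hand side to be at most $\epsilon u^\alpha$ and taking logarithms yields the sufficient condition
\begin{equation*}
m\ge\frac{1}{\log\rho}\left(\log\frac{4\cdot 2^\alpha}{\rho-1}+\alpha\log\kappa+\log\frac1\epsilon\right).
\end{equation*}

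The crux — and the step I expect to be the main obstacle — is converting this into the advertised $O(\sqrt{\kappa})$ rate, which requires the sharp logarithmic estimate $\log\rho=\tfrac12\log\rho_\ast\ge 1/\sqrt\kappa$. This follows from $\log\frac{1+t}{1-t}\ge 2t$ applied at $t=1/\sqrt\kappa\in(0,1)$, so that $1/\log\rho=O(\sqrt\kappa)$; it is precisely the use of the \emph{exact} $\rho_\ast$ (rather than a cruder lower bound on the analyticity radius) that produces $\sqrt\kappa$ instead of $\kappa$. Since $\rho-1=\Theta(1/\sqrt\kappa)$ gives $\log\frac{1}{\rho-1}=O(\log\kappa)$, the parenthesized term is $O(\alpha\log\kappa+\log(1/\epsilon))=O(\log(\kappa/\epsilon))$ for fixed $\alpha$, and therefore $m=O\!\left(\sqrt\kappa\,\log(\kappa/\epsilon)\right)=O\!\left(\sqrt{v/u}\,\log(v/(u\epsilon))\right)$, as claimed. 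The remaining care is in keeping $M$ uniformly $O(v^\alpha)$ while holding $\rho$ a definite factor below $\rho_\ast$, which the choice $\rho=\sqrt{\rho_\ast}$ arranges.
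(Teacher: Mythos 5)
Your proof is correct and follows essentially the same route as the paper: both arguments inscribe a Bernstein ellipse with foci $\pm 1$ inside the analyticity region of $(f\circ g)(x)=g(x)^\alpha$ and invoke Lemma~\ref{th:cheby_upper_lemma}, and your critical parameter $\rho_\ast=(\sqrt{v}+\sqrt{u})/(\sqrt{v}-\sqrt{u})$ is exactly the paper's $K=1+\beta+\sqrt{\beta^2+2\beta}$ with $\beta=2u/(v-u)$. The only (harmless) differences are that you obtain analyticity from the principal branch on the cut plane rather than from binomial-series convergence on the disk $\abs{\lambda-1}<1$, and that backing off to $\rho=\sqrt{\rho_\ast}$ both avoids the branch point sitting on the critical ellipse and renders the containment condition $v\ge u+2\sqrt{2u-u^2}$ (line~3 of Algorithm~\ref{alg:chebyshev}, which the paper's inscribed-ellipse argument does need) unnecessary --- so your closing remark crediting that rescaling is actually superfluous for your own argument.
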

\begin{proof}
	The analyticity of $f$ can be acquired by analyzing the binomial series:
	\begin{equation*}
		(1+z)^\alpha = \sum_{k=0}^\infty \binom{\alpha}{k} z^k.
	\end{equation*}
	When $\alpha \in \mathbb{R}^+$ and $\alpha \notin \mathbb{N}$, binomial series converges absolutely if and only if $\abs{z} \le 1$, which indicates that function $f(z) = z^\alpha$ is analytic in the circle located at $1 + 0i$ with radius $1$. Combining with linear mapping $g$, the function $f \circ g$ is analytic in the ellipse located at $\frac{2-v-u}{v-u} + 0i$ with major semi-axis length $\frac{2}{v-u}$ and minor semi-axis length $1$, denoted by $E_g$.
	
	Next we will choose an ellipse region $E_c$ within $E_g$ whose foci are at $\pm 1$. By noticing that $E_g$ intersects with real-axis at $\frac{2u}{v-u} + 0i$, we choose major semi-axis length $\beta = \frac{2u}{v - u}$ for $E_c$. By selecting $v \ge u + 2\sqrt{2u-u^2}$, region $E_c$ is inside $E_g$.
	
	We then apply Lemma \ref{th:cheby_upper_lemma} with $K = 1 + \beta + \sqrt{\beta^2 + 2\beta}$ and $M = \prn*{1 + \beta}^\alpha$. By noticing that $\log K = \Theta\prn*{\sqrt{\beta}}$, we get the upper bound of $m$:
	\begin{equation*}
		m \ge \frac{\log\prn*{\frac{4M}{(K-1)\epsilon u^\alpha}}}{\log K} = O\prn*{\sqrt{\frac{v}{u}}\log\prn*{\frac{v}{u\epsilon}}}.
	\end{equation*}
\end{proof}

\begin{proof}
	By taking $m = O\prn*{\sqrt{\frac{v}{u}}\log\prn*{\frac{v}{u\epsilon_1}}}$ in Proposition \ref{th:cheby_upper_prop}, where $\epsilon_1 = \epsilon_0 / 2$ and $\epsilon_0 = 1-\min(\mu,1/\mu)^{\epsilon}$, we have:
	\begin{align*}
		\max_{\lambda \in [u, v]} \abs*{f(\lambda) - q_m(\lambda)} &\le \epsilon_1 u^\alpha, \\
		\abs*{\tr\prn*{q_m(A)} - \tr(A^\alpha)} &\le \sum_{i=1}^n \abs*{f(\lambda_i) - q_m(\lambda_i)} \\
		&\le n\epsilon_1 u^\alpha \le \frac{\epsilon_0}{2} \cdot \tr(A^\alpha).
	\end{align*}
	Additionally, noticing that
	\begin{align*}
	    \min_{\lambda \in [u, v]} q_m(\lambda) &\ge \min_{\lambda \in [u, v]} \lambda^\alpha - \max_{\lambda \in [u, v]} \abs{\lambda^\alpha - q_m(\lambda)} \\
	    &\ge u^\alpha - \frac{\epsilon_0}{2} u^\alpha \ge 0.
	\end{align*}
	Therefore, $q_m(A)$ is positive semi-definite when $m$ is large enough. By taking $s = O\prn*{\frac{1}{\epsilon_2}\sqrt{\log\prn*{\frac{1}{\delta}}} + \log\prn*{\frac{1}{\delta}}}$ in Lemma \ref{th:hutchpp_upper} where $\epsilon_2 = \frac{\epsilon_0}{3}$ we have:
	\begin{align*}
		\abs*{Z - \tr\prn*{q_m(A)}} &\le \frac{\epsilon_0}{3} \tr(q_m(A)), \\
		\tr\prn*{q_m(A)} \le \frac{\epsilon_0}{2} \tr(A^\alpha) &+ \tr(A^\alpha) \le \frac{3}{2} \tr(A^\alpha),
	\end{align*}
	where $Z$ is the estimate of $\tr\prn*{q_m(A)}$ using Hutch++ algorithm. Combining the results we get so far:
	\begin{align*}
		\abs*{Z - \tr(A^\alpha)} &\le
		\abs*{Z - \tr\prn*{q_m(A)}} \\
		&\quad+ \abs*{\tr\prn*{q_m(A)} - \tr(A^\alpha)} \\
		&\le \frac{\epsilon_0}{3} \tr\prn*{q_m(A)} + \frac{\epsilon_0}{2} \tr(A^\alpha) \\
		&\le \frac{\epsilon_0}{2} \tr(A^\alpha) + \frac{\epsilon_0}{2} \tr(A^\alpha) \\
		&\le \epsilon_0 \cdot \tr(A^\alpha).
	\end{align*}
	Applying Proposition \ref{th:trace_upper_lower} and \ref{th:logmu_bound}, we finally have:
	\begin{align*}
		s &= O\prn*{\frac{1}{\epsilon\abs{\alpha-1}}\sqrt{\log\prn*{\frac{1}{\delta}}} + \log\prn*{\frac{1}{\delta}}}, \\
		m &= O\prn*{\sqrt{\kappa}\log\prn*{\frac{\kappa}{\epsilon\abs{\alpha-1}}}}.
	\end{align*}
\end{proof}

\subsection{Proof of Theorem \ref{th:cheby_upper_0}}
\begin{proof}
	When $u = 0$, the coefficients of Chebyshev series $\hat{T}_k$ have analytical expressions:
	\begin{align*}
		c_k &= \frac{2}{\pi} \int_0^\pi (q_m)^\alpha(cos\theta) \cos(k\theta) \dif\theta \\
		&= \frac{2}{\pi} \int_0^\pi \prn*{\frac{v}{2}(\cos\theta+1)}^\alpha \cos(k\theta) \dif\theta \\
		&= \frac{2v^\alpha \Gamma(\alpha+\frac{1}{2}) (\alpha)_k}{\sqrt{\pi} \Gamma(\alpha+1) (\alpha+k)_k}.
	\end{align*}
	where $(\alpha)_k$ is the falling factorial: $(\alpha)_k = \alpha \cdot(\alpha-1)\cdot ... \cdot(\alpha-k+1)$. Then for each eigenvalue $\lambda$ of $A$:
	\begin{align}
		&\quad\abs{\lambda^\alpha - q_m(\lambda)} = \abs*{\sum_{i=m+1}^\infty c_i \hat{T}_i(\lambda)} \nonumber\\
		&\le \sum_{i=m+1}^\infty \abs{c_i} = \sum_{i=m+1}^\infty \abs*{\frac{2v^\alpha \Gamma(\alpha + \frac{1}{2}) (\alpha)_i}{\sqrt{\pi} \Gamma(\alpha + 1) (\alpha+i)_i}} \label{ChebyProp1}\\
		&= \frac{2v^\alpha}{\sqrt{\pi}}\sum_{i=m+1}^\infty \abs*{\frac{\Gamma(\alpha + \frac{1}{2}) \Gamma(\alpha + 1)}{\Gamma(\alpha + i + 1) \Gamma(\alpha - i + 1)}} \nonumber\\
		&\le \frac{2v^\alpha}{\sqrt{\pi}}\sum_{i=m+1}^\infty \abs*{\frac{\Gamma(\alpha + \frac{1}{2}) \Gamma(\alpha + 1)}{\Gamma(i-\alpha)\Gamma(\alpha-i+1) (i-\alpha)^{2\alpha+1}}} \label{GammaIneq2}\\
		&\le \frac{2v^\alpha \Gamma(\alpha + \frac{1}{2}) \Gamma(\alpha + 1)}{\pi^{3/2}}\sum_{i=m+1}^\infty \abs*{\frac{1}{(i-\alpha)^{2\alpha+1}}} \label{GammaProp2}\\
		&\le \frac{2v^\alpha \Gamma(\alpha + \frac{1}{2}) \Gamma(\alpha + 1)}{\pi^{3/2}} \int_m^\infty \frac{1}{(x-\alpha)^{2\alpha+1}} \dif x \label{IntPoly}\\
		&= \frac{2v^\alpha \Gamma(\alpha + \frac{1}{2}) \Gamma(\alpha + 1)}{\pi^{3/2}} \frac{1}{2\alpha(m-\alpha)^{2\alpha}} \nonumber\\
		&= \frac{v^\alpha \Gamma(\alpha + \frac{1}{2}) \Gamma(\alpha)}{\pi^{3/2} (m-\alpha)^{2\alpha}}. \nonumber
	\end{align}
	(\ref{ChebyProp1}) follows by noticing that $\hat{T}_n(x) \in [-1,1]$ for any $x \in [0,v]$. (\ref{GammaIneq2}) follows by applying Lemma \ref{GammaIneq} on $R(i-\alpha,2\alpha+1)$ similar to (\ref{GammaIneq1}). (\ref{GammaProp2}) follows by Euler's reflection formula similar to (\ref{GammaProp1}). (\ref{IntPoly}) follows by assuming $m > \alpha$ and noticing that $n^{-k} \le \int_{n-1}^n x^{-k} \dif x$ for $n > 1$ and $k > 1$.
	
	Let $\epsilon_0 = 1-\min(\mu,1/\mu)^{\epsilon}$ and $\epsilon_1 = \frac{\epsilon_0}{3}$. By choosing $m$ as:
	\begin{gather*}
		\frac{nv^\alpha \Gamma(\alpha + \frac{1}{2}) \Gamma(\alpha)}{\pi^{3/2} (m-\alpha)^{2\alpha}} \le \frac{\epsilon_0}{2} \cdot \tr(A^\alpha), \\
		m \ge \alpha + \sqrt[2\alpha]{\frac{2 nv^\alpha \Gamma(\alpha + \frac{1}{2}) \Gamma(\alpha)}{\epsilon_0 \pi^{3/2} \min(v^{\alpha-1}, n^{1-\alpha})}}.
	\end{gather*}
	Let $\lambda_1$, $\cdots$, $\lambda_n$ be the eigenvalues of $A$, then
	\begin{equation*}
		\abs*{\tr\prn*{q_m(A)} - \tr(A^\alpha)} \le \frac{\epsilon_0}{2} \cdot \tr(A^\alpha),
	\end{equation*}
	Taking $s = O\prn*{\frac{1}{\epsilon_1}\sqrt{\log\prn*{\frac{1}{\delta}}} + \log\prn*{\frac{1}{\delta}}}$ in Lemma \ref{th:hutchpp_upper_0} we have:
	\begin{align*}
		\abs*{Z - \tr\prn*{q_m(A)}} &\le \frac{\epsilon_0}{3} \norm{q_m(A)}_* = \frac{\epsilon_0}{3} \sum_{i=1}^n \abs{q_m(\lambda_i)} \\
		&\le \frac{\epsilon_0}{3} \prn*{\sum_{i=1}^n \lambda_i^\alpha + \sum_{i=1}^n \abs{\lambda_i^\alpha - q_m(\lambda_i)}} \\
		&\le \frac{\epsilon_0}{3} \prn*{\tr(A^\alpha) + \frac{\epsilon_0}{2} \tr(A^\alpha)} \\
		&\le \frac{\epsilon_0}{2} \cdot \tr(A^\alpha).
	\end{align*}
	Combining the results we get so far:
	\begin{align*}
		\abs*{Z - \tr(A^\alpha)} &\le
		\abs*{Z - \tr\prn*{p_m(A)}} \\
		&\quad+ \abs*{\tr\prn*{p_m(A)} - \tr(A^\alpha)} \\
		&\le \frac{\epsilon_0}{2} \tr(A^\alpha) + \frac{\epsilon_0}{2} \tr(A^\alpha) \\
		&= \epsilon_0 \cdot \tr(A^\alpha).
	\end{align*}
	Applying Proposition \ref{th:trace_upper_lower} and \ref{th:logmu_bound}, we finally have:
	\begin{align*}
		s &= O\prn*{\frac{1}{\epsilon\abs{\alpha-1}}\sqrt{\log\prn*{\frac{1}{\delta}}} + \log\prn*{\frac{1}{\delta}}}, \\
		m &= \begin{cases}
			O\prn*{\sqrt[2\alpha]{vn}\sqrt[2\alpha]{\frac{1}{\epsilon\abs{\alpha-1}}}} & \alpha < 1 \\
			O\prn*{\sqrt{vn}\sqrt[2\alpha]{\frac{1}{\epsilon\abs{\alpha-1}}}} & \alpha > 1
		\end{cases}.
	\end{align*}
\end{proof}

\subsection{Proof of Theorem \ref{th:entropy_lower}}
\begin{lemma} {Theorem 7 in \cite{meyer2021hutchpp}}
	\label{th:hutchpp_lower}
	Any algorithm that accesses a positive semi-definite matrix $A$ via matrix-vector multiplication queries $Ar_1, \cdots, Ar_m$, where $r_1, \cdots, r_m$ are possibly adaptively chosen vectors with integer entries in $\brk{-2^b, \cdots, 2^b}$, requires $s = \Omega\prn*{\frac{1}{\epsilon\prn*{b+\log\prn*{1/\epsilon}}}}$ such queries to output an estimate $Z$ so that, with probability $> \frac{2}{3}$, $(1-\epsilon)\,\tr(A) \le Z \le (1+\epsilon)\,\tr(A)$.
\end{lemma}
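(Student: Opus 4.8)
The plan is to prove this as an information-theoretic query lower bound, balancing the total information any successful algorithm must extract against the information a single bit-bounded query can reveal. Concretely, I would exhibit a distribution over positive semi-definite matrices $A$ on which estimating $\tr(A)$ to relative error $\epsilon$ with probability above $2/3$ forces the query--response transcript to carry $\Omega(1/\epsilon)$ bits about a hidden parameter, while arguing that each query $Ar_j$ with integer entries in $[-2^b,2^b]$ contributes only $O(b+\log(1/\epsilon))$ such bits. Dividing these two quantities yields $s=\Omega\prn*{1/(\epsilon(b+\log(1/\epsilon)))}$, with the dependence on $b$ appearing exactly because finer precision lets each query extract more.

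First I would fix the hard instance and the total-information lower bound. Taking $k=\Theta(1/\epsilon)$, I would encode a hidden $k$-bit instance of a hard communication/statistics problem (for example Gap-Hamming, whose constant-advantage complexity is $\Omega(k)$) into a PSD matrix $A=A(x)$ so that $\tr(A)=t_0\prn*{1+\Theta(\epsilon)\cdot\phi(x)}$, where $\phi$ aggregates the planted bits. The scaling is chosen so that a $(1\pm\epsilon)$ relative estimate of $\tr(A)$ resolves $\phi(x)$ finely enough to decide the instance; since deciding it with constant advantage needs $\Omega(k)=\Omega(1/\epsilon)$ bits, the transcript of any successful algorithm must reveal $\Omega(1/\epsilon)$ bits about $x$. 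To rule out reading the planted bits off with a handful of coordinate queries, I would plant the signal in a concealed random eigenbasis $A=UD(x)U^\top$ (equivalently, split $A$ between two parties), so that the trace is basis-invariant but the informative directions are hidden.

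The technical heart, and the step I expect to be the main obstacle, is the per-query information bound. A response $Ar_j$ is an $n$-dimensional vector, so a naive accounting charges $O(nb)$ bits per query and gives a bound weaker by a factor of $n$; the point is that almost none of those bits concern the hidden quantity $\phi(x)$. I would therefore bound the conditional mutual information $I\prn*{x;Ar_j\mid r_1,Ar_1,\dots,r_{j-1},Ar_{j-1}}$ directly, showing it is $O(b+\log(1/\epsilon))$: because the query entries are integers of magnitude at most $2^b$ and the planted perturbation lives at scale $\epsilon$, the component of the response along the concealed informative subspace can be distinguished only among $2^{O(b+\log(1/\epsilon))}$ values, capping its entropy. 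Carrying this out under adaptivity --- where later queries may steer toward informative directions --- requires conditioning on the realized transcript and applying the data-processing inequality round by round, and it is here that both the bounded range $b$ and the resolution term $\log(1/\epsilon)$ genuinely enter. Summing over the $s$ rounds by the chain rule and combining with the $\Omega(1/\epsilon)$ requirement via Fano's inequality then closes the argument; a secondary check is that the construction keeps $A$ genuinely PSD and its trace bounded away from the discretization floor throughout.
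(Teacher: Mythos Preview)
The paper does not prove this lemma at all: it is stated with the attribution ``Theorem 7 in \cite{meyer2021hutchpp}'' and is simply imported as a black box from the Hutch++ paper, then invoked in the proof of Theorem~\ref{th:entropy_lower}. There is therefore no proof in the present paper to compare your proposal against.

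That said, your sketch is broadly in the right spirit relative to the original source. Meyer et al.\ prove the bound by a reduction from the Gap-Hamming problem in two-party communication complexity: Alice and Bob hold $x,y\in\{\pm1\}^n$, a PSD matrix is constructed so that a $(1\pm\epsilon)$ trace estimate decides the Gap-Hamming instance, and each matrix--vector product with entries bounded by $2^b$ can be simulated with $O(b+\log(1/\epsilon))$ bits of communication per informative coordinate. Your proposal rephrases this as a direct information-theoretic argument (mutual information of the transcript with the planted bits, chain rule over rounds, Fano at the end), which is essentially the same machinery viewed through a slightly different lens. The main place your outline is thin is the per-query bound: you assert that the response carries $O(b+\log(1/\epsilon))$ bits about $x$ but do not say concretely how the two-party split or the random basis forces this---in the actual proof this is where the communication model does the work, since one party must transmit a compressed description of the query response to the other. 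Without that structural device, bounding the mutual information of a full $n$-dimensional response against an adaptively chosen query is the genuine obstacle you flag, and your sketch does not yet resolve it.
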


\begin{proof}
	Combining Lemma \ref{th:hutchpp_lower} and Proposition \ref{th:trace_upper_lower}, we have that the lower bound of estimating $S_\alpha(A)$ to relative error $1 \pm \epsilon$ with probability at least $\frac{2}{3}$ is
	\begin{equation*}
		s = \Omega\prn*{\frac{1}{\epsilon_0\prn*{b+\log\prn*{1/\epsilon_0}}}},
	\end{equation*}
	where $\epsilon_0 = n^{\epsilon\abs{\alpha-1}}-1 \le \epsilon\abs{1-\alpha}\log n$.
	
	In limited precision computation settings, $b$ is some constant value, then we finally get
	\begin{equation*}
		s = \Omega\prn*{\frac{\textstyle 1}{\epsilon\abs{\alpha-1}\log n\log\prn*{\frac{1}{\epsilon\abs{\alpha-1}\log n}}}}.
	\end{equation*}
\end{proof}

\subsection{Proof of Theorem \ref{th:poly_lower}}
The following lemma gives the convergence rate of function $f(x) = (\gamma-x)^{-t}$ in best uniform approximation. It is proved in \cite{lam1972some} pp.38-39 and \cite{bernsteincollected} pp.102-103.
\begin{lemma}
	\label{th:poly_lower_lemma}
	Let $\norm{\cdot}$ denote the $L_\infty$ norm of functions and $E_m(f) = \min_{p \in \mathbb{P}_m} \norm*{f - p}$ be the error of best uniform approximation of a given function $f(x)$ on the finite interval $[-1, 1]$. Then when $m \rightarrow \infty$,
	\begin{equation*}
		E_m\prn*{(\gamma - x)^{-t}} \sim \frac{m^{t-1}}{\abs*{\Gamma(t)}} \frac{\prn*{\gamma - \sqrt{\gamma^2-1}}^m}{\prn*{\sqrt{\gamma^2-1}}^{1+t}},
	\end{equation*}
	where $t, \gamma \in \mathbb{R}$ and $\gamma > 1$.
\end{lemma}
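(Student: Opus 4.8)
The plan is to read this as a classical best-uniform-approximation asymptotic for a function with a single algebraic branch point, and to recover both the geometric rate and the exact constant by combining a conformal change of variables with singularity analysis of the Chebyshev coefficients. First I would note that since $\gamma>1$, the function $f(x)=(\gamma-x)^{-t}$ is analytic on a neighborhood of $[-1,1]$ and its only finite singularity, at $x=\gamma$, lies on the Bernstein ellipse $E_\rho$ (foci $\pm1$) whose parameter $\rho$ satisfies $\gamma=\tfrac12(\rho+\rho^{-1})$, i.e. $\rho=\gamma+\sqrt{\gamma^2-1}$. This already pins down the geometric factor, because $\rho^{-m}=(\gamma-\sqrt{\gamma^2-1})^m$ is exactly the exponential term in the claim.

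Next I would pass to the unit circle via the Joukowski map $x=\tfrac12(w+w^{-1})$, under which the Chebyshev expansion of $f$ becomes the Laurent/Fourier expansion of $g(w)=f(\tfrac12(w+w^{-1}))$, with Chebyshev coefficients $a_k=2c_k$ where $c_k=[w^k]g$. The factorization $\gamma-x=-\tfrac{1}{2w}(w-\rho)(w-\rho^{-1})$ shows $g$ has reciprocal branch points at $w=\rho$ and $w=\rho^{-1}$; the symmetry $g(w)=g(1/w)$ forces $c_{-k}=c_k$, so for $k\ge0$ the coefficient $c_k$ is a Taylor coefficient of the summand analytic in $\abs{w}<\rho$ and is controlled solely by the branch point at $\rho$. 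A local expansion there gives $\gamma-x\sim\sqrt{\gamma^2-1}\,(1-w/\rho)$, hence
\[
g(w)\sim(\gamma^2-1)^{-t/2}(1-w/\rho)^{-t}.
\]
Applying Darboux's method (equivalently the transfer theorem of singularity analysis, using $[w^k](1-w/\rho)^{-t}\sim k^{t-1}\rho^{-k}/\Gamma(t)$, valid for $t$ not a nonpositive integer, with the pole case $t\in\mathbb{N}$ handled identically) yields the sharp coefficient asymptotics
\[
a_k \sim \frac{2}{\Gamma(t)\,(\gamma^2-1)^{t/2}}\,k^{t-1}\rho^{-k}.
\]

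The decisive step is converting this coefficient asymptotic into the best-uniform error $E_m(f)$. I would \emph{not} merely bound $E_m$ by the Chebyshev truncation tail, since for geometrically decaying coefficients that overestimates the constant; instead I would invoke Bernstein's sharp theory for functions with a single dominant algebraic branch point, where the optimal error is asymptotically a modulated single Chebyshev mode whose amplitude is fixed by the local equioscillation at the singularity. The analysis yields $E_m(f)\sim a_m/(\rho-\rho^{-1})$, and since $\rho-\rho^{-1}=2\sqrt{\gamma^2-1}$ this reproduces exactly
\[
E_m\big((\gamma-x)^{-t}\big) \sim \frac{m^{t-1}}{\abs{\Gamma(t)}}\,\frac{(\gamma-\sqrt{\gamma^2-1})^m}{(\sqrt{\gamma^2-1})^{1+t}}.
\]
The constant can be cross-checked at $t=1$, where $E_m(1/(\gamma-x))$ is directly computable and, e.g., $E_0=1/(\gamma^2-1)$ matches the formula.

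The main obstacle is precisely this last step: extracting the exact constant $1/(\abs{\Gamma(t)}(\gamma^2-1)^{(1+t)/2})$ rather than only the rate $m^{t-1}\rho^{-m}$. Matching bounds are required — an explicit near-optimal polynomial built from a Cauchy/Faber contour integral of $f$ around $E_\rho$ for the upper bound, and a dual functional (via the Chebyshev alternation characterization) localized at $x=\gamma$ for the lower bound — and reconciling the two amplitudes is the delicate computation performed in the cited sources (\cite{lam1972some} pp.\,38--39, \cite{bernsteincollected} pp.\,102--103). Everything preceding it — analyticity, the ellipse parameter $\rho$, and the Darboux coefficient asymptotics — is routine.
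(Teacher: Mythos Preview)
The paper does not supply its own proof of this lemma; it is quoted as a known result and attributed directly to \cite{lam1972some} pp.\,38--39 and \cite{bernsteincollected} pp.\,102--103, so there is no in-paper argument to compare your proposal against.

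That said, your sketch is a faithful outline of the classical route those references follow: the Bernstein ellipse through the branch point fixes $\rho=\gamma+\sqrt{\gamma^2-1}$ and hence the geometric rate, the Joukowski substitution together with Darboux/transfer asymptotics yields the Chebyshev coefficient behaviour $a_k\sim\frac{2}{\Gamma(t)(\gamma^2-1)^{t/2}}\,k^{t-1}\rho^{-k}$, and the final identification $E_m\sim a_m/(\rho-\rho^{-1})$ recovers the stated constant (your $t=1$ sanity check is correct). You are right that this last identification is the nontrivial step --- it is exactly where Bernstein's equioscillation analysis (an explicit near-best polynomial matched against a dual lower bound localized at the singularity) enters --- and your deferral to the cited sources for that computation is appropriate. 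One small wrinkle in the write-up: $g(w)=f\big(\tfrac12(w+w^{-1})\big)$ is not itself analytic on the full disk $\abs{w}<\rho$, since it also carries the companion singularity at $w=\rho^{-1}$; so the phrase ``Taylor coefficient of the summand analytic in $\abs{w}<\rho$'' needs a word about either splitting $g$ into its inner and outer parts or, equivalently, reading $c_k$ as a Laurent coefficient on the annulus $\rho^{-1}<\abs{w}<\rho$ and deforming the contour outward to $\rho$. This is routine but should be said explicitly.
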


\begin{proposition}
	\label{th:poly_lower_prop}
	There exists a positive decreasing function $\epsilon_0: \mathbb{R}^+ \rightarrow \mathbb{R}^+$ such that for arbitrary $0 < u < v < 1$ and $\epsilon \in (0, \epsilon_0(v/u))$, any polynomial $q_m(\lambda)$ that approximates function $f(\lambda)=\lambda^\alpha$, requires $m = \Omega\prn*{\sqrt{\frac{v}{u}} \log\prn*{\frac{u}{v\epsilon}}}$ degree to achieve $\abs*{\sum_{i=1}^n (f(\lambda_i) - q_m(\lambda_i))} \le \epsilon$, for arbitrary real numbers $\lambda_1, \cdots, \lambda_n \in [u, v]$ that satisfy $\sum_{i=1}^n \lambda_i \in [b, b+v)$, where $b \ge v$ is a constant number.
\end{proposition}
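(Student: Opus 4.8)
The plan is to reduce the signed-sum criterion to the classical best uniform approximation error of $f(\lambda)=\lambda^\alpha$ on $[u,v]$, and then to lower-bound that error through the algebraic singularity of $\lambda^\alpha$ at the origin. The crucial observation is that the polynomial $q_m$ must be committed using only the knowledge of the interval $[u,v]$ and $\alpha$, not the individual values $\lambda_1,\dots,\lambda_n$; hence the criterion must hold uniformly over all admissible configurations. First I would fix an arbitrary $q_m\in\mathbb{P}_m$ and let $\lambda^\ast\in[u,v]$ attain $\norm*{f-q_m}_\infty=\abs*{f(\lambda^\ast)-q_m(\lambda^\ast)}$. Placing a large number of the $\lambda_i$ at $\lambda^\ast$ (which for $\lambda^\alpha$ sits near the left endpoint $u$, closest to the singularity) and using at most one additional eigenvalue to steer the trace into the required window $\sum_i\lambda_i\in[b,b+v)$, the signed sum obeys $\abs*{\sum_i (f(\lambda_i)-q_m(\lambda_i))}\ge(\text{count}-1)\norm*{f-q_m}_\infty\ge E_m(f)$. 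Thus achieving $\le\epsilon$ forces $E_m(f)\le\epsilon$, and it remains to lower-bound $E_m(f)$.

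For the second step I would transport the problem to $[-1,1]$ through the linear map $g:[-1,1]\to[u,v]$ of Proposition \ref{th:cheby_upper_prop}, which identifies the degree-$m$ approximation problem on $[u,v]$ with that on $[-1,1]$ and gives $(f\circ g)(x)=\prn*{\tfrac{v-u}{2}}^\alpha(x+\gamma)^\alpha$ with $\gamma=\tfrac{u+v}{v-u}>1$, so that the branch point of $\lambda^\alpha$ at $\lambda=0$ becomes the singularity at $x=-\gamma$. Reflecting $x\mapsto-x$, which preserves both $[-1,1]$ and polynomial degree, identifies $E_m((x+\gamma)^\alpha)$ with $E_m((\gamma-x)^\alpha)$, and I would then invoke Lemma \ref{th:poly_lower_lemma} with the negative exponent $t=-\alpha$, admissible for non-integer $\alpha>0$ since $\abs{\Gamma(-\alpha)}$ is finite. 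This yields the sharp two-sided rate
\begin{equation*}
	E_m(f) = \Theta\prn*{m^{-\alpha-1}\rho^m}, \qquad \rho = \gamma-\sqrt{\gamma^2-1} = \frac{\sqrt{v}-\sqrt{u}}{\sqrt{v}+\sqrt{u}},
\end{equation*}
of which only the lower ($\Omega$) direction is needed.

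Combining the two steps, $E_m(f)\le\epsilon$ together with $E_m(f)=\Omega(m^{-\alpha-1}\rho^m)$ gives $m\log(1/\rho)\ge\log(1/\epsilon)-(\alpha+1)\log m+O(1)$. Since $\log(1/\rho)=\Theta(\sqrt{u/v})$, the dominant term yields $m=\Omega\prn*{\sqrt{v/u}\,\log(u/(v\epsilon))}$ as claimed, the subtracted $\log m$ and all constants being absorbed into the asymptotic. The positive decreasing threshold $\epsilon_0(v/u)$ enters precisely here: it is chosen small enough (as a function of $\kappa=v/u$) that the asymptotic regime of Lemma \ref{th:poly_lower_lemma} is in force and that $\log(u/(v\epsilon))$ is positive, so the bound is non-vacuous.

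I expect the main obstacle to be the first step — converting the signed sum $\sum_i (f(\lambda_i)-q_m(\lambda_i))$, which a priori admits cancellation, into the genuine uniform deviation $\norm*{f-q_m}_\infty$. The resolution hinges on two points requiring care: that $q_m$ is committed before the eigenvalues are revealed, so that mass may be concentrated at the single worst point $\lambda^\ast$ to defeat cancellation; and that the trace constraint $\sum_i\lambda_i\in[b,b+v)$ can be met simultaneously, using the freedom in the number of eigenvalues placed at $\lambda^\ast$ together with one bounded correction term, without eroding the $E_m(f)$ lower bound. Once this reduction is secured, the singularity analysis through Lemma \ref{th:poly_lower_lemma} is essentially mechanical.
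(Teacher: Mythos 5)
Your proposal is correct in substance and reaches the stated bound, but the combinatorial half takes a genuinely different route from the paper. The paper converts the signed-sum criterion into a uniform bound via a four-stage chain of reductions (Problems \ref{prb:lower1}--\ref{prb:lower5} in the appendix): it finds a crossing point $\rho$ with $f(\rho)=q_m(\rho)$ by the intermediate value theorem, splits a query at the sign change of $f-q_m$, and pads both halves with copies of $\rho$ (which contribute nothing to the sum) to restore the trace window, thereby upgrading the signed sum to a sum of absolute deviations; it then concentrates $n_\lambda=\phi(\lambda)$ copies at each $\lambda$ to obtain the weighted bound $\phi(\lambda)\abs*{f(\lambda)-q_m(\lambda)}\le\epsilon$ before discarding the weight via $\phi\ge 1$. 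Your one-shot concentration at the worst point $\lambda^\ast$ collapses all of this and avoids the crossing-point/two-query bookkeeping, at the cost that your inequality $(\mathrm{count}-1)\norm*{f-q_m}_\infty\ge E_m(f)$ is vacuous unless at least two copies of $\lambda^\ast$ fit the trace window --- a feasibility check you assert but do not verify: with $k=\flr*{b/\lambda^\ast}$ copies plus one corrector you can always land in $[b,b+v)$, but $k=1$ when $\lambda^\ast>b/2$; the repair is that two bare copies have trace $2\lambda^\ast\in(b,b+v)$ whenever $\lambda^\ast<(b+v)/2$, and the lone degenerate case $b=v$, $\lambda^\ast=v$ is handled by the single-point query $\{v\}$ with trace exactly $b$. (Your aside that $\lambda^\ast$ sits near $u$ is unjustified for an arbitrary $q_m$ --- equioscillation applies only to the best polynomial --- but nothing depends on it.) For the analytic half both arguments rest on the same key lemma, the Bernstein asymptotics of Lemma \ref{th:poly_lower_lemma} with $t=-\alpha$ and the reflection observation; the paper rescales twice, through $[u,u+2]$ and $[\tfrac{uv}{u+2},v]$, whereas your single linear map with $\gamma=\tfrac{u+v}{v-u}$ and explicit $\rho=\tfrac{\sqrt{v}-\sqrt{u}}{\sqrt{v}+\sqrt{u}}$ is cleaner and exposes $\log(1/\rho)=\Theta(\sqrt{u/v})$ directly. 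What each buys: the paper's route yields strictly stronger intermediate statements (the absolute-sum and $\phi$-weighted bounds, which capture the extra $1/\lambda$-type leverage of the trace constraint even though they are ultimately not exploited), while yours is shorter and makes the exponential rate transparent; your prefactor bookkeeping inside the logarithm (a term like $(v-u)(\sqrt{uv})^{\alpha-1}$ versus the stated $u/v$) is loose, but no looser than the paper's own treatment, with the $\kappa$-dependent threshold $\epsilon_0$ absorbing the slack in both cases.
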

\begin{proof}
	Under the same assumptions, We list the following problems for polynomial approximation. We claim that each of them could be reduced into the next problem in sequence.
	\begin{problem} \label{prb:lower1}
	    The minimum degree $m$ required for any polynomial $q_m$ to achieve $\abs*{\sum_{i=1}^n (f(\lambda_i) - q_m(\lambda_i))} \le \epsilon$ for any $\lambda_1, \cdots, \lambda_n \in [u, v]$ that satisfy $\sum_{i=1}^n \lambda_i \in [b, b+v)$.
	\end{problem}
	\begin{problem} \label{prb:lower2}
	    The minimum degree $m$ required for any polynomial $q_m$ to achieve $\sum_{i=1}^n \abs*{f(\lambda_i) - q_m(\lambda_i)} \le \epsilon$ for any $\lambda_1, \cdots, \lambda_n \in [u, v]$ that satisfy $\sum_{i=1}^n \lambda_i = b$.
	\end{problem}
	\begin{problem} \label{prb:lower3}
	    The minimum degree $m$ required for any polynomial $q_m$ to achieve $\abs*{f(\lambda) - q_m(\lambda)} \phi(\lambda) \le \epsilon$ for any $\lambda \in [u, v]$, where $\phi(\lambda) = \flr{\min(\frac{nv-b}{\lambda-u}, \frac{b-nu}{v-\lambda})}$.
	\end{problem}
	\begin{problem} \label{prb:lower4}
	    The minimum degree $m$ required for any polynomial $q_m$ to achieve $\abs*{f(\lambda) - q_m(\lambda)} \le \epsilon$ for any $\lambda \in [u, v]$.
	\end{problem}
	\begin{problem} \label{prb:lower5}
	    The minimum degree $m$ required for any polynomial $q_m$ to achieve $\abs*{f(\lambda) - q_m(\lambda)} \le \epsilon$ for any $\lambda \in [u, u + 2]$.
	\end{problem}
	For Problem \ref{prb:lower5}, approximating $(f \circ g)(x)=(x + u + 1)^\alpha$ is equivalent to approximating $(\gamma - x)^{-t}$ with $\gamma = u+1$ and $t = -\alpha$, since they are symmetric about y-axis. Let $\epsilon = E_m(f \circ g)$, with the following property of the gamma function
	\begin{equation*}
		\abs{\Gamma(-\alpha)} = \frac{\pi}{\abs{\sin\pi\alpha}\Gamma(\alpha+1)},
	\end{equation*}
	by applying Lemma \ref{th:poly_lower_lemma}, when $m$ is large enough, we have:
	\begin{align*}
		\lim_{m \rightarrow \infty} \frac{\prn*{1+u-\sqrt{u^2+2u}}^m}{m^{\alpha+1}} &= \frac{\epsilon\abs{\Gamma(-\alpha)}}{\prn*{\sqrt{u^2+2u}}^{\alpha-1}} \\
		&= \Theta\prn*{\frac{\epsilon}{(\sqrt{u})^{\alpha-1}\abs{sin\pi\alpha}}}.
	\end{align*}
	Thus, for each $u \in (0, 1)$, there is an $\epsilon_0 \in (0, 1)$, such that when $\epsilon < \epsilon_0$:
	\begin{equation*}
		m = \Omega\prn*{\frac{1}{\sqrt{u}} \log\prn*{\frac{u\abs{\sin\pi\alpha}}{\epsilon}}}.
	\end{equation*}
	When $\alpha \notin \mathbb{N}$, we have $\abs*{\sin\pi\alpha} = \Theta(1)$.
	
	Problem \ref{prb:lower4} $\rightarrow$ \ref{prb:lower5}: Noticing that an approximation of $f_{[u,u+2]}(\lambda)$ could be acquired by approximating $\prn*{\frac{u+2}{v}}^\alpha f_{[\frac{uv}{u+2},v]}\prn*{\frac{v}{u+2}\lambda}$, The lower bound of $m$ for approximating $f_{[\frac{uv}{u+2},v]}$ is then
	\begin{equation*}
		m = \Omega\prn*{\frac{1}{\sqrt{u}} \log\prn*{\frac{u}{v\epsilon}}}.
	\end{equation*}
	This is equivalent to approximating $f_{[u,v]}$ with
	\begin{equation*}
		m = \Omega\prn*{\sqrt{\frac{v}{u}} \log\prn*{\frac{u}{v\epsilon}}}.
	\end{equation*}
	
	Problem \ref{prb:lower3} $\rightarrow$ \ref{prb:lower4}: We can assume $u < \frac{b-v}{n-1}$ and $v > \frac{b-u}{n-1}$ without loss of generality, since otherwise it means $\min_i \lambda_i > u$ or $\max_i \lambda_i < v$, and the interval $[u, v]$ could be make tighter to fit this assumption. Then for any $\lambda \in [u, v]$,
	\begin{align*}
		\phi(\lambda) &= \flr*{\min\prn*{\frac{nv-b}{v-\lambda}, \frac{b-nu}{\lambda-u}}} \\
		&\ge \flr*{\min\prn*{\frac{nv-b}{v-u}, \frac{b-nu}{v-u}}} \ge 1.
	\end{align*}
	
	Problem \ref{prb:lower2} $\rightarrow$ \ref{prb:lower3}: For any $\lambda \in [u, v]$, we construct a series of numbers $\lambda_1$, $\cdots$, $\lambda_n$ by setting $\lambda_1, \cdots, \lambda_{n_\lambda} = \lambda$ and $\lambda_{n_\lambda+1}, \cdots, \lambda_n = \frac{b-\lambda n_\lambda}{n - n_\lambda}$, where $n_\lambda = \flr{\min(\frac{nv-b}{\lambda-u}, \frac{b-nu}{v-\lambda})}$. Then the series satisfies $\sum_{i=1}^n \lambda_i = b$ and
	\begin{align*}
		\sum_{i=1}^n \abs*{f(\lambda_i) - q_m(\lambda_i)} &\le \epsilon, \\
		n_\lambda \abs*{f(\lambda) - q_m(\lambda)} &\le \epsilon, \\
		\phi(\lambda) \abs*{f(\lambda) - q_m(\lambda)} &\le \epsilon.
	\end{align*}
	
	Problem \ref{prb:lower1} $\rightarrow$ \ref{prb:lower2}: Let $q_m$ be the solution for Problem \ref{prb:lower1} that achieves $\epsilon/2$ approximation error. In the trivial case, we have $f(\lambda) \le q_m(\lambda)$ for all $\lambda \in [u, v]$ (or $f(\lambda) \ge q_m(\lambda)$), then $\sum_{i=1}^n \abs*{f(\lambda_i) - q_m(\lambda_i)} = \abs*{\sum_{i=1}^n (f(\lambda_i) - q_m(\lambda_i))}$.
	
	Otherwise there exists some $\rho \in (u, v)$ that satisfies $f(\rho) = q_m(\rho)$, since both $f(\lambda)$ and $q_m(\lambda)$ are continuous functions. Given arbitrary query $\lambda_1$, $\cdots$, $\lambda_n$, there exists a partition $n_\rho$ of the reordered numbers $\lambda_i$ so that for all $i \in [1, n_\rho]$, $f(\lambda_i) \le q_m(\lambda_i)$ and for all $i \in [n_\rho + 1, n]$, $f(\lambda_i) > q_m(\lambda_i)$.
	\begin{align*}
		\sum_{i=1}^{n_\rho} \abs*{f(\lambda_i) - q_m(\lambda_i)} &= \abs*{\sum_{i=1}^{n_\rho} (f(\lambda_i) - q_m(\lambda_i))}, \\
		\sum_{i=n_\rho+1}^n \abs*{f(\lambda_i) - q_m(\lambda_i)} &= \abs*{\sum_{i=n_\rho+1}^n (f(\lambda_i) - q_m(\lambda_i))}.
	\end{align*}
	Construct two queries: $\lambda_1^1, \cdots, \lambda_{n_1}^1$ and $\lambda_1^2, \cdots, \lambda_{n_2}^2$:
	\begin{align*}
	    \lambda_i^1 = \begin{cases}
	        \lambda_i & i \le n_\rho \\
	        \rho & i > n_\rho
	    \end{cases}, &\quad n_1 = n_\rho + \ceil*{\sum_{i=n_\rho+1}^n \lambda_i / \rho}. \\
	    \lambda_i^2 = \begin{cases}
	        \lambda_{i+n_\rho} & i \le n - n_\rho \\
	        \rho & i > n - n_\rho
	    \end{cases}, &\quad n_2 = n - n_\rho + \ceil*{\sum_{i=1}^{n_\rho} \lambda_i / \rho}.
	\end{align*}
	Let $b = \sum_{i=1}^n \lambda_i$, then $\sum_{i=1}^{n_1} \lambda_i^1 \in [b, b+v)$ and $\sum_{i=1}^{n_2} \lambda_i^2 \in [b, b+v)$. Therefore
	\begin{align*}
	    &\quad \sum_{i=1}^n \abs*{f(\lambda_i) - q_m(\lambda_i)} \\
	    &= \sum_{i=1}^{n_\rho} \abs*{f(\lambda_i) - q_m(\lambda_i)} + \sum_{i={n_\rho+1}}^{n} \abs*{f(\lambda_i) - q_m(\lambda_i)} \\
	    &= \abs*{\sum_{i=1}^{n_\rho} (f(\lambda_i) - q_m(\lambda_i))} + \abs*{\sum_{i=n_\rho+1}^n (f(\lambda_i) - q_m(\lambda_i))} \\
	    &= \abs*{\sum_{i=1}^{n_1} (f(\lambda_i^1) - q_m(\lambda_i^1))} + \abs*{\sum_{i=1}^{n_2} (f(\lambda_i^2) - q_m(\lambda_i^2))} \\
	    &\le \frac{\epsilon}{2} + \frac{\epsilon}{2} = \epsilon.
	\end{align*}
	
	From the reductions above, the lower bound of $m$ for solving Problem \ref{prb:lower1} is $m = \Omega\prn*{\sqrt{\frac{v}{u}} \log\prn*{\frac{u}{v\epsilon}}}$.
\end{proof}

\begin{proof}
	Let $Z = \tr\prn*{q_m(A)}$ be the trace of the approximated matrix functional, then
	\begin{equation*}
		\abs*{Z - \tr(A^\alpha)} = \abs*{\sum_{i=1}^n (q_m(\lambda_i) - \lambda_i^\alpha)}.
	\end{equation*}
	Let $\epsilon_0 = n^{\epsilon\abs{\alpha-1}}-1$, then by applying Proposition \ref{th:poly_lower_prop}, we have that $m = \Omega\prn*{\sqrt{\frac{v}{u}} \log\prn*{\frac{u}{v\epsilon_0}}}$ is the lower bound to achieve $\abs*{Z - \tr(A^\alpha)} \le \epsilon_0$ with $b = 1$. Combining with Proposition \ref{th:trace_upper_lower}, the lower bound for matrix function approximation is
	\begin{align*}
		m &= \Omega\prn*{\sqrt{\frac{v}{u}} \log\prn*{\frac{u}{v\epsilon_0}}} \\
		&= \Omega\prn*{\sqrt{\frac{v}{u}} \log\prn*{\frac{u}{v\epsilon\abs{\alpha-1}\log n}}}.
	\end{align*}
\end{proof}

\subsection{Proof of Theorem \ref{th:poly_lower_0}}
Similarly, the following lemma gives the convergence rate of function $f(x) = \abs{x}^\alpha$ in best uniform approximation. It is proved in \cite{bernstein1938meilleure} and \cite{varga1992some}.
\begin{lemma}
	\label{th:poly_lower_lemma_0}
	Let $\norm{\cdot}$ denote the $L_\infty$ norm of functions and $E_m(f) = \min_{p \in \mathbb{P}_m} \norm*{f - p}$ be the error of best uniform approximation of a given function $f(x)$ on the finite interval $[-1, 1]$. Then when $m \rightarrow \infty$,
	\begin{equation*}
		E_m\prn*{x^\alpha} \sim \delta(\alpha) m^{-2\alpha},
	\end{equation*}
	where $\alpha \in \mathbb{R}^+$, $\delta(\alpha)$ is a non-negative constant number depending only on $\alpha$, and satisfies $\delta(\alpha) > 0$ when $\alpha \notin \mathbb{N}$.
\end{lemma}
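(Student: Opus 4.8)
The plan is to reuse the reduction architecture of Proposition~\ref{th:poly_lower_prop} (the engine behind Theorem~\ref{th:poly_lower}), specialized to the rank-deficient interval $[0,v]$, while swapping the input asymptotic: in place of Lemma~\ref{th:poly_lower_lemma}, which yields the geometric rate behind the $\sqrt{\kappa}$ bound, I would feed in Lemma~\ref{th:poly_lower_lemma_0}, whose algebraic rate $m^{-2\alpha}$ reflects the endpoint singularity of $\lambda^\alpha$ now sitting exactly at $\lambda=0$. The first step converts the entropy relative error into an absolute trace error: setting $\epsilon_0 = n^{\epsilon\abs{\alpha-1}}-1 = \Theta(\epsilon\abs{\alpha-1}\log n)$ and applying the ``vice versa'' half of Proposition~\ref{th:trace_upper_lower}, the target bound reduces to showing that achieving $\abs{\tr(p_m(A)) - \tr(A^\alpha)} \le \epsilon_0$ for every admissible $A$ (eigenvalues in $[0,v]$, $\tr(A)\in[1,2]$) already forces $m$ to be large. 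With $q_m = p_m$, this left-hand side is the signed sum $\abs{\sum_i (q_m(\lambda_i) - \lambda_i^\alpha)}$ over the eigenvalues.

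Next I would turn this matrix-trace statement into a pointwise uniform bound on $[0,v]$ via the reduction chain of Proposition~\ref{th:poly_lower_prop}. The sign-consistency step carries over unchanged: splitting the interval at a crossing point $\rho$ where $\rho^\alpha = q_m(\rho)$ and repackaging the positive and negative parts into admissible instances upgrades the signed sum to $\sum_i \abs{q_m(\lambda_i)-\lambda_i^\alpha}$. The multiplicity step is where $u=0$ helps --- for a probe point $\lambda$ I place $\lceil 1/\lambda\rceil$ copies of $\lambda$ (so $\tr(A)\in[1,2]$) and set every remaining eigenvalue to $0$, so the zeros enter only through $q_m(0)$; a separate instance with a single nonzero eigenvalue pins $\abs{q_m(0)}=O(\epsilon_0/n)$, rendering the aggregate zero-contribution negligible and leaving the weight $\lceil 1/\lambda\rceil\ge1$ on the isolated term. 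This forces $\abs{\lambda^\alpha-q_m(\lambda)}=O(\epsilon_0)$ for every $\lambda$ in the probe range $[1/n,v]$, i.e. an $O(\epsilon_0)$ uniform approximation of $\lambda^\alpha$ there.

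The linear rescaling $[0,v]\to[0,1]$ factors out $v^\alpha$ while keeping the singularity at the endpoint, so Lemma~\ref{th:poly_lower_lemma_0} delivers $E_m(\lambda^\alpha;[0,v]) \sim \delta(\alpha) v^\alpha m^{-2\alpha}$ with $\delta(\alpha)>0$ since $\alpha\notin\mathbb{N}$. Any $q_m$ meeting the uniform bound obeys $\epsilon_0 = \Omega(E_m(\lambda^\alpha;[1/n,v]))$, which --- once $n$ is large enough to expose the endpoint, see below --- equals $\Omega(m^{-2\alpha})$; hence $m = \Omega(\epsilon_0^{-1/(2\alpha)}) = \Omega\big(\sqrt[2\alpha]{1/(\epsilon\abs{\alpha-1}\log n)}\big)$, with the $v$- and $\alpha$-dependent constants absorbed into $\Omega(\cdot)$.

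The main obstacle is the multiplicity step, and above all the interaction between the probe range and the singularity scale. Since the underlying extremal nodes cluster quadratically at the endpoint, the $m^{-2\alpha}$ hardness of $\lambda^\alpha$ is generated within a window of width $\sim m^{-2}$ around $0$; constraining $q_m$ only on $[1/n,v]$ captures that window solely when $1/n \lesssim m^{-2}$, i.e. $n \gtrsim m^2 \sim \epsilon_0^{-1/\alpha}$. Confirming that $E_m$ on the truncated interval $[1/n,v]$ matches its value on $[0,v]$ in this regime --- equivalently, that the ``small enough $\epsilon$'' hypothesis (and hence large $n$) leaves enough room near $0$ to realize the worst case --- is the delicate part, along with ruling out cancellation between the many $q_m(0)$ terms and the probe term. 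Everything downstream is a routine application of Lemma~\ref{th:poly_lower_lemma_0} and Proposition~\ref{th:trace_upper_lower}.
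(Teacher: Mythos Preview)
You have misidentified the target statement. The lemma in question is the classical Bernstein-type asymptotic for best uniform polynomial approximation of $|x|^\alpha$ on $[-1,1]$; the paper does not prove it but simply cites Bernstein (1938) and Varga (1992). Your proposal, by contrast, is a proof sketch for the \emph{downstream} Theorem~\ref{th:poly_lower_0} (via Proposition~\ref{th:poly_lower_prop_0}): you explicitly invoke Lemma~\ref{th:poly_lower_lemma_0} as an input (``Lemma~\ref{th:poly_lower_lemma_0} delivers $E_m(\lambda^\alpha;[0,v])\sim\delta(\alpha)v^\alpha m^{-2\alpha}$\ldots''), which would be circular as a proof of the lemma itself.

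If your intent was in fact to prove Theorem~\ref{th:poly_lower_0}, then your reduction chain matches the paper's plan --- the paper dispatches Proposition~\ref{th:poly_lower_prop_0} with a one-line ``similar procedure as the proof of Proposition~\ref{th:poly_lower_prop}'' and then plugs in $\epsilon_0 = n^{\epsilon|\alpha-1|}-1$ exactly as you do. Your treatment of the multiplicity step at $u=0$ (padding with zero eigenvalues, controlling $q_m(0)$ separately) and your concern about the probe range $[1/n,v]$ versus the $m^{-2}$ clustering window are more explicit than anything the paper writes, but the paper's proof is essentially a pointer to the positive-definite case and does not engage with those subtleties at all. So as a proof of Theorem~\ref{th:poly_lower_0} your sketch is consistent with (and more detailed than) the paper's; as a proof of Lemma~\ref{th:poly_lower_lemma_0} it is simply aimed at the wrong statement.
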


\begin{proposition}
	\label{th:poly_lower_prop_0}
	For arbitrary $v > 0$ and small enough $\epsilon$, any polynomial $q_m(\lambda)$ that approximates $f(\lambda)=\lambda^\alpha$ requires $m = \Omega\prn*{\sqrt[2\alpha]{\frac{1}{\epsilon}}}$ degree to achieve $\abs*{\sum_{i=1}^n (f(\lambda_i) - q_m(\lambda_i))} \le \epsilon$ for arbitrary real numbers $\lambda_1, \cdots, \lambda_n \in [0, v]$ that satisfy $\sum_{i=1}^n \lambda_i \in [b, b+v)$, where $b \ge v$ is a constant number.
\end{proposition}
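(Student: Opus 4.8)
The plan is to reproduce the reduction chain from the proof of Proposition~\ref{th:poly_lower_prop}, specialized to the degenerate endpoint $u=0$, and to terminate it with Lemma~\ref{th:poly_lower_lemma_0} in place of Lemma~\ref{th:poly_lower_lemma}. First I would restate the same hierarchy of four approximation problems on $[0,v]$: (i) controlling the signed sum $\abs*{\sum_{i}(f(\lambda_i)-q_m(\lambda_i))}\le\epsilon$ over queries with $\sum_i\lambda_i\in[b,b+v)$; (ii) the absolute sum $\sum_i\abs*{f(\lambda_i)-q_m(\lambda_i)}\le\epsilon$ with $\sum_i\lambda_i=b$; (iii) the weighted pointwise bound $\phi(\lambda)\abs*{f(\lambda)-q_m(\lambda)}\le\epsilon$, where $\phi(\lambda)$ is the largest admissible multiplicity of $\lambda$ under the constraints $\lambda_i\in[0,v]$, $\sum_i\lambda_i=b$ (the $u=0$ instance of the weight in Problem~\ref{prb:lower3}); and (iv) the uniform bound $\abs*{f(\lambda)-q_m(\lambda)}\le\epsilon$ on $[0,v]$. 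As before, a degree-$m$ solver of each problem yields a solver of the next, so a lower bound on (iv) propagates back to (i), which is the quantity in the statement.

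Second, I would check that the three reductions (i)$\to$(ii)$\to$(iii)$\to$(iv) survive setting $u=0$. The crossing-point padding used in (i)$\to$(ii) relies only on the continuity of $f(\lambda)=\lambda^\alpha$, which still holds at $\lambda=0$ because $\alpha>0$, and on a crossing value $\rho\in(0,v)$; the appended copies each equal $\rho<v$, so the padded sums remain in $[b,b+v)$. The single-point construction in (ii)$\to$(iii) places $\phi(\lambda)$ copies of $\lambda$ and fills the remaining slots at $\frac{b-\lambda\phi(\lambda)}{n-\phi(\lambda)}$; one verifies this filler lies in $[0,v]$ for $n$ large, even when $\lambda=0$. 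Finally (iii)$\to$(iv) needs only $\phi(\lambda)\ge1$ throughout $[0,v]$: since $b\ge v$ one has $\tfrac{b}{\lambda}\ge1$ for $\lambda\le v$ and $\tfrac{nv-b}{v-\lambda}\ge1$ once $v(n-1)\ge b$, so the singular point $\lambda=0$ is covered as well.

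Third, I would discharge (iv) directly, which is cleaner than in the $u>0$ case because the interval-standardization step (the former Problem~\ref{prb:lower5}) is unnecessary. Rescaling the domain by $\lambda=vx$ reduces approximation of $\lambda^\alpha$ on $[0,v]$ to approximation of $x^\alpha$ on a fixed interval up to the constant factor $v^\alpha$, so $E_m(\lambda^\alpha)=v^\alpha E_m(x^\alpha)$, and Lemma~\ref{th:poly_lower_lemma_0} supplies the rate $E_m\sim\delta(\alpha)m^{-2\alpha}$ with $\delta(\alpha)>0$ for $\alpha\notin\mathbb{N}$. Enforcing $v^\alpha\delta(\alpha)m^{-2\alpha}\le\epsilon$ then forces $m=\Omega\prn*{\sqrt[2\alpha]{\frac{1}{\epsilon}}}$ with $v,\alpha$ treated as constants. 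Propagating this through the chain yields the same lower bound for the signed-sum problem (i).

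The main obstacle I anticipate is not the Bernstein-type endgame but the bookkeeping that keeps every reduction valid at the singular point $\lambda=0$: I must ensure $\phi\ge1$ there, that the filler and padded values respect both $\lambda_i\in[0,v]$ and the sum constraint as $\lambda$ approaches either endpoint, and that all the ``for $n$ large enough'' requirements coexist with the fixed trace budget $b\ge v$. Making these quantitative conditions simultaneously consistent is the delicate part.
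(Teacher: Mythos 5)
Your proposal is correct and takes essentially the same route as the paper: the paper's own proof of Proposition~\ref{th:poly_lower_prop_0} is just the one-line remark that one repeats the reduction chain of Proposition~\ref{th:poly_lower_prop} with Lemma~\ref{th:poly_lower_lemma} replaced by Lemma~\ref{th:poly_lower_lemma_0}, which is precisely your plan --- including the $u=0$ bookkeeping (continuity of $\lambda^\alpha$ at $0$ in the crossing-point padding, filler values staying in $[0,v]$, and $\phi(\lambda)\ge 1$ on all of $[0,v]$ once $v(n-1)\ge b$) that the paper leaves implicit, and the correct observation that the $[u,u+2]$ standardization step becomes unnecessary. Your rescaling endgame $E_m(\lambda^\alpha;[0,v])=v^\alpha E_m(x^\alpha;[0,1])$ together with the Bernstein asymptotic $E_m\sim\delta(\alpha)m^{-2\alpha}$ (for $\alpha\notin\mathbb{N}$, with ``small enough $\epsilon$'' absorbing the asymptotic nature of $\sim$) yields exactly the stated $m=\Omega\prn*{\sqrt[2\alpha]{1/\epsilon}}$.
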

\begin{proof}
	We can prove Proposition 
	\ref{th:poly_lower_prop_0} through a similar procedure as the proof of \ref{th:poly_lower_prop}.
\end{proof}

\begin{proof}
    Let $Z = \tr\prn*{q_m(A)}$ be the trace of the approximated matrix functional and $\epsilon_0 = n^{\epsilon\abs{\alpha-1}}-1$. From Proposition \ref{th:poly_lower_prop_0} we know that $m = \Omega\prn*{\sqrt[2\alpha]{\frac{1}{\epsilon_0}}}$ is the lower bound to achieve $\abs*{Z - \tr(A^\alpha)} \le \epsilon_0$ with $b = 1$. Then by applying Proposition \ref{th:trace_upper_lower}, we get the lower bound
	\begin{align*}
		m = \Omega\prn*{\sqrt[2\alpha]{\frac{1}{\epsilon_0}}} = \Omega\prn*{\sqrt[2\alpha]{\frac{1}{\epsilon\abs{\alpha-1}\log n}}}.
	\end{align*}
	
	Let $Z = \tr\prn*{q_m(A)}$ be the trace of the approximated matrix functional, then
	\begin{equation*}
		\abs*{Z - \tr(A^\alpha)} = \abs*{\sum_{i=1}^n (q_m(\lambda_i) - \lambda_i^\alpha)}.
	\end{equation*}
	Let $\epsilon_0 = n^{\epsilon\abs{\alpha-1}}-1$, then by applying Proposition \ref{th:poly_lower_prop}, we have that $m = \Omega\prn*{\sqrt{\frac{v}{u}} \log\prn*{\frac{u}{v\epsilon_0}}}$ is the lower bound to achieve $\abs*{Z - \tr(A^\alpha)} \le \epsilon_0$ with $b = 1$. Combining with Proposition \ref{th:trace_upper_lower}, the lower bound for matrix function approximation is
	\begin{align*}
		m &= \Omega\prn*{\sqrt{\frac{v}{u}} \log\prn*{\frac{u}{v\epsilon_0}}} \\
		&= \Omega\prn*{\sqrt{\frac{v}{u}} \log\prn*{\frac{u}{v\epsilon\abs{\alpha-1}\log n}}}.
	\end{align*}
\end{proof}

\section*{Supplementary Experimental Results}
\subsection{Running Time of Non-Integer Approximations}
An intuitive showcase of running time with different $s$ and $m$ combinations is shown in Figures \ref{TimeExp}, in which we can observe the linear increament in time complexity with $s$ or $m$. For comparison, the trivial eigenvalue approach takes $27$ seconds for a $5000 \times 5000$ matrix.
\begin{figure}[t]
	\small
	\centering
	\includegraphics[width=0.35\textwidth]{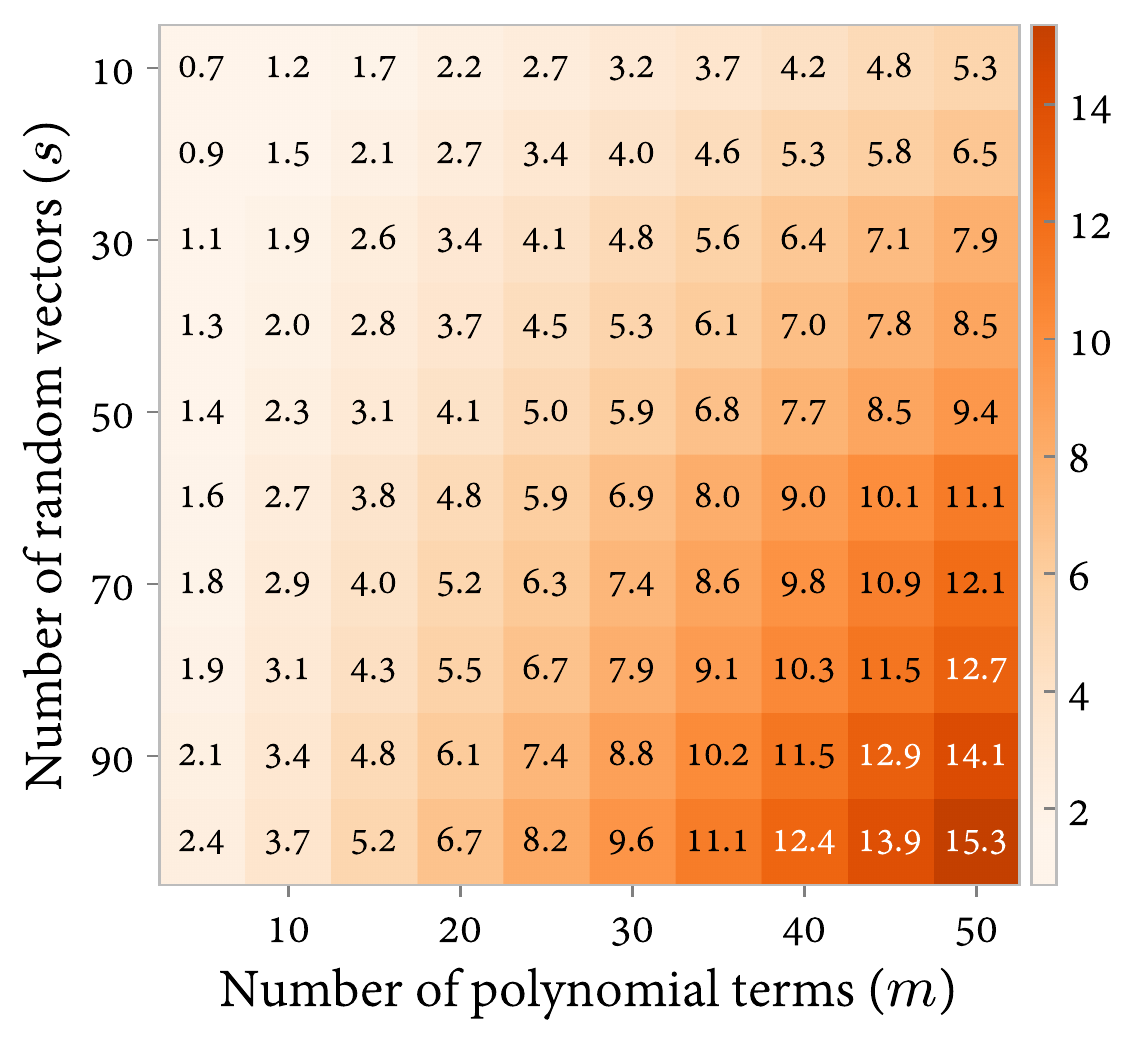}
	\caption{Running time of different $s$ and $m$ combinations.}
	\label{TimeExp}
\end{figure}

\subsection{Accuracy curves of Feature Selection}
The classification accuracy achieved by each feature selection and feature ranking methods after each incrementally selected feature are reported in Figure \ref{FeatSelFig} and \ref{FeatRankFig} respectively. It is easy to see that classification error stops to decrease after the first $10$ most informative features are selected.
\begin{figure*}[t]
	\small
	\centering
	\includegraphics[width=0.95\textwidth]{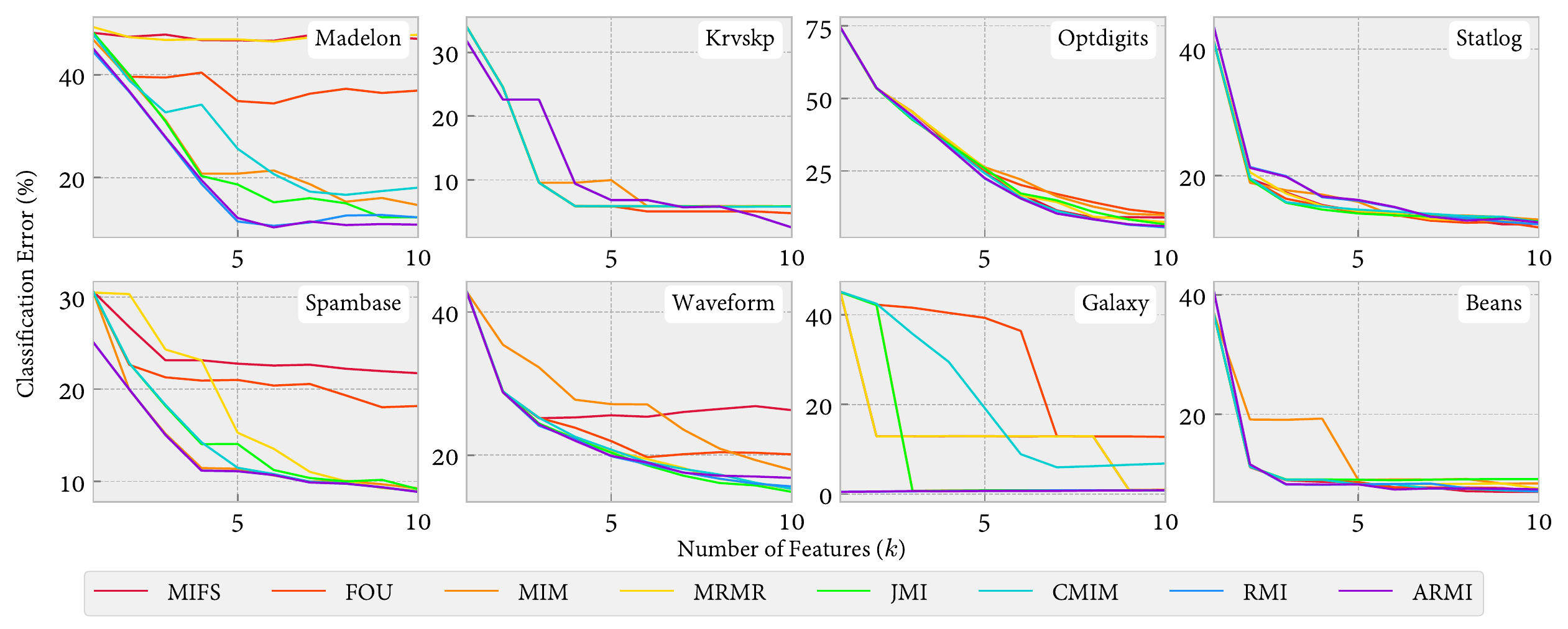}
	\caption{Number of Features ($k$) versus Classification Error (\%) curves for different feature selection methods.}
	\label{FeatSelFig}
\end{figure*}
\begin{figure*}[t]
	\small
	\centering
	\includegraphics[width=0.95\textwidth]{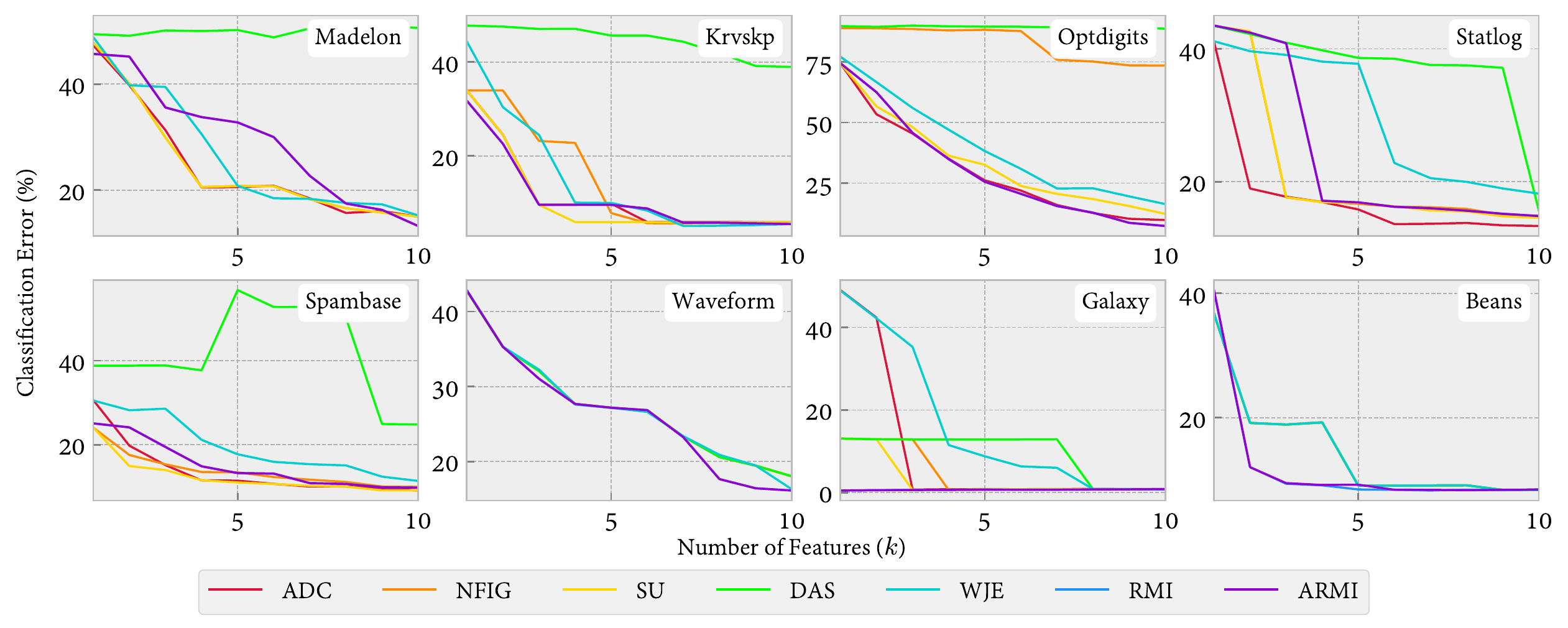}
	\caption{Number of Features ($k$) versus Classification Error (\%) curves for different feature ranking methods.}
	\label{FeatRankFig}
\end{figure*}

\bibliography{TIT_Entropy}
\bibliographystyle{IEEEtran}

\end{document}